\documentclass[twoside]{article}

\usepackage{smile}

\usepackage[accepted]{aistats2021}

\setlength{\pdfpageheight}{11in}
\setlength{\pdfpagewidth}{8.5in}

\usepackage{lipsum}

\newcommand\blfootnote[1]{%
  \begingroup
  \renewcommand\thefootnote{}\footnote{#1}%
  \addtocounter{footnote}{-1}%
  \endgroup
}

\usepackage{natbib}

\usepackage[utf8]{inputenc} 
\usepackage[T1]{fontenc}    
\usepackage{hyperref}       
\usepackage{url}            
\usepackage{booktabs}       
\usepackage{amsfonts}       
\usepackage{nicefrac}       
\usepackage{microtype}      
\usepackage{lipsum}
\usepackage{graphicx}
\usepackage{microtype}
\usepackage{subfigure}
\usepackage[shortlabels]{enumitem}
\usepackage{xcolor}
\bibliographystyle{unsrtnat}
\def\norm#1{\left\lVert#1\right\rVert}
\def\|#1\|{\norm{#1}}



\begin{document}
\runningtitle{Regularization Matters in Training Overparametrized Neural Networks}
\runningauthor{Tianyang Hu*, Wenjia Wang*, Cong Lin, Guang Cheng}

\twocolumn[

\aistatstitle{Regularization Matters: A Nonparametric Perspective on Overparametrized Neural Network}

\aistatsauthor{Tianyang Hu* 
\And Wenjia Wang* \And  Cong Lin \And Guang Cheng}

\aistatsaddress{ hu478@purdue.edu\\
Purdue University \And  wenjiawang@ust.hk\\
Hong Kong University of\\
Science and Technology \And 52174404011@stu.ecnu.edu.cn\\
East China \\Normal University \And chengg@purdue.edu\\
Purdue University } ]

\begin{abstract}
Overparametrized neural networks trained by gradient descent (GD) can provably overfit any training data. 
However, the generalization guarantee may not hold for noisy data. 
From a nonparametric perspective, this paper studies how well overparametrized neural networks can recover the true target function in the presence of random noises. 
We establish a lower bound on the $L_2$ estimation error with respect to the GD iterations, which is away from zero without a delicate scheme of early stopping. 
In turn, through a comprehensive analysis of $\ell_2$-regularized GD trajectories, we prove that for overparametrized one-hidden-layer ReLU neural network with the $\ell_2$ regularization: 
(1) the output is close to that of the kernel ridge regression with the corresponding neural tangent kernel; 
(2) minimax {optimal} rate of the $L_2$ estimation error can be achieved. 
Numerical experiments confirm our theory and further demonstrate that the $\ell_2$ regularization approach improves the training robustness and works for a wider range of neural networks.
\end{abstract}

\section{INTRODUCTION}
Deep learning has shown outstanding empirical successes and demonstrates superior performance in many standard machine learning tasks, such as image classification \citep{krizhevsky2012imagenet, lecun2015deep, he2016deep}, generative modeling \citep{goodfellow2014generative,arjovsky2017wasserstein}, etc. 
Despite common accusations of being a black box with no theoretical guarantee, deep neural network (DNN) tends to achieve higher accuracy than other classical methods in various prediction tasks, which attracts plenty of interests from researchers. 
In contrast to the huge empirical success, little is yet settled from the theoretical side why DNN outperforms other methods. Without enough understanding, practical use of deep learning models could be inefficient and unreliable.\blfootnote{* These authors contributed equally to this work.}

Recently, many efforts have been devoted to provable deep learning methods with algorithmic guarantees, particularly training overparametrized neural networks by gradient descent (GD) or other gradient-based optimization. 
It has been shown that with enough overparametrization, e.g., neural network width tends to infinity, training DNN resembles a kernel method with a specific kernel called as ``neural tangent kernel'' (NTK) \citep{jacot2018neural}. 
In the NTK regime, GD can provably minimize the training error to zero in both regression \citep{du2018gradient,li2018learning,arora2019fine, zou2019improved} and classification \citep{ji2019implicit, ji2019polylogarithmic, lyu2019gradient} settings. 
Corresponding generalization error bounds are developed to ensure prediction performance on unseen data. 
However, a closer inspection of these generalization results reveals that they only hold under the noiseless assumption, i.e., the response variable is deterministic given the explanatory variables. 
For overparametrized neural networks, the training loss can be minimized to zero so that the generalization error equals the population loss, which cannot be zero in the presence of noises. As random noises are ubiquitous in the real world, theoretical guarantees and provable learning algorithms that take into account of random noises are much needed in practice. 

In contrast, classic nonparametric statistics literature demonstrate that in the presence of noises, the $L_2$ estimation error can still go to zero with possibly optimal rates as established in \cite{stone1982optimal}. 
To further investigate how overparametrized neural networks trained via GD work and how well they can learn the underlying true function with noisy data, we consider the classic nonparametric regression setting. 
Suppose we observe data $ \{(\bx_i, y_i)\}_{i=1}^n$, given by
\begin{align}\label{eqrelation}
    y_i = f^*(\bx_i)+\epsilon_i,
\end{align}
where $f^*$ is the ground truth, $\bx_i\in \RR^d$, 
and $\epsilon_i$'s are i.i.d. random noises with mean 0 and finite variance $\sigma^2$. In this work, we consider neural network estimators $\hat f$ produced by overparametrized one-hidden-layer ReLU neural networks, where the number of neurons can be much larger than the sample size, and investigate how fast the $L_2$ estimation error $\|\hat{f}-f^*\|_2$ converges to zero as sample size grows.

Note that the $L_2$ convergence rate critically depends on the assumptions of the true function, e.g., linearity, smoothness, boundedness, etc., based on which minimax lower bounds are established \citep{siegel1957nonparametric}. 
An estimation method is said to be \textit{minimax-optimal} if its convergence rate achieves the lower bound, indicating that it performs the best in the worst possible scenario. 
The above nonparametric perspective provides a sharp characterization of the employed estimation method and complements the existing optimization/generalization framework. 

The main contributions of this paper are:
\begin{itemize}
    \item 
    We prove that overparametrized one-hidden-layer ReLU neural networks trained using GD do not recover the true function in the classic nonparametric regression setting (\ref{eqrelation}), i.e., the $L_2$ estimation error is bounded away from zero as sample size goes to infinity. To predict well on unseen data, a delicate early stopping rule has to be deployed. 
    
    \item
    We analyze the $\ell_2$-regularized GD trajectory and
    show that the $\ell_2$ penalty on network weights amounts to penalizing the reproducing kernel Hilbert space (induced by NTK) norm of the associated neural network. With $\ell_2$ regularization, overparametrized neural network trained by GD resembles the solution of kernel ridge regression. 
    
    \item
    We further prove that by adding proper $\ell_2$ regularization, overparametrized neural network trained by GD achieves the \textit{minimax-optimal} $L_2$ convergence rate $n^{-{d}/{(4d-2)}}$, in recovering the ground truth in (\ref{eqrelation}). 
    
\end{itemize}
The correspondence between overparametrized neural network trained by $\ell_2$-regularized GD and kernel ridge regression is nontrivial and technically challenging.
In spite of the well-established equivalence between NTK and infinite-width DNN trained by GD, there is a huge technical gap for finite-width overparametrized neural networks, especially when the training objective includes explicit regularization terms.

To sum up, this work broadens the current scope of the NTK literature and connects the recent advances in deep learning theory, e.g., analyzing the trajectory of GD updates, implicit bias of overparametrization, etc., to the classical results in nonparametric statistics. 
More specifically, our findings not only contribute to the theoretical (in particular, nonparametric) understanding of training overparametrized DNN on noisy data but also promotes the use of $\ell_2$ penalty or weight decay in practice for better theoretical guarantees.

\section{RELATED WORKS}

\paragraph{Neural Tangent Kernel} 
The seminal paper \citep{jacot2018neural} proves that the evolution of DNNs during training can be described by the so-called neural tangent kernel (NTK), which is central to characterize the convergence and generalization behaviors. 
\cite{du2018gradient, arora2019fine, li2018learning} investigate specifically for one-hidden-layer ReLU neural networks and show explicitly that with enough overparametrization, the weight vectors and the corresponding NTK do not change much during GD training. 
Similar investigations have been done for other neural networks and other settings \citep{zou2019improved, ji2019polylogarithmic}.
Among others, \cite{arora2019fine,cao2019generalization} provide generalization error bounds and provable learning scenarios, but only hold for {noiseless} data. 

For noisy data, explicit regularizations have recently been considered in the NTK literature.
\cite{wei2019regularization} promote the $\ell_2$ penalty when using NTK by showing that in a constructed classification example, sample efficiency can benefit from the regularization.
\cite{hu2020simple} consider classification with noisy labels and propose to add $\ell_2$ regularization to ensure robustness. 
However, their analyses only apply to the kernel estimator directly using NTK and only relate to infinite width neural networks, which greatly restricts the model class capacity. 
As pointed out before, bridging the technical gap between NTK and finite-width overparametrized neural networks is technically challenging when the training objective includes an $\ell_2$ regularization term and we should not take it for granted. 
\cite{geifman2020similarity} demonstrate the similarity between the Laplace kernels and ReLU NTKs. 
However, in order for NTK to be a good characterization of neural network training, how wide is wide enough remains an active field of research \citep{nitanda2019gradient}. 
In comparison, we directly analyze GD trajectories of training finite-width neural networks (with and without $\ell_2$ regularization) and prove that the corresponding NTK solutions can be well-approximated after a polynomial number of GD iterations. 
To the best of our knowledge, we are among the first to rigorously establish the $L_2$ convergence rate for trained neural networks under noisy data. 
\cite{nitanda2020optimal} recently provide similar convergence rate analysis by considering a particular penalized stochastic gradient descent algorithm but they require the neural network width to be exponential with $n$. 

\paragraph{Nonparametric Regression} In nonparametric statistics, \cite{stone1982optimal} shows that when $f^*$ is $d$-variate and $\beta$-time differentiable, the optimal rate of convergence for the $L_2$ estimation error is $n^{-\beta/(2\beta+d)}$. Many popular methods such as kernel methods, Gaussian process, splines, etc., achieve this rate.
It has been recently shown that DNN (with certain structures) can also achieve optimal convergence rates \citep{yarotsky2017error, schmidt2017nonparametric, bauer2019deep,liu2019optimal} and even for non-smooth functions \citep{imaizumi2018deep}. 
However, this type of results has two limitations. Firstly, they only apply to the empirical risk minimizer or some specially constructed DNNs without any algorithmic guarantee. Secondly, the theoretical analysis relies on delicate complexity control of the DNN family and cannot handle overparametrization, which is very common in practice. 
Therefore, the aforementioned results are less helpful in understanding deep neural network models with overparametrization and highly non-convex optimization properties.

Our algorithm-dependent statistical analysis bridges the gap between these two types of research. Based on the GD trajectories and the corresponding NTK, we are able to analyze the trained overparametrized neural networks within the nonparametric framework and show they can also achieve the optimal convergence rate with proper regularizations. 
 


\section{PRELIMINARIES}
\paragraph{Notation}
For any function $f(\bx):\cX\to\RR$, denote 
$\|f\|_\infty=\sup_{\bx\in\cX}|f(\bx)|$ and $\|f\|_p=(\int_{\cX} |f(\bx)|^p d\bx)^{1/p}$. 
For any vector $\bx$, $\|\bx\|_p$ denotes its $p$-norm, for $1\leq p \leq \infty$.
$L_p$ and $l_p$ are used to distinguish function norms and vector norms. 
For two given sequences $\{a_n\}_{n\in \NN}$ and $\{b_n\}_{n\in \NN}$ of real numbers, we write $a_n\lesssim b_n$ if there exists a constant $C>0$ such that $a_n\le C b_n$ for all sufficiently large $n$. Let $\Omega(\cdot)$ be the counterpart of $O(\cdot)$ that $a_n=\Omega(b_n)$ means $a_n\gtrsim b_n$.
Further, $a_n = \tilde{O}(b_n)$ and $a_n = \tilde{\Omega}(b_n)$ are used to indicate there are specific requirements for the multiplicative constants.
We write $a_n\asymp b_n$ if $a_n\lesssim b_n$ and $a_n\gtrsim b_n$. 
Let $[N]=\{1,\dots, N\}$ for $N\in\NN$ and let $\lambda_{\min}(\bA)$ be the minimum eigenvalue of a symmetric matrix $\bA$.
We use $\II$ to denote the indicator function and $\bI_d$ to denote the $d\times d$ identity matrix.
$N(\mathbf{\bmu}, \bSigma)$ represents Gaussian distribution with mean ${\bmu}$ and covariance $\bSigma$ and 
${\rm poly}(t_1,t_2,\ldots)$ denotes some polynomial function with arguments $t_1,t_2,\ldots$. 



\paragraph{Neural Network Setup} 
Consider the one-hidden-layer ReLU neural network family $\mathcal{F}$ with $m$ nodes in the hidden layer, expressed as 
\[f_{\bW,\ba}(\bx) = \frac{1}{\sqrt{m}}\sum_{r=1}^{m} a_r \sigma(\bw_r^\top \bx),\]
where $\bx\in \mathbb{R}^d$ denotes the input, $\bW=(\bw_1,\cdots,\bw_m)\in \mathbb{R}^{d\times m}$ is the weight matrix in the hidden layer, $\ba=(a_1,\cdots,a_m)^\top\in \mathbb{R}^m$ is the weight vector in the output layer, $\sigma(z) = \max\{0,z\}$ is the rectified linear unit (ReLU).  
The initial values of the weights are independently generated from 
\begin{align*}
    \bw_r(0) ~\sim~ N(\zero,\tau^2\bI_m),~~a_r~\sim~ \mathrm{unif}\{-1, 1\},~~\forall r\in [m].
\end{align*}
When $m\gg n$,
the neural network is highly overparametrized. 
As is usually assumed in the NTK literature \citep{arora2019fine, hu2020simple, bietti2019inductive}, we consider data on the unit sphere $\mathbb{S}^{d-1}$, i.e., $\|\bx_i\|_2=1$ for any $i\in [n]$.
Throughout this work, we further assume that $\bx_1,\ldots,\bx_n$ are { uniformly} distributed on $\mathbb{S}^{d-1}$ so that $\EE_{\bx\sim {\rm unif}(\mathbb{S}^{d-1})}(\hat{f}(\bx)-f^*(\bx))^2$ and $\|f-f^*\|_2^2$ are equal up to a constant multiplier and thus will be used interchangeably.  

\paragraph{Gradient Descent}
Let $\by=(y_1,\cdots,y_n)^\top$ and $\bepsilon=(\epsilon_1,\cdots,\epsilon_n)^\top$. Denote $u_i = f_{\bW,\ba}(\bx_i)$ to be the network's prediction on $\bx_i$ and let $\bu = (u_1,...,u_n)^\top$. 
Without loss of generality, we consider fixing the second layer $\ba$ after initialization and only training the first layer $\bW$ by GD. 
Fixing the last layer is not a strong restriction since $a\cdot\sigma(z) = \mbox{sign}(a)\cdot\sigma(|a|z)$ and we can always reparametrize the network to have all $a_i$'s to be either $1$ or $-1$. 
Denote the empirical squared loss as
$
    \Phi(\bW) = \frac{1}{2}\|\by-\bu\|_2^2.
$
The gradient of $\Phi(\bW)$ w.r.t. $\bw_r$ can be written as
\begin{align*}
    \frac{\partial \Phi(\bW)}{\partial \bw_r} = \frac{1}{\sqrt{m}}a_r\sum_{i=1}^n (u_i - y_i)\mathbb{I}_{r,i}\bx_i,\quad r\in[m],
\end{align*}
where $\mathbb{I}_{r,i} = \mathbb{I}\{\bw_r^\top\bx_i\geq 0\}$. Then 
the GD update rule at the $k$-th iteration is given by
\begin{align*}
    \bw_r(k+1)=\bw_r(k)-\eta\frac{\partial \Phi(\bW)}{\partial \bw_r}\biggm|_{\bW=\bW(k)},
\end{align*}
where $\eta>0$ is the step size (a.k.a. learning rate). In the rest of this work, we use $k$ to index variables at the $k$-th iteration, e.g., $u_i(k) = f_{\bW(k),\ba}(\bx_i)$, etc. Define $\mathbb{I}_{r,i}(k)=\mathbb{I}\{\bw_r(k)^{\top}\bx_i\ge 0\}$, $\bZ(k)\in\RR^{md\times n}$ that
\begin{align*}
    \bZ(k)=\frac{1}{\sqrt{m}}
    \begin{pmatrix}
    a_1\mathbb{I}_{1,1}(k)\bx_1 & \dots & a_1\mathbb{I}_{1,n}(k)\bx_n\\
    \vdots & \ddots & \vdots \\
    a_m\mathbb{I}_{m,1}(k)\bx_1 & \dots & a_m\mathbb{I}_{m,n}(k)\bx_n
    \end{pmatrix}
\end{align*}
and $\bH(k)=\bZ(k)^{\top}\bZ(k)$. It is shown that matrices $\bZ(k)$ and $\bH(k)$ are close to $\bZ(0)$ and $\bH(0)$, respectively for any $k$, when $m$ is sufficiently large \citep{arora2019fine}. We can rewrite the GD update rule as 
\begin{align}
 \label{GD}
    \mathrm{vec}(\bW(k+1))=\mathrm{vec}(\bW(k))-\eta \bZ(k)(\bu(k)-\by),
\end{align}
where $\mathrm{vec}(\bW) =(\bw_1^\top,\cdots,\bw_m^{\top})^\top\in \mathbb{R}^{md\times 1}$ is the vectorized weight matrix. 

\paragraph{Kernel Ridge Regression with NTK} 
The study of one-hidden-layer ReLU neural networks is closely related to the NTK defined as
\begin{align}\label{ntkh}
    h(\bs,\bt) = & \mathbb{E}_{\bw \sim N(0,\bI_d)}\rbr{\bs^\top\bt \ \mathbb{I}\{\bw^\top\bs \geq 0,\bw^\top\bt \geq 0\}}\nonumber\\
    =& \frac{\bs^\top\bt(\pi - \arccos(\bs^\top\bt))}{2\pi},
\end{align}
where $\bs,\bt$ are $d$-dimensional vectors. It can be shown that $h$ is positive definite on the unit sphere $\mathbb{S}^{d-1}$ \citep{bietti2019inductive}. Let the Mercer decomposition of $h$ be
$
    h(\bs,\bt) = \sum_{j=0}^\infty \lambda_j\varphi_j(\bs)\varphi_j(\bt),
$
where $\lambda_1\geq \lambda_2\geq...\geq 0$ are the eigenvalues, and $\{\varphi_j\}_{j=1}^{\infty}$ is an orthonormal basis. 

The following lemma states the decay rate of eigenvalues of the NTK associated with one-hidden-layer ReLU neural networks, as a key technical contribution of this work.
\begin{lemma}\label{lemeigendecay}
Let $\lambda_j$ be the eigenvalues of NTK $h$ defined above. Then we have $\lambda_j \asymp j^{-\frac{d}{d-1}}$.
\end{lemma}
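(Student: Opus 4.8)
The plan is to determine the eigenvalue decay of the NTK $h$ on $\mathbb{S}^{d-1}$ by exploiting the rotational invariance of $h$: since $h(\bs,\bt)$ depends only on the inner product $\bs^\top\bt$, it is a \emph{zonal kernel}, and its Mercer eigenfunctions are exactly the spherical harmonics. The eigenvalue $\lambda_k$ associated with the degree-$k$ harmonic space (which has multiplicity $N_{k,d}\asymp k^{d-2}$) is given by the Funk--Hecke formula,
\begin{align*}
  \lambda_k = \omega_{d-2}\int_{-1}^{1} \kappa(t)\, P_k(t)\,(1-t^2)^{(d-3)/2}\, dt,
\end{align*}
where $\kappa(t)=\tfrac{t(\pi-\arccos t)}{2\pi}$ is the shape function of $h$, $P_k$ is the appropriate Gegenbauer/Legendre polynomial, and $\omega_{d-2}$ is a normalizing constant. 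So the first step is to set up this machinery and reduce the problem to the asymptotics of the coefficients of $\kappa$ in the Gegenbauer expansion.

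The second step, which I expect to be the technical heart of the argument, is to pin down the decay rate of these Funk--Hecke coefficients. The key observation is that $\kappa(t)$ is smooth on $(-1,1)$ but has a \emph{singularity in its derivatives at the endpoint} $t=1$: writing $\arccos t$ near $t=1$ involves $\sqrt{1-t}$, so $\kappa$ has a term behaving like $(1-t)^{3/2}$ (up to smooth factors) near $t=1$. This half-integer power is exactly what controls the algebraic — rather than super-polynomial — decay of the coefficients. Concretely, I would either (i) differentiate $\kappa$ enough times to isolate the $(1-t)^{-1/2}$-type singularity and integrate by parts against $P_k$ the corresponding number of times, tracking how each integration by parts gains a factor of order $k^{-1}$ while the boundary/singular term contributes a fixed power of $k$; or (ii) decompose $\kappa = \kappa_{\mathrm{smooth}} + \kappa_{\mathrm{sing}}$ where $\kappa_{\mathrm{smooth}}\in C^\infty([-1,1])$ (giving coefficients decaying faster than any polynomial) and $\kappa_{\mathrm{sing}}$ is an explicit model singularity whose Gegenbauer coefficients can be computed in closed form via known integrals of $(1-t)^{\alpha}$ against Gegenbauer polynomials (Rodrigues' formula / beta-function evaluations). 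Either route should yield $\lambda_k \asymp k^{-d}$ for the degree-$k$ harmonic-space eigenvalue (one needs both the upper bound and a matching lower bound, so the leading asymptotic coefficient must be shown to be nonzero — this is where the $(1-t)^{3/2}$ term, as opposed to a cancellation, matters).

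The final step is bookkeeping to pass from the ``grouped'' eigenvalues indexed by harmonic degree $k$ to the sorted sequence $\lambda_1\ge\lambda_2\ge\cdots$ indexed by $j$. Since the degree-$k$ eigenvalue $\lambda_k\asymp k^{-d}$ has multiplicity $N_{k,d}\asymp k^{d-2}$, the index $j$ corresponding to degree $k$ satisfies $j \asymp \sum_{\ell\le k} \ell^{d-2}\asymp k^{d-1}$, hence $k\asymp j^{1/(d-1)}$ and $\lambda_j\asymp k^{-d}\asymp j^{-d/(d-1)}$, which is the claimed rate. I would also need to check that the eigenvalues are genuinely monotone within the grouping (or that the $\asymp$ statement is insensitive to the ordering within each harmonic block, which it is since all eigenvalues in block $k$ are comparable). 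The main obstacle is Step 2: getting a clean two-sided bound on the Funk--Hecke coefficients of a function with an endpoint $(1-t)^{3/2}$ singularity, and in particular verifying the non-vanishing of the leading term so that the lower bound $\lambda_j\gtrsim j^{-d/(d-1)}$ holds and not merely the upper bound.
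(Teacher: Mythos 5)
Your overall plan (zonal kernel, Funk--Hecke reduction, endpoint singularity analysis of the shape function, multiplicity bookkeeping) is the right skeleton, and the final bookkeeping step is exactly what the paper does. However, there is a genuine error at the heart of Step~2 that, if carried out as written, would produce the wrong exponent. The shape function is $\kappa(t)=\frac{t(\pi-\arccos t)}{2\pi}=\frac{t}{2}-\frac{t\arccos t}{2\pi}$, and near $t=1$ one has $\arccos t=\sqrt{2(1-t)}\bigl(1+O(1-t)\bigr)$, so the leading non-analytic term of $\kappa$ is $-\frac{\sqrt{2}}{2\pi}(1-t)^{1/2}$ --- a half power, not the $(1-t)^{3/2}$ you assert. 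Multiplying by the smooth factor $t=1-(1-t)$ shifts coefficients but does not raise the exponent; there is no cancellation of the $(1-t)^{1/2}$ term. Under the Funk--Hecke asymptotics on $\mathbb{S}^{d-1}$, a $(1-t)^\alpha$ endpoint singularity of the zonal profile yields degree-$k$ eigenvalues of order $k^{-(d-1)-2\alpha}$. With $\alpha=1/2$ this gives $k^{-d}$, matching the lemma; your stated $\alpha=3/2$ would give $k^{-d-2}$, which would propagate to the wrong final rate $j^{-(d+2)/(d-1)}$. So the conclusion you wrote is correct, but it does not follow from the singularity you identified. (The kernel whose leading singularity really is $(1-t)^{3/2}$ is the second arc-cosine piece $\kappa_1(t)=\frac{1}{\pi}\bigl(t(\pi-\arccos t)+\sqrt{1-t^2}\bigr)$, where the $(1-t)^{1/2}$ contributions of the two summands cancel exactly; you may have conflated $\kappa$ with $\kappa_1$.)

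For comparison, the paper avoids redoing the singularity analysis. It writes $\kappa(t)=\tfrac{t}{2}\kappa_0(t)$ with $\kappa_0(t)=\tfrac{1}{\pi}(\pi-\arccos t)$, invokes the known decay $\mu_{0,k}\asymp k^{-d}$ for the Funk--Hecke coefficients of $\kappa_0$ from Lemma~17 of \cite{bietti2019inductive}, and uses the three-term recurrence
\begin{align*}
tP_k(t)=\frac{k}{2k+d-2}P_{k-1}(t)+\frac{k+d-2}{2k+d-2}P_{k+1}(t)
\end{align*}
to transfer that decay to $\kappa$: $\mu_k$ is a convex combination of $\mu_{0,k-1}$ and $\mu_{0,k+1}$ and hence $\asymp k^{-d}$. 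If you prefer your direct route, fix the exponent to $1/2$, and also track the parity structure: $\kappa(t)-\tfrac{t}{4}$ is even, so $\mu_k=0$ for odd $k\ge 3$; this does not affect your index calculation since the surviving degrees $\le k$ still contribute $\asymp k^{d-1}$ eigenvalues in total, but it should be stated when establishing the two-sided bound.
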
 
Let $\mathcal{N}$ denote the reproducing kernel Hilbert space (RKHS) generated by $h$ on $\mathbb{S}^{d-1}$, equipped with norm $\|\cdot\|_{\mathcal{N}}$. 
For an unknown function $f^*\in \cN$, the kernel ridge regression minimizes
\begin{align}\label{regpro}
    \min_{f\in \cN}\frac{1}{2}\sum_{i=1}^n(y_i - f(\bx_i))^2 + \frac{\mu}{2}\|f\|_{\mathcal{N}}^2,
\end{align}
where $\mu>0$ is a tuning parameter controlling the regularization strength. The representer theorem says that the solution to \eqref{regpro} can be written as
\begin{align}\label{eqsolukrr}
    \hat f(\bx) = h(\bx,\bX)(\bH^\infty + \mu \bI_n)^{-1}\by
\end{align}
for any point $\bx\in\RR^d$, where 
$h(\bx,\bX) = (h(\bx,\bx_1),...,h(\bx,\bx_n))\in\RR^{1\times n} $
and $\bH^{\infty}=\rbr{h(\bx_i,\bx_j)}_{n\times n}
$ ($\bH^{\infty}$ is usually called the NTK matrix). 
In the following theorem, we show that the function $\hat{f}$ is close to the true function $f^*$ under the $L_2$ metric. 
\begin{theorem}\label{thmkrr}
Let $\hat{f}$ be as in (\ref{eqsolukrr}). By choosing $\mu \asymp n^{(d-1)/(2d-1)}$, we have
\begin{align*}
    \|\hat{f} - f^*\|_{2}^2 = O_{\mathbb{P}}\rbr{n^{-\frac{d}{2d-1}}},\quad \|\hat{f}\|_{\mathcal{N}}^2 = O_{\mathbb{P}}(1).
\end{align*}
\end{theorem}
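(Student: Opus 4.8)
Since the estimator in \eqref{eqsolukrr} is linear in $\by=\mathbf{f}^{*}+\bepsilon$, where $\mathbf{f}^{*}:=(f^*(\bx_1),\dots,f^*(\bx_n))^\top$ is the vector of noiseless responses, I would split $\hat f=\hat f_{\mathrm{b}}+\hat f_{\mathrm{v}}$, with $\hat f_{\mathrm{b}}$ and $\hat f_{\mathrm{v}}$ obtained by replacing $\by$ in \eqref{eqsolukrr} by $\mathbf{f}^{*}$ and $\bepsilon$ respectively, so that $\|\hat f-f^*\|_2^2\le 2\|\hat f_{\mathrm{b}}-f^*\|_2^2+2\|\hat f_{\mathrm{v}}\|_2^2$ and it suffices to bound a deterministic bias term and a stochastic variance term. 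Throughout set $\lambda:=\mu/n\asymp n^{-d/(2d-1)}$ for the effective regularization level; let $L_h$ be the integral operator on $L_2(\mathbb{S}^{d-1})$ with kernel $h$, whose eigenpairs are the Mercer pairs $(\lambda_j,\varphi_j)$ with $\lambda_j\asymp j^{-d/(d-1)}$ by Lemma~\ref{lemeigendecay}; and write $\langle\cdot,\cdot\rangle$ for the $L_2(\mathbb{S}^{d-1})$ inner product.

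For the bias term, I would compare $\hat f_{\mathrm{b}}$ with the population ridge estimator $g_\lambda:=(L_h+\lambda I)^{-1}L_h f^*$. In the eigenbasis this gives the exact identity $g_\lambda-f^*=-\lambda(L_h+\lambda I)^{-1}f^*$, and since $f^*\in\cN$ means $\sum_j\lambda_j^{-1}\langle f^*,\varphi_j\rangle^2=\|f^*\|_{\cN}^2<\infty$, the elementary inequality $\lambda^2/(\lambda_j+\lambda)^2\le\lambda/(4\lambda_j)$ yields $\|g_\lambda-f^*\|_2^2\le\tfrac{\lambda}{4}\|f^*\|_{\cN}^2=O(\lambda)$. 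To pass from $g_\lambda$ to the data-dependent $\hat f_{\mathrm{b}}$ I would invoke the standard operator-concentration machinery used in the kernel ridge regression literature \citep{hu2020simple}: the empirical integral operator $\widehat L_n$ (whose nonzero eigenvalues are those of $n^{-1}\bH^\infty$) concentrates around $L_h$, with the deviations of $\|(L_h+\lambda I)^{-1/2}(\widehat L_n-L_h)(L_h+\lambda I)^{-1/2}\|$ and of the effective dimension $\mathcal N(\lambda):=\sum_j\lambda_j/(\lambda_j+\lambda)$ controlled once $n\,\lambda^{(d-1)/d}\gtrsim\log n$, which holds for the stated $\mu$. This replaces $\|\hat f_{\mathrm{b}}-f^*\|_2^2$ by $\|g_\lambda-f^*\|_2^2$ up to constants and lower-order terms, hence it is $O_{\mathbb P}(\lambda)=O_{\mathbb P}(n^{-d/(2d-1)})$.

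For the variance term, conditioning on $\bX$ gives $\EE[\|\hat f_{\mathrm{v}}\|_2^2\mid\bX]=\sigma^2\,\mathrm{tr}\!\big[\mathbf{G}(\bH^\infty+\mu\bI_n)^{-2}\big]$ with $\mathbf{G}_{ij}=\int h(\bx_i,\bx)h(\bx,\bx_j)\,d\bx$, which after the same concentration is of order $\tfrac{\sigma^2}{n}\mathcal N(\lambda)$; estimating the sum by an integral with $\lambda_j\asymp j^{-d/(d-1)}$ gives $\mathcal N(\lambda)\asymp\lambda^{-(d-1)/d}$, so by Markov's inequality the variance term is $O_{\mathbb P}\!\big(n^{-1}\lambda^{-(d-1)/d}\big)$. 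Equating $\lambda$ (squared bias) with $n^{-1}\lambda^{-(d-1)/d}$ (variance) forces $\lambda\asymp n^{-d/(2d-1)}$, i.e.\ $\mu\asymp n^{(d-1)/(2d-1)}$, and both contributions become $n^{-d/(2d-1)}$, which proves the first claim. For the second, $\|\hat f\|_{\cN}^2=\by^\top(\bH^\infty+\mu\bI_n)^{-1}\bH^\infty(\bH^\infty+\mu\bI_n)^{-1}\by$: the signal part is bounded by $\mathbf{f}^{*\top}(\bH^\infty)^{-1}\mathbf{f}^{*}\le\|f^*\|_{\cN}^2$ (the squared RKHS norm of the minimum-norm interpolant of $f^*$ on the design), while the noise part has conditional expectation $\sigma^2\,\mathrm{tr}[(\bH^\infty+\mu\bI_n)^{-1}\bH^\infty(\bH^\infty+\mu\bI_n)^{-1}]$, which concentrates around $\tfrac{\sigma^2}{n}\sum_j\lambda_j(\lambda_j+\lambda)^{-2}\asymp\tfrac{\sigma^2}{n}\lambda^{-(2d-1)/d}\asymp\sigma^2$ under the chosen $\mu$; Markov's inequality then gives $\|\hat f\|_{\cN}^2=O_{\mathbb P}(1)$.

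The step I expect to be the main obstacle is the empirical-to-population transfer underlying the second and third paragraphs: the clean identities above hold for the deterministic operator $L_h$, whereas $\hat f$ depends on the random design, so one must control $\bH^\infty$ (equivalently $\widehat L_n$) around $L_h$ in the $(L_h+\lambda I)^{-1/2}$-weighted operator norm, and this is most delicate at the spectral scale $\lambda$, where the effective dimension $\lambda^{-(d-1)/d}$ peaks; it is precisely the requirement that this effective dimension stay below the sample size---guaranteed by the choice $\mu\asymp n^{(d-1)/(2d-1)}$---that makes the concentration, and hence the whole argument, go through. A secondary nuisance is that Lemma~\ref{lemeigendecay} pins $\lambda_j$ down only up to absolute constants, so every spectral sum has to be estimated by integral comparison with the exponent $d/(d-1)$ carried through explicitly.
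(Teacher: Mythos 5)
Your proposal reaches the right rate, but by a genuinely different route from the paper's. The paper never touches the spectral decomposition of $\hat f$ at all: it (i) derives an entropy bound $H(\delta,\cN(1),\|\cdot\|_\infty)\lesssim\delta^{-2(d-1)/d}$ from the eigenvalue decay (Lemma~\ref{lementbound}), (ii) feeds that entropy exponent $\alpha=2(d-1)/d$ into van de Geer's penalized least-squares rate theorem (Lemma~\ref{lemma:10.2}) to get $\|\hat f-f^*\|_n^2=O_{\PP}(n^{-d/(2d-1)})$ and $\|\hat f\|_\cN^2=O_{\PP}(1)$ in the \emph{empirical} norm, and then (iii) transfers from $\|\cdot\|_n$ to $\|\cdot\|_2$ using the uniform concentration inequality of Lemma~\ref{lemma:geer} with a localization/peeling step. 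You instead run the Caponnetto--De Vito style argument: bias--variance split of the KRR estimator, the exact identity $g_\lambda-f^*=-\lambda(L_h+\lambda I)^{-1}f^*$ giving a bias of $O(\lambda)$ under the source condition $f^*\in\cN$, effective dimension $\mathcal N(\lambda)\asymp\lambda^{-(d-1)/d}$ giving a variance of $O(\mathcal N(\lambda)/n)$, and operator concentration to pass from $L_h$ to $\widehat L_n$. Both routes balance to $\lambda\asymp n^{-d/(2d-1)}$ and hit the same minimax rate; your spectral calculations (AM--GM bound $\lambda^2/(\lambda_j+\lambda)^2\le\lambda/(4\lambda_j)$, the integral comparisons for $\mathcal N(\lambda)$ and $\sum_j\lambda_j(\lambda_j+\lambda)^{-2}\asymp\lambda^{-(2d-1)/d}$, and the Loewner bound $\mathbf f^{*\top}(\bH^\infty)^{-1}\mathbf f^*\le\|f^*\|_\cN^2$) are all correct.

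What each approach buys: your route exposes the bias--variance trade-off and the effective dimension explicitly, which is conceptually cleaner and matches the standard KRR narrative; the paper's route offloads the localization to a black-box empirical-process theorem and never needs to prove an empirical-to-population operator concentration bound (the step you correctly identify as the technical crux of your sketch, and which you leave as a citation). Put differently, the paper replaces your ``main obstacle'' with an entropy computation and two ready-made theorems of van de Geer, while you replace their $\|\cdot\|_n\!\to\!\|\cdot\|_2$ transfer with the concentration of $\widehat L_n$ about $L_h$ in the $(L_h+\lambda I)^{-1/2}$-weighted operator norm. Both transfer steps rely in the end on the same eigendecay $\lambda_j\asymp j^{-d/(d-1)}$ from Lemma~\ref{lemeigendecay}, just packaged differently (as a covering-number exponent vs.\ as an effective dimension). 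If you were to fill in the operator-concentration step (e.g.\ via Bernstein for self-adjoint operators plus the $n\lambda^{(d-1)/d}\gtrsim\log n$ condition you state), the argument would be a complete and self-contained alternative proof.
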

The proof of the convergence rate requires an accurate characterization of the complexity of $\cN$, which is determined by the eigenvalues and eigenfunction expansion of the NTK $h$. 
If the eigenvalues decay at rate $\lambda_j\asymp j^{-2\nu}$, the corresponding minimax optimal rate is $n^{-2\nu/(2\nu+1)}$ \citep{yuan2016minimax,raskutti2014early}.
Building on the the eigenvalue decay rate established in Lemma \ref{lemeigendecay}, it can be shown that the $L_2$ estimation rate in Theorem \ref{thmkrr} is {minimax-optimal}. 



In the rest of this work, we assume that $f^*\in\cN$.

\section{PROBLEMS OF GRADIENT DESCENT FROM THE NONPARAMETRIC PERSPECTIVE
}\label{secwithoutp}

In this section, we consider training overparametrized neural networks with the GD update rule (\ref{GD}). 
Among others, \cite{arora2019fine,du2018gradient} prove that as iteration $k\to \infty$, the training data are interpolated, achieving zero training loss. 
However, in the presence of noises, i.e., $\epsilon_i$ in (\ref{eqrelation}), such an overfitting to the training data can be harmful for recovering the ground truth.
The following theorem shows that if $k$ is too small or too large, the $L_2$ estimation error of the trained neural network is bounded away from zero.

\begin{theorem}\label{thm:gd}
Fix a failure probability $\delta\in(0,1)$. Let $\lambda_0$ be the largest number that with probability at least $1-\delta$, $\lambda_{\min}(\bH^\infty)\geq \lambda_0$. Suppose $m\ge\tau^{-2}{\rm poly}\rbr{n, \frac{1}{\lambda_0}, \frac{1}{\delta}}$, $\eta =\tilde{O}\rbr{\frac{\lambda_0}{n^2}}$, and $\tau =\tilde{O}\rbr{\frac{\lambda_0\delta}{n}}$. For sufficiently large $n$, if the iteration $k=\tilde{\Omega} \rbr{\frac{\log n}{\eta\lambda_0}}$ or $k=\tilde{O}\rbr{ \frac{1}{n\eta}}$, then with probability at least $1-2\delta$, we have \[\mathbb{E}_{\bepsilon}\|f_{\bW(k),\ba}-f^*\|_{2}^2  =\Omega(1).\]

\end{theorem}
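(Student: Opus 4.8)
\emph{Proof idea.} The plan is to replace the trained network by its neural tangent kernel linearization and then, in each of the two regimes for $k$, to lower bound $\mathbb{E}_{\bepsilon}\norm{f_{\bW(k),\ba}-f^*}_2^2$ by one term of its bias--variance decomposition over the noise. By the overparametrization bounds of \citep{arora2019fine} (which apply since $m=\tau^{-2}{\rm poly}(n,1/\lambda_0,1/\delta)$, uniformly over $k$ up to the horizon $\tilde{O}(\log n/(\eta\lambda_0))$), on a good initialization event of probability at least $1-\delta$ the network output is uniformly close to its linearization
\[
\tilde f_k(\bx):=f_{\bW(0),\ba}(\bx)+h(\bx,\bX)(\bH^\infty)^{-1}\bigl[\bI_n-(\bI_n-\eta\bH^\infty)^k\bigr](\by-\bu(0)),
\]
i.e.\ $\norm{f_{\bW(k),\ba}-\tilde f_k}_\infty=o(1)$ (hence in $L_2$, and also after $\mathbb{E}_{\bepsilon}$, as the linearization error depends on $\by$ only through a low-degree polynomial); $\tilde f_k$ is the gradient-descent analogue of the kernel ridge solution \eqref{eqsolukrr}. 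I will use two elementary facts throughout: (i) $\norm{g}_2^2\le\lambda_1\norm{g}_{\mathcal{N}}^2$ for all $g\in\mathcal{N}$, by Mercer's theorem and $\lambda_1=\Theta(1)$ (Lemma~\ref{lemeigendecay}); and (ii) $(1-(1-\eta\mu)^k)^2/\mu\le k\eta$ whenever $0<\eta\mu\le1$, so $[\bI_n-(\bI_n-\eta\bH^\infty)^k]^2(\bH^\infty)^{-1}$ has operator norm at most $k\eta$ (here $0<\eta\mu\le\eta\,\lambda_{\max}(\bH^\infty)\le1$ by positive-definiteness of $\bH^\infty$ and the prescribed step size).

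\emph{Regime $k=\tilde{O}(1/(n\eta))$ (underfitting).} I bound the bias $\norm{\mathbb{E}_{\bepsilon}\tilde f_k-f^*}_2$ from below. Since $\tau=\tilde{O}(\lambda_0\delta/n)$, a Markov argument on the good event gives $\norm{f_{\bW(0),\ba}}_2=o(1)$ and $\norm{\bu(0)}_2=o(1)$, so $\norm{f^*(\bX)-\bu(0)}_2^2\le Cn$ for an absolute constant $C$. As $\mathbb{E}_{\bepsilon}\by=f^*(\bX)$, beyond $f_{\bW(0),\ba}$ the mean predictor equals $g_k:=h(\cdot,\bX)(\bH^\infty)^{-1}[\bI_n-(\bI_n-\eta\bH^\infty)^k](f^*(\bX)-\bu(0))$, whose RKHS norm obeys $\norm{g_k}_{\mathcal{N}}^2\le k\eta\,\norm{f^*(\bX)-\bu(0)}_2^2\le Ck\eta n$ by (ii), hence $\norm{g_k}_2^2\le\lambda_1 Ck\eta n$ by (i). Because the multiplicative constant in $k=\tilde{O}(1/(n\eta))$ may be taken small --- which is exactly what that notation affords --- we get $\lambda_1 Ck\eta n\le\frac{1}{16}\norm{f^*}_2^2$; combining this with $\norm{f_{\bW(0),\ba}}_2=o(1)$ and $\norm{f_{\bW(k),\ba}-\tilde f_k}_\infty=o(1)$, the triangle inequality gives $\norm{\mathbb{E}_{\bepsilon}f_{\bW(k),\ba}-f^*}_2\ge\frac{1}{2}\norm{f^*}_2$ for large $n$, so $\mathbb{E}_{\bepsilon}\norm{f_{\bW(k),\ba}-f^*}_2^2\ge\norm{\mathbb{E}_{\bepsilon}f_{\bW(k),\ba}-f^*}_2^2=\Omega(1)$.

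\emph{Regime $k=\tilde{\Omega}(\log n/(\eta\lambda_0))$ (overfitting the noise).} I bound the noise variance $\mathbb{E}_{\bepsilon}\norm{f_{\bW(k),\ba}-\mathbb{E}_{\bepsilon}f_{\bW(k),\ba}}_2^2$ from below. On $\{\lambda_{\min}(\bH^\infty)\ge\lambda_0\}$, which has probability at least $1-\delta$, taking the constant in $\tilde{\Omega}$ large (so $k\ge c_1\log n/(\eta\lambda_0)$) forces $(\bI_n-\eta\bH^\infty)^k\preceq n^{-c_1}\bI_n$, whence $\tilde f_k$ is within $o(1)$ in $L_2$ of $f_{\bW(0),\ba}+h(\cdot,\bX)(\bH^\infty)^{-1}\by$ --- the (shifted) minimum-norm interpolant of the noisy data. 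Its fluctuation in $\bepsilon$ is exactly $h(\cdot,\bX)(\bH^\infty)^{-1}\bepsilon$, whose $L_2$ variance is $\sigma^2\,\mathrm{tr}\bigl((\bH^\infty)^{-1}\bH^{(2)}(\bH^\infty)^{-1}\bigr)$ with $\bH^{(2)}:=\bigl(\int_{\mathbb{S}^{d-1}}h(\bx_i,\bx)h(\bx,\bx_j)\,d\bx\bigr)_{n\times n}$. The crux of this case is the degrees-of-freedom estimate $\mathrm{tr}\bigl(\bH^{(2)}(\bH^\infty)^{-2}\bigr)\asymp1$, which I obtain from the relative lower bound $\bH^{(2)}\succeq(1-o(1))\frac{1}{n}(\bH^\infty)^2$ --- equivalently, the empirical average $\frac{1}{n}\sum_{l}h(\bx_i,\bx_l)h(\bx_l,\bx_j)$ concentrating around $\bH^{(2)}_{ij}$ --- which relies on the eigenvalue decay of Lemma~\ref{lemeigendecay} and boundedness of the eigenfunctions to control the relevant effective dimension. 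Consequently $\mathrm{tr}\bigl(\bH^{(2)}(\bH^\infty)^{-2}\bigr)\ge(1-o(1))\frac{1}{n}\mathrm{tr}\bigl((\bH^\infty)^2(\bH^\infty)^{-2}\bigr)=1-o(1)$, so the variance is $\gtrsim\sigma^2$; transferring back through $\norm{f_{\bW(k),\ba}-\tilde f_k}_\infty=o(1)$ and $(a-b)^2\ge\frac{1}{2}a^2-b^2$ yields $\mathbb{E}_{\bepsilon}\norm{f_{\bW(k),\ba}-f^*}_2^2\ge\mathbb{E}_{\bepsilon}\norm{f_{\bW(k),\ba}-\mathbb{E}_{\bepsilon}f_{\bW(k),\ba}}_2^2\gtrsim\sigma^2=\Omega(1)$. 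A union bound over the good initialization event and $\{\lambda_{\min}(\bH^\infty)\ge\lambda_0\}$ produces the stated probability $1-2\delta$.

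\emph{Main obstacles.} Two steps carry most of the weight. First, the linearization $\norm{f_{\bW(k),\ba}-\tilde f_k}_\infty=o(1)$ must be uniform over all test points $\bx\in\mathbb{S}^{d-1}$ and over all iterations $k$ up to the horizon, with an explicit polynomial bound on $m$ in terms of $n,1/\lambda_0,1/\delta$ and only a low-degree dependence on $\norm{\by}_2$ (so that finite variance of $\bepsilon$ suffices after taking $\mathbb{E}_{\bepsilon}$); this requires propagating the perturbation bounds on $\bZ(k)-\bZ(0)$ and $\bH(k)-\bH^\infty$ across the entire trajectory. Second, and more delicate, is $\mathrm{tr}\bigl(\bH^{(2)}(\bH^\infty)^{-2}\bigr)\asymp1$: since $(\bH^\infty)^{-2}$ amplifies the small eigenvalue directions, a crude additive matrix concentration for $\bH^{(2)}$ does not suffice, and one genuinely needs the eigenvalue decay rate of Lemma~\ref{lemeigendecay} together with a relative concentration argument --- this is where the nonparametric structure of the NTK enters, and it is the real heart of the overfitting direction.
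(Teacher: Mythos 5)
Your overall structure mirrors the paper's: linearize the trained network via its NTK dynamics, split by iteration regime, and lower bound one term of a bias--variance type decomposition. The underfitting regime is handled correctly and your route is essentially equivalent to the paper's (the paper bounds $\mathbb{E}_\bepsilon\|\Delta_{21}^*\|_2^2$ with $\Delta_{21}^*=h(\cdot,\bX)\bG_k\by$ directly, whereas you pass through the RKHS norm of the mean predictor and Jensen's inequality; both boil down to the same operator-norm estimate $(1-(1-\eta\hat\lambda_i)^k)/\hat\lambda_i\le k\eta$, and both give $\|\mathrm{mean\ predictor}\|_2^2 = O(k\eta n)$, which the $\tilde O(1/(n\eta))$ latitude in the statement makes dominated by $\|f^*\|_2^2$).

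The overfitting regime, however, contains a genuine gap. You correctly identify that the quantity to control is
\[
I_1:=\int h(\bx,\bX)(\bH^\infty)^{-2}h(\bX,\bx)\,d\bx=\mathrm{tr}\bigl(\bH^{(2)}(\bH^\infty)^{-2}\bigr),
\]
but your proposed route to $I_1\asymp 1$ --- the relative concentration $\bH^{(2)}\succeq(1-o(1))\tfrac{1}{n}(\bH^\infty)^2$ --- is not established and is in fact unlikely to hold as a Loewner-order inequality. The statement $\bv^\top\bH^{(2)}\bv\ge(1-o(1))\tfrac{1}{n}\bv^\top(\bH^\infty)^2\bv$ for \emph{all} $\bv$ is equivalent to $\|g\|_2^2\ge(1-o(1))\|g\|_n^2$ uniformly over $g\in\mathrm{span}\{h(\bx_i,\cdot)\}$; in the bottom-eigenvector directions of $\bH^\infty$, where $\|g\|_n^2$ is of order $\hat\lambda_{\min}^2/n$, the uniform empirical-process error (as in Lemma~\ref{lemma:geer}, of order $n^{-1/2}$ for the normalized RKHS ball) overwhelms the signal, so the relative inequality fails precisely in the directions that $(\bH^\infty)^{-2}$ amplifies. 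You flag this issue yourself, but flagging does not close it, and additive concentration of $\bH^{(2)}$ around $\tfrac{1}{n}(\bH^\infty)^2$ does not survive the $(\bH^\infty)^{-2}$ reweighting.

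The paper avoids this altogether: it observes that $\sigma^2 I_1$ is exactly $\mathbb{E}_\bepsilon\|\hat f_\infty-g^*\|_2^2$ for the null target $g^*\equiv 0$, where $\hat f_\infty=h(\cdot,\bX)(\bH^\infty)^{-1}\by$ is the kernel interpolant of the noisy data, and then applies the second part of Lemma~\ref{lemma:early} (Corollary~1 of \citep{raskutti2014early}), whose $k\to\infty$ limit gives the constant lower bound $\sigma^2/4$. That published lower bound --- proved via local Rademacher complexities and the eigenvalue decay of Lemma~\ref{lemeigendecay}, not via pointwise matrix concentration --- is exactly the missing ingredient. To repair your argument, you should either import that lower bound directly (as the paper does, through Theorem~\ref{thm:kernel_earlystop}) or reproduce its proof; the relative-matrix-concentration route you sketch is not the right tool here.

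A secondary, smaller point: your transfer of the lower bound from the linearized predictor back to $f_{\bW(k),\ba}$ after taking $\mathbb{E}_\bepsilon$ is stated as an afterthought (``depends on $\by$ only through a low-degree polynomial''), whereas the paper explicitly bounds $\mathbb{E}_\bepsilon\|\Xi\|_2^2$ term by term (equations~\eqref{Delta1b}, \eqref{Delta22b}, \eqref{Delta23b} together with \eqref{bnd:y}, \eqref{bnd:y-u0}) before applying the elementary inequality $\|a+b\|^2\ge\tfrac12\|a\|^2-\|b\|^2$. If you keep the blind proof, you should supply those moment bounds; they are needed to justify that the remainder does not wash out the $\Omega(1)$ main term.
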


The conditions on $m,\eta,$ and $\tau$ have the same rates as those in Theorem 5.1 of \cite{arora2019fine}, but the constants requirements are different. The probability $1-2\delta$ in Theorem \ref{thm:gd} comes from the randomness of $\lambda_{\min}(\bH^\infty)$ and $(\bW(0), \ba)$. 
Theorem \ref{thm:gd} states that the estimation error for non-regularized one-hidden-layer neural networks is bounded away from zero by some constant if trained for too short or too long. The latter scenario indicates that overfitting is harmful in terms of the $L_2$ estimation error. Similar results have been shown in \cite{kohler2019over} for specifically designed overparametrized DNNs that is a linear combination of $\Omega(n^{10d^2})$ smaller neural networks, which is much more restrictive than ours.

In order to have low $L_2$ estimation errors, Theorem \ref{thm:gd} implies that the iteration number $k$ must satisfy
$(\eta\lambda_0)^{-1}\log n\lesssim k \lesssim (n\eta)^{-1}$. However, deriving a precise order of $k$, which leads to the optimal rate of convergence, could be extremely challenging. 
Alternatively, we consider the infinite-width limit of one-hidden-layer ReLU networks, i.e., directly using the NTK (\ref{ntkh}) in kernel regression. This may shed some light on the optimal stopping time for practical overparametrized neural networks. 

In kernel regression, the objective becomes
\begin{align}
\label{eqn:approx_kernel}
\min_{f\in\cN} \frac{1}{2}\sum_{i=1}^n(y_i-f(\bx_i))^2,
\end{align}
whose solution can be explicitly expressed as $h(\bx,\bX)(\bH^\infty)^{-1}\by$, by setting $\mu=0$ in (\ref{eqsolukrr}). 
However, inverting the kernel matrix can be computationally intensive. In practice, gradient-based methods are often applied to solve (\ref{eqn:approx_kernel}) \citep{raskutti2014early}.
The following theorem establishes estimation error results for the NTK estimators trained by GD, complementary to Theorem~\ref{thm:gd}. 
\begin{theorem}
\label{thm:kernel_earlystop}
Consider using GD to optimize (\ref{eqn:approx_kernel}) with a sufficiently small step size $\eta$ depending on $n$ (but not on $k$). There exists a stopping time $k^*$ depending on data, such that
\[\EE\|\hat{f}_{k^*}-f^*\|_2^2 = O\rbr{n^{-\frac{d}{2d-1}}},\] where $\hat f_{k}$ is the predictor obtained at the $k$-th iteration. Moreover, if $k\to\infty$, the interpolated estimator $\hat{f}_{\infty}$ satisfies
\[\EE\|\hat{f}_\infty-f^*\|_2^2 = \Omega(1).\]
\end{theorem}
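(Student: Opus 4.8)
The plan is to translate the gradient descent on \eqref{eqn:approx_kernel} into an explicit spectral filter acting on the NTK matrix, and then match it against the kernel ridge regression analysis of Theorem \ref{thmkrr}. First I would diagonalize $\bH^\infty = \bV\bLambda\bV^\top$ with eigenvalues $\hat\lambda_1\ge\cdots\ge\hat\lambda_n\ge 0$. Since the GD iterates on \eqref{eqn:approx_kernel} evolve as $\bu(k+1) = \bu(k) - \eta\bH^\infty(\bu(k)-\by)$ on the training residuals, one gets the closed form $\by - \bu(k) = (\bI_n - \eta\bH^\infty)^k(\by - \bu(0))$, and correspondingly $\hat f_k(\bx) = h(\bx,\bX)(\bH^\infty)^{-1}(\bI_n - (\bI_n-\eta\bH^\infty)^k)\by$ (with $\bu(0)=0$). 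The key observation is that the filter $g_k(\hat\lambda) = (1-(1-\eta\hat\lambda)^k)/\hat\lambda$ plays the role of $(\hat\lambda + \mu/n)^{-1}$ in KRR, with the effective regularization being $1/(k\eta)$: modes with $\hat\lambda \gtrsim 1/(k\eta)$ are essentially fully fit, while modes with $\hat\lambda \lesssim 1/(k\eta)$ are shrunk. So the number of iterations $k$ plays the role of the inverse of $\mu/n$ in \eqref{eqsolukrr}.

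Next I would mimic the bias--variance decomposition used for Theorem \ref{thmkrr}. Write $\hat f_k - f^* = (\hat f_k - \EE_{\bepsilon}\hat f_k) + (\EE_{\bepsilon}\hat f_k - f^*)$. The variance term is $\sigma^2$ times a trace of $g_k(\bH^\infty)^2\bH^\infty/n$-type quantity, which by the filter bound $\hat\lambda g_k(\hat\lambda)^2 \le \min\{k\eta,\,1/\hat\lambda\}$ is controlled by the effective dimension $\sum_j \min\{k\eta,\,1/\hat\lambda_j\}$; using the eigenvalue decay $\hat\lambda_j \asymp \lambda_j \asymp j^{-d/(d-1)}$ from Lemma \ref{lemeigendecay} (together with a standard concentration of empirical eigenvalues around population ones), this is $O((k\eta)^{(d-1)/(2d-1)}/n)$. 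The bias term, using $f^*\in\cN$ so that its coefficients in the $\varphi_j$-basis are square-summable against $\lambda_j^{-1}$, is bounded by $\|(\bI - \hat\lambda g_k(\hat\lambda))\text{-filtered }f^*\|^2 = O((k\eta)^{-d/(2d-1)})$ by the same source-condition argument as in Theorem \ref{thmkrr}. Balancing the two, i.e. taking $k^*\eta \asymp n^{(d-1)/(2d-1)}$ — equivalently $k^* \asymp \mu/(n\eta)$ with the $\mu$ of Theorem \ref{thmkrr} — gives $\EE\|\hat f_{k^*}-f^*\|_2^2 = O(n^{-d/(2d-1)})$. To make $k^*$ \emph{data-dependent} but valid, I would either pick the deterministic $k^*$ above, or invoke a Lepski-type / hold-out stopping rule as in \citep{raskutti2014early}; the cleanest route is to cite the early-stopping-equals-ridge equivalence and note the stated rate follows from Theorem \ref{thmkrr} via the filter comparison $|g_k(\hat\lambda) - (\hat\lambda + 1/(k\eta))^{-1}| \lesssim$ (lower order).

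For the second claim, as $k\to\infty$ we have $(1-\eta\hat\lambda)^k\to 0$ on every positive eigenvalue, so $\hat f_\infty(\bx) = h(\bx,\bX)(\bH^\infty)^{-1}\by$, the minimum-norm interpolant — exactly \eqref{eqsolukrr} with $\mu=0$. This is the zero-regularization endpoint, and I would show its risk is $\Omega(1)$ by isolating the variance contribution: $\EE\|\hat f_\infty - \EE_\bepsilon \hat f_\infty\|_2^2 = \sigma^2 \EE\big[h(\bx,\bX)(\bH^\infty)^{-1}\bH^\infty(\bH^\infty)^{-1}h(\bX,\bx)\big]/n$-type term, which after using the Mercer expansion of $h(\bx,\cdot)$ against the eigenstructure of $\bH^\infty$ does not vanish — indeed with the polynomial eigenvalue decay of Lemma \ref{lemeigendecay} the small empirical eigenvalues $\hat\lambda_n$ blow up the inverse and the interpolant's variance stays bounded below by a constant. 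This parallels the $k=\tilde\Omega((\log n)/(\eta\lambda_0))$ branch of Theorem \ref{thm:gd}.

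The main obstacle I anticipate is the transfer from population eigenvalues $\lambda_j$ (Lemma \ref{lemeigendecay}) to the empirical NTK matrix eigenvalues $\hat\lambda_j$ with enough uniformity to evaluate both the effective dimension sum and the source-condition bias at the \emph{same} cutoff scale — i.e. controlling $\hat\lambda_j$ not just on average but down to the smallest index $j=n$, and handling the dependence between the eigenfunction evaluations $\varphi_j(\bx_i)$ and the matrix $\bH^\infty$. The cleanest fix is to borrow the integral-operator concentration machinery standard in \citep{yuan2016minimax,raskutti2014early}; everything else is a filter-function bookkeeping exercise already carried out in the proof of Theorem \ref{thmkrr}.
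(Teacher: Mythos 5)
Your plan is correct in spirit and lands on the same underlying mechanics the paper relies on: GD on \eqref{eqn:approx_kernel} acts as a spectral filter $g_k(\hat\lambda)=(1-(1-\eta\hat\lambda)^k)/\hat\lambda$ with effective regularization $1/(k\eta)$, the relevant scale is set by the eigen-decay $2\nu=d/(d-1)$ from Lemma~\ref{lemeigendecay}, and balancing bias against effective dimension gives $n^{-d/(2d-1)}$. The paper, however, does not re-derive any of this: it simply cites Corollary~1 of Raskutti--Wainwright--Yu (Lemma~\ref{lemma:early} in the appendix) and plugs in $2\nu=d/(d-1)$ with running step-size sum $\alpha_k=k\eta$. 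That one citation simultaneously (i) supplies the concrete data-dependent stopping rule $k^*$ of equation~\eqref{eqn:kstar} via the local empirical Rademacher complexity $\hat{\cR}_{\bH^\infty}$, which is stated directly in terms of the empirical eigenvalues $\hat\lambda_j$ and therefore sidesteps entirely the population-to-empirical eigenvalue transfer you flag as the main obstacle, and (ii) gives the lower bound $\EE\|\hat f_k - f^*\|_2^2\ge\tfrac{\sigma^2}{4}\min\{1,(\alpha_k)^{1/(2\nu)}/n\}$, so $\Omega(1)$ as $k\to\infty$ is immediate. Two minor notes on your sketch: your proposed data-dependent $k^*$ ("Lepski/hold-out" or a deterministic balance point) is not the one the theorem actually uses; and your lower-bound heuristic ("small empirical eigenvalues blow up the inverse") is not quite how the argument goes — the RWY lower bound tracks the accumulated variance over increasingly many modes as $\alpha_k$ grows, rather than a divergence of $(\bH^\infty)^{-1}$. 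Both points evaporate once you cite the lemma, as you hint at the end; the gap between your proposal and the paper is essentially that the paper makes that citation the entire proof.
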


To specify the optimal stopping time $k^*$ in Theorem \ref{thm:kernel_earlystop}, we first introduce the local empirical Rademacher complexity defined as
\[
\hat{\cR}_{\bH^\infty}(\varepsilon) := \rbr{ \frac{1}{n}
  \sum_{i=1}^n \min \big \{ {\hat{\lambda}_i}/{n}, \varepsilon^2 \big \}}^{1/2},
\]
which relies on the eigenvalues $\hat{\lambda}_1 \geq
\cdots \geq \hat{\lambda}_n > 0$ of  $\bH^\infty$. 
Then, the stopping time $k^*$ is defined to be
\begin{align}
\label{eqn:kstar}
k^* & := \argmin \biggr \{ k \in \NN \, \mid
\hat{\cR}_{\bH^\infty} \big(\frac{1}{\sqrt{\eta{k}}}\big) > \frac{1}{2 e \sigma
\eta {k}} \biggr \} - 1.
\end{align}
In essence, the optimal stopping time decreases with the noise level $\sigma$ and increases with the model complexity, measured by the eigenvalues of $\bH^\infty$.

\begin{remark}($k^*$ for neural networks)
To derive the order of $k^*$ for overparametrized neural network, a sharp characterization of the eigen-distribution of $\bH^\infty$ is needed. To the best of the authors' knowledge, no such results are available yet. Even though as $m\to \infty$, neural network resembles its linearization (NTK), it doesn't necessarily mean such a stopping rule can be easily derived for finite-width neural networks. In general, theoretical guarantees of an early stopping rule for training overparametrized neural networks is challenging and left for future work. 
\end{remark}

Besides early stopping, explicit regularizations are usually employed in deep learning models to balance the bias-variance trade-off and prevent overfitting, for example, weight decay \citep{krogh1992simple}, batch normalization \citep{ioffe2015batch}, dropout \citep{srivastava2014dropout}, etc., to prevent overfitting. 
In the next section, we investigate the $\ell_2$ regularization \citep{bilgic2014fast, van2017l2, phaisangittisagul2016analysis} and demonstrate its effectiveness in the nonparametric regression setting.

\section{$\ell_2$-REGULARIZED GRADIENT DESCENT FOR NOISY DATA}
\label{secwithp}
Without any regularization, GD overfits the training data and the estimation error is bounded away from zero. Instead, we propose using the $\ell_2$-regularized gradient descent defined as
\begin{align}\label{modifiedgd}
    {\rm vec}(\bW_D(k+1)) =& {\rm vec}(\bW_D(k)) - \eta_1 \bZ_D(k)(\bu_D(k)-\by) \nonumber\\
    &- \eta_2\mu{\rm vec}(\bW_D(k)),
\end{align}
where $\eta_1,\eta_2>0$ are step sizes, and $\mu>0$ is a tuning parameter. It can be easily seen that \eqref{modifiedgd} is the GD update rule on the following loss function
\begin{align}\label{eqn:phi1}
    \Phi_1(\bW) = \frac{1}{2}\|\by-\bu\|_2^2 + \frac{\mu}{2}\|{\rm vec}(\bW)\|_2^2.
\end{align}

The $\ell_2$ regularization has long been used in practical training neural networks and is equivalent to  ``weight decay'' \citep{krogh1992simple} when using GD \citep{loshchilov2017decoupled}. 
In the NTK literature, $\ell_2$ regularization is also considered as a way to improve generalization \citep{wei2019regularization, hu2020simple}. However, we are among the first to directly analyze the $\ell_2$-regularized GD trajectories of overparametrized neural networks and show its connection to kernel ridge regression using NTK.
In the rest of this work, we use subscript $D$ to denote the variables under the regularized GD (\ref{modifiedgd}), e.g., $\bu_D(k)$ for the predictions at the $k$-th iteration.

\begin{theorem}\label{thm:withPenalty}
Let $\lambda_0$ be the largest number such that with probability at least $1-\delta_n$, $\lambda_{\min}(\bH^\infty)\geq \lambda_0$, and $\delta_n \rightarrow 0$ as $n$ goes to infinity\footnote{Potential dependency of $\lambda_0$ on $n$ is suppressed for notational simplicity.}.  For sufficiently large $n$, suppose $\mu \asymp n^{\frac{d-1}{2d-1}}$, $\eta_1 \asymp \eta_2 = o(n^{-\frac{3d-1}{2d-1}})$, $\tau=O(1)$, $m\geq \tau^{-2}{\rm ploy}(n,\lambda_0^{-1})$, and the iteration number $k$ satisfies 
$\log \rbr{{\rm ploy}_1(n,\tau,1/\lambda_0)} \lesssim \eta_2\mu k \lesssim \log \rbr{{\rm ploy}_2(\tau,1/n,\sqrt{m})}.$
Then we have
\begin{gather} 
    \|\bu_D(k) - \bH^\infty(C\mu I+\bH^\infty)^{-1}\by\|_2 =  O_{\mathbb{P}}\left(\sqrt{n}(1-\eta_2 \mu)^k \right),\label{thmwithpstate1}\\
    \|{\rm vec}(\bW_D(k)) - (1-\eta_2\mu)^{k}{\rm vec}(\bW_D(0))\|_2 =  O_{\mathbb{P}}(1),\label{thmwithpstate2}
\end{gather}
for some constant $C>0$. 
Moreover, during the training process, the mean squared loss satisfies \begin{align}
\label{eqn:Phi}
    \Phi(\bW_D(k))/n \le  (1-\eta_2 \mu)^k\Phi(\bW_D(0))/n + O_{\PP}(1).
\end{align}
\end{theorem}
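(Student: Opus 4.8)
The plan is to track the regularized GD trajectory \eqref{modifiedgd} and show it converges (geometrically in $k$) to the kernel ridge regression solution, while controlling the deviation of the activation pattern matrices $\bZ_D(k)$ from their initialization $\bZ_D(0)$, so that the dynamics stay close to a linear (kernelized) system. First I would set up the linearized recursion: writing $\bu_D(k) = \bZ_D(k)^\top\,\mathrm{vec}(\bW_D(k))$ and substituting \eqref{modifiedgd}, I obtain a recursion of the form $\bu_D(k+1) = \bu_D(k) - \eta_1\bH_D(k)(\bu_D(k)-\by) - \eta_2\mu\,\bu_D(k) + (\text{error})$, where $\bH_D(k) = \bZ_D(k)^\top\bZ_D(k)$ and the error term accounts for the change in $\bZ_D$ across one step. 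The fixed point of the idealized recursion (with $\bH_D(k)$ replaced by $\bH^\infty$ and no error) is exactly $\bH^\infty(C\mu\bI+\bH^\infty)^{-1}\by$ with $C = \eta_1/(\eta_1 + \eta_2\mu)$ or a similar ratio, which explains the constant $C$ in \eqref{thmwithpstate1}. The contraction factor is $(1-\eta_2\mu)$ up to the eigenvalues of $\bH^\infty$, giving the geometric term $(1-\eta_2\mu)^k$ in both \eqref{thmwithpstate1} and \eqref{thmwithpstate2}.

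Next I would establish, by an induction on $k$ in the spirit of Theorem 5.1 of \cite{arora2019fine}, that over the admissible window $\log(\mathrm{poly}_1) \lesssim \eta_2\mu k \lesssim \log(\mathrm{poly}_2)$ the weights move only a bounded amount: $\|\mathrm{vec}(\bW_D(k)) - (1-\eta_2\mu)^k\mathrm{vec}(\bW_D(0))\|_2 = O_{\PP}(1)$, which is precisely \eqref{thmwithpstate2}. The decaying $(1-\eta_2\mu)^k$ factor on the initialization is what lets the bound be $O_{\PP}(1)$ rather than growing — the $\ell_2$ penalty shrinks the large random initial weights. Given this bounded movement and $m \geq \tau^{-2}\mathrm{poly}(n,\lambda_0^{-1})$, a standard perturbation argument (as in the NTK literature) shows $\|\bH_D(k) - \bH^\infty\|$ and $\|\bZ_D(k) - \bZ_D(0)\|$ are small with high probability, so the error terms in the recursion are negligible, closing the induction and yielding \eqref{thmwithpstate1}. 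The lower bound on $k$ (the $\log(\mathrm{poly}_1)$ condition) ensures the geometric term has actually decayed enough; the upper bound on $k$ (the $\log(\mathrm{poly}_2)$ condition) ensures the accumulated perturbation error has not yet blown the trajectory off course.

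Finally, for the training-loss bound \eqref{eqn:Phi}, I would write $\Phi(\bW_D(k))/n = \tfrac{1}{2n}\|\by - \bu_D(k)\|_2^2$ and use the triangle inequality together with \eqref{thmwithpstate1}: $\|\by - \bu_D(k)\|_2 \leq \|\by - \bH^\infty(C\mu\bI+\bH^\infty)^{-1}\by\|_2 + O_{\PP}(\sqrt{n}(1-\eta_2\mu)^k)$. The first term is $O_{\PP}(\sqrt n)$ because the ridge residual $(I - \bH^\infty(C\mu\bI+\bH^\infty)^{-1})\by = C\mu(C\mu\bI+\bH^\infty)^{-1}\by$ has squared norm controlled via the eigenvalue decay of $\bH^\infty$ (Lemma \ref{lemeigendecay}) and $\by = f^*(\bX) + \bepsilon$ with $\|\bepsilon\|_2^2 = O_{\PP}(n)$; squaring and dividing by $n$ gives the $(1-\eta_2\mu)^k\,\Phi(\bW_D(0))/n + O_{\PP}(1)$ form, where $\Phi(\bW_D(0))/n$ absorbs the $k=0$ value of the geometric term. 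The main obstacle I anticipate is the induction in the second step: unlike the unregularized case, the initialization term $(1-\eta_2\mu)^k\mathrm{vec}(\bW_D(0))$ is moving, so the "reference trajectory" against which one measures the perturbation is itself time-dependent, and one must carefully verify that the activation patterns $\mathbb{I}_{r,i}(k)$ do not flip despite this drift — balancing the shrinkage of the initialization against the per-step accumulation of error over a number of iterations that is itself only logarithmic in the problem parameters.
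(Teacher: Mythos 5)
Your high-level blueprint — linearize the regularized dynamics, identify the fixed point of the idealized recursion as a kernel ridge regression (KRR) solution, control activation-pattern flips against the shrinking reference $(1-\eta_2\mu)^k\bw_r(0)$, and exploit the $(1-\eta_2\mu)$ contraction — is the same as the paper's. You also correctly identify the technical crux: the reference trajectory is now time-dependent because of the $(1-\eta_2\mu)^k$ factor, and the admissible window on $k$ balances geometric decay against the accumulated $\bZ/\bH$ perturbation. One cosmetic point: the fixed point of $\bu\mapsto(1-\eta_2\mu)\bu-\eta_1\bH^\infty(\bu-\by)$ is $\bH^\infty((\eta_2/\eta_1)\mu I+\bH^\infty)^{-1}\by$, so $C=\eta_2/\eta_1$ (constant since $\eta_1\asymp\eta_2$), not $\eta_1/(\eta_1+\eta_2\mu)$; this doesn't matter since the claim only asserts some constant.

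The real gap is in your route to \eqref{thmwithpstate2}. An Arora-et-al.-style induction (Lemma~C.1 of \cite{arora2019fine}, analogous to \eqref{ineqxiaowjin} here) gives a \emph{per-coordinate} bound $\|\bw_{D,r}(k)-(1-\eta_2\mu)^k\bw_{D,r}(0)\|_2\le R_1=O\bigl(\tfrac{\eta_1 n}{\eta_2\mu\sqrt m}\bigr)$, and summing in Frobenius norm gives only $\sqrt m\,R_1=O(n/\mu)=O(n^{d/(2d-1)})$, which is \emph{not} $O_{\PP}(1)$; it grows with $n$. The per-coordinate bound is what you need to control the activation flips and hence $\|\bZ_D(k)-\bZ_D(0)\|$, $\|\bH_D(k)-\bH^\infty\|$, but it does not yield \eqref{thmwithpstate2}. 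To get $O_{\PP}(1)$ you must decompose ${\rm vec}(\bW_D(k))-(1-\eta_2\mu)^k{\rm vec}(\bW_D(0))$ more finely — the paper's $E_1,\dots,E_4,T_5$ decomposition — and observe that the dominant surviving term $T_5$ satisfies $\|T_5\|_2^2\le\by^\top(C\mu I+\bH^\infty)^{-1}\bH^\infty(C\mu I+\bH^\infty)^{-1}\by=\|\hat f\|_{\cN}^2$, the squared RKHS norm of the KRR estimator, which is $O_{\PP}(1)$ by Theorem~\ref{thmkrr}. That is the reason the bound is data-independent (the paper explicitly contrasts this with the data-dependent radius in \cite{arora2019fine}), and it is an ingredient your proposal does not supply. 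In particular you cannot prove \eqref{thmwithpstate2} \emph{before} \eqref{thmwithpstate1}: the $O_{\PP}(1)$ Frobenius bound uses the same KRR machinery that drives \eqref{thmwithpstate1}, while the cruder per-coordinate bound (proved first, from a loss-decay iteration like \eqref{thmwithpymD3}, which is also how one should get \eqref{eqn:Phi} rather than deriving it from \eqref{thmwithpstate1}) is what feeds the perturbation argument. Reordering the dependencies and importing Theorem~\ref{thmkrr} are the two missing pieces.
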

In the above theorem, three upper bounds are provided. In \eqref{thmwithpstate1}, we provide an upper bound on the difference between the prediction using one-hidden-layer neural networks and that obtained by \eqref{eqsolukrr}, which converges to zero as the sample size goes to infinity. This indicates that the $\ell_2$ penalty on neural network weights has similar effects to penalizing the RKHS norm as in (\ref{regpro}). 
Combining \eqref{thmwithpstate1} and Theorem \ref{thmkrr}, we can conclude that the $\ell_2$-regularized one-hidden-layer ReLU neural network recovers the true function on the training data points $\bx_1,\ldots,\bx_n$.

In \eqref{thmwithpstate2}, we provide an upper bound on the distance between the weight matrix at the $k$-th iteration and the  ``decayed''  initialization $\bW_D(0)$. Under the conditions in Theorem \ref{thm:withPenalty}, 
their distance measured in Frobenius norm is bounded by some constant depending on the underlying true function.
Unlike the results in \cite{arora2019fine}, the upper bound presented in \eqref{thmwithpstate2} {does not} depend on data. Therefore, as long as the underlying function is within the RKHS generated by NTK, the total movement of all the weights is not large even if the data observed are corrupted by noises.

In (\ref{eqn:Phi}), we give a characterization of how the training objective decreases over iterations, which is reminiscent of Theorem 4.1 in \cite{du2018gradient}. Unlike the results without regularization, our $\ell_2$-regularized objective is not expected to converge to zero, i.e., no data interpolation, which is essential to ensure the best trade-off between the bias and variance.
\begin{remark}\label{rmk:k}
(More iterations)
The required iteration number $k$ in Theorem \ref{thm:withPenalty} is approximately $(\eta_2\mu)^{-1}$, up to a logarithmic term. We believe the upper bound on $k$ is not necessary and may be relaxed.
The stated results are expected to hold if $k\to\infty$ and we conjecture that the output will converge to the optimal solution of kernel ridge regression as in (\ref{eqsolukrr}). Simulation results in Section \ref{sec:simulation} support our conjecture and we leave the technical proof for future work.
\end{remark}

Next, we extend the results in Theorem \ref{thm:withPenalty} and establish the $L_2$ convergence rate for neural networks trained with $\ell_2$-regularized GD.
\begin{theorem}\label{thm:withPenaltyl2small}
Suppose the assumptions of Theorem \ref{thm:withPenalty} hold. Then we have
\[\|f_{\bW_D(k),\ba}-f^*\|_{2}^2 = O_{\mathbb{P}}(n^{-\frac{d}{2d-1}}).\]
\end{theorem}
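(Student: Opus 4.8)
The plan is to show that the network produced by $\ell_2$-regularized GD is $L_2$-close to a kernel ridge regression estimator whose convergence rate is already supplied by Theorem~\ref{thmkrr}. Let $C>0$ be the constant appearing in Theorem~\ref{thm:withPenalty} and set $\widehat f_{C\mu}(\bx):=h(\bx,\bX)(\bH^\infty+C\mu\bI_n)^{-1}\by$, i.e. the KRR solution \eqref{eqsolukrr} run with regularization parameter $C\mu$. Since $C$ is a fixed constant and $\mu\asymp n^{(d-1)/(2d-1)}$, we still have $C\mu\asymp n^{(d-1)/(2d-1)}$, so Theorem~\ref{thmkrr} applies verbatim (recall $f^*\in\cN$) and gives $\|\widehat f_{C\mu}-f^*\|_2^2=O_{\mathbb{P}}(n^{-d/(2d-1)})$; by Lemma~\ref{lemeigendecay} (where $\lambda_j\asymp j^{-d/(d-1)}$ corresponds to $2\nu=d/(d-1)$) this is the minimax-optimal rate. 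It therefore suffices to prove $\|f_{\bW_D(k),\ba}-\widehat f_{C\mu}\|_2=o_{\mathbb{P}}(n^{-d/(4d-2)})$ and then apply the triangle inequality, noting $d/(4d-2)=\tfrac12\cdot d/(2d-1)$.

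To bound $\|f_{\bW_D(k),\ba}-\widehat f_{C\mu}\|_2$, unroll the update \eqref{modifiedgd},
\[
\mathrm{vec}(\bW_D(k))=(1-\eta_2\mu)^k\,\mathrm{vec}(\bW_D(0))-\eta_1\sum_{s=0}^{k-1}(1-\eta_2\mu)^{k-1-s}\bZ_D(s)(\bu_D(s)-\by),
\]
and observe the conceptual core: modulo the residual term, which has Frobenius norm $O_{\mathbb{P}}(1)$ by \eqref{thmwithpstate2}, the weights are merely multiplied by the positive scalar $(1-\eta_2\mu)^k$, which leaves every ReLU activation pattern $\mathbb{I}\{\bw_r(k)^\top\bx\ge0\}$ unchanged. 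Consequently, for $m$ sufficiently large, the network coincides uniformly over $\bx\in\mathbb{S}^{d-1}$ with its linearization at initialization up to a small error,
\[
f_{\bW_D(k),\ba}(\bx)=(1-\eta_2\mu)^k f_{\bW_D(0),\ba}(\bx)+h(\bx,\bX)\,\mathbf{c}+E(\bx),
\]
where $\mathbf{c}:=-\eta_1\sum_{s=0}^{k-1}(1-\eta_2\mu)^{k-1-s}(\bu_D(s)-\by)$ and $E$ absorbs the finite-width deviations: the small fraction of neurons whose activation pattern at $\bx$ flips, together with the standard estimates $\bZ_D(s)\approx\bZ_D(0)$ for $s\le k$, $\bZ_D(0)^\top\bZ_D(0)=\bH(0)\approx\bH^\infty$, and the fact that $h(\bx,\bX)$ approximates the inner products between the initialization feature map at $\bx$ and at the training points. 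These are precisely the estimates underlying the proof of Theorem~\ref{thm:withPenalty}, now propagated uniformly in $\bx$; choosing the degree of the polynomial in the hypothesis $m\ge\tau^{-2}{\rm poly}(n,\lambda_0^{-1})$ large enough makes $\|E\|_\infty=o_{\mathbb{P}}(n^{-d/(4d-2)})$.

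Next I would identify $h(\bx,\bX)\,\mathbf{c}$ with $\widehat f_{C\mu}$. Evaluating the last display at $\bx_1,\dots,\bx_n$ and comparing with \eqref{thmwithpstate1} shows $\bH^\infty\mathbf{c}=\bH^\infty(\bH^\infty+C\mu\bI_n)^{-1}\by+O_{\mathbb{P}}(\sqrt n\,(1-\eta_2\mu)^k)$; since $\lambda_{\min}(\bH^\infty)\ge\lambda_0>0$ with probability $1-\delta_n\to1$, the matrix $\bH^\infty$ is invertible, so $\mathbf{c}=(\bH^\infty+C\mu\bI_n)^{-1}\by+O_{\mathbb{P}}(\sqrt n\,(1-\eta_2\mu)^k/\lambda_0)$ and hence (using $\|h(\bx,\bX)\|_2\le\sqrt n/2$) $\sup_{\bx}|h(\bx,\bX)\,\mathbf{c}-\widehat f_{C\mu}(\bx)|=O_{\mathbb{P}}(n(1-\eta_2\mu)^k/\lambda_0)$. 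Also $\|f_{\bW_D(0),\ba}\|_2=O_{\mathbb{P}}(1)$, since its square has expectation $\tfrac{\tau^2}{2}\,\mathrm{vol}(\mathbb{S}^{d-1})$ with $\tau=O(1)$. Finally, the hypothesis $\eta_2\mu k\gtrsim\log{\rm poly}_1(n,\tau,1/\lambda_0)$ forces $(1-\eta_2\mu)^k\le{\rm poly}_1(n,\tau,1/\lambda_0)^{-1}$, so taking the degree of ${\rm poly}_1$ large enough makes every residual factor $(1-\eta_2\mu)^k$ above contribute $o_{\mathbb{P}}(n^{-d/(4d-2)})$. Collecting the three displays yields $\|f_{\bW_D(k),\ba}-\widehat f_{C\mu}\|_2=o_{\mathbb{P}}(n^{-d/(4d-2)})$, which combined with the rate for $\widehat f_{C\mu}$ completes the proof.

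The step I expect to be the main obstacle is the uniform-in-$\bx$ passage from the training-point guarantees of Theorem~\ref{thm:withPenalty} to a genuine function-space ($L_2$) statement, i.e. controlling $\|E\|_\infty$ over all of $\mathbb{S}^{d-1}$ rather than at $n$ points. This is delicate precisely because weight decay drives $\mathrm{vec}(\bW_D(k))$ far from $\mathrm{vec}(\bW_D(0))$ in norm (it shrinks the weights toward $0$), so a naive perturbation bound on activation patterns is useless; the argument must exploit that this large displacement is, to leading order, a common positive rescaling that preserves all activation patterns, with only an $O_{\mathbb{P}}(1)$-norm residual genuinely perturbing the weights. Making this decomposition precise and carrying it through the network output uniformly over the sphere is the technical heart of the proof.
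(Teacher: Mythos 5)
Your proposal mirrors the paper's own proof: both decompose $f_{\bW_D(k),\ba}-f^*$ via the triangle inequality through the kernel ridge regression estimator, invoke Theorem~\ref{thmkrr} for the KRR rate, and then bound the network-to-KRR gap by comparing the network to its linearization $\bz_0(\cdot)^\top\mathrm{vec}(\bW_D(k))$ at initialization, using the same ingredients (per-neuron displacement bound from the decayed initialization, activation-flip concentration, and the estimates $\bZ(l)\approx\bZ(0)$, $\bH(0)\approx\bH^\infty$, $\bz_0(\cdot)^\top\bZ(0)\approx h(\cdot,\bX)$). Your closing observation — that the large norm displacement of the weights under weight decay is, to leading order, a sign-preserving rescaling so that only the $O_{\mathbb P}(1/\sqrt m)$-per-neuron residual can flip activations — is exactly the mechanism behind the paper's bound on $\Delta_1$ via the events $B_{D,r}(\bx)=\{|(1-\eta_2\mu)^k\bw_{D,r}(0)^\top\bx|\le R_1\}$, and correctly pinpoints why the upper bound on $k$ (equivalently, a lower bound on $(1-\eta_2\mu)^k$) is needed.
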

The above theorem states that with probability tending to one, the neural network estimator can still recover the true function with the optimal convergence rate of $n^{-\frac{d}{2(2d-1)}}$, demonstrating the effectiveness of the $\ell_2$ regularization for noisy data. 
Unlike other optimality results established for neural networks \citep{schmidt2017nonparametric, bauer2019deep}, 
our convergence rate result applies to overparametrized networks and is obtainable using the $\ell_2$-regularized GD.

\section{NUMERICAL STUDIES}
\label{sec:simulation}
In practice, regularization techniques are widely used in training deep learning models. 
Among others, \cite{van2017l2, caruana2001overfitting, prechelt1998early,zhang2016understanding,lewkowycz2020training} have investigated the effectiveness of $\ell_2$ regularization and early stopping in training DNNs, and comprehensive comparisons have been made empirically against other regularization techniques. 
Therefore, one major goal of this section is not to show state-of-the-art performance using $\ell_2$ regularization, but to use it as an example to illustrate, from a nonparametric perspective, the necessity of regularization in training overparametrized neural networks with GD. Another goal is to demonstrate the robustness of our theory when some underlying assumptions are violated, e.g., one hidden layer, ReLU activation function and data on a sphere, etc. 
\begin{figure*}[!htpb]
  \centering
  \subfigure[$f^*_1$]{
    \includegraphics[scale = 0.35]{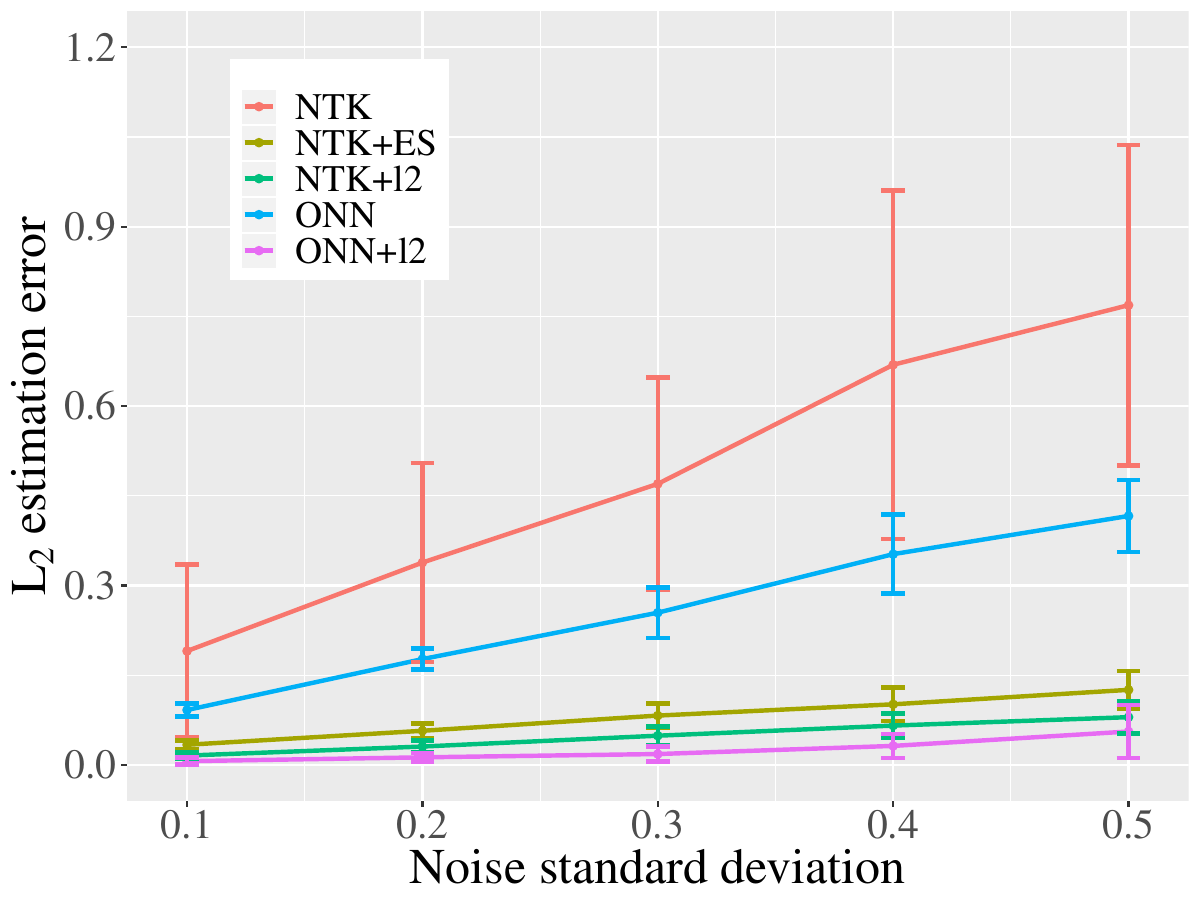}  
    \label{fig:lossnoif1}
     }\hspace{0.4cm}
    \subfigure[$f^*_2$]{
    \includegraphics[scale = 0.35]{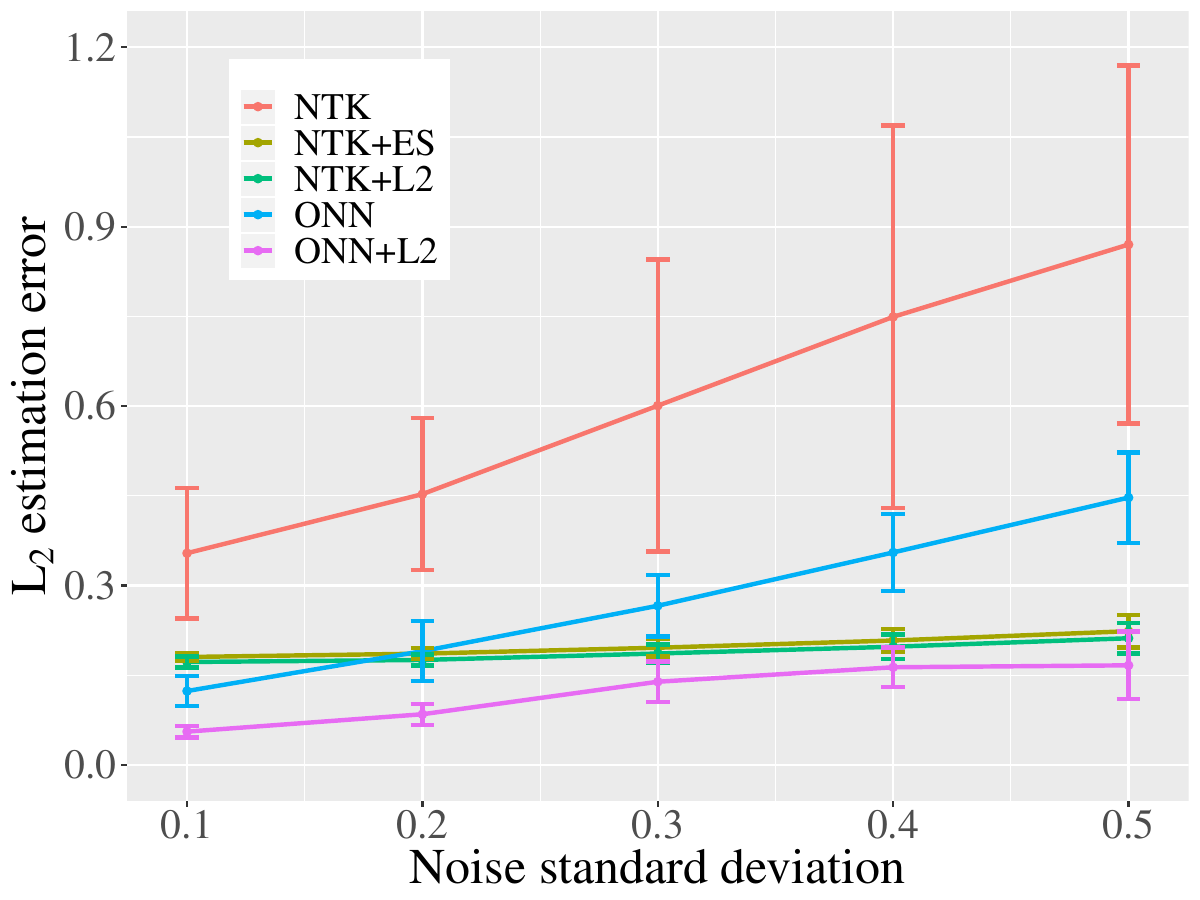}  
    \label{fig:lossnoif2}
  }
  \caption{The $L_2$ estimation errors are shown for all methods vs. $\sigma$, with their standard deviations plotted as vertical bars. 
  Similarly for both $f^*_1$ and $f^*_2$, we observe that NTK and ONN do not recover the true function well. Early stopping and $\ell_2$ regularization perform similarly for NTK, especially for $f^*_2$. ONN$+\ell_2$ performs the best in both cases.}
  \label{fig:lossnoi}
\end{figure*}

Specifically, we consider NTK without regularization (NTK), NTK with early stopping\footnote{As specified in Theorem \ref{thm:kernel_earlystop}, the optimal stopping time $k^*$ in (\ref{eqn:kstar}) depends on $\sigma$, which is to be estimated from data. In our simulation, we directly use the true value. The GD algorithm can found in Appendix \ref{sec:numerical}} (NTK+ES), NTK with $\ell_2$ regularization (NTK+$\ell_2$), overparametrized neural network with and without $\ell_2$ regularization, denoted as ONN and ONN$+\ell_2$, respectively. 
For ONN, we use two-hidden-layer ReLU neural networks and $m=500$ for each layer.
To train the neural networks, instead of GD, we consider the more popular RMSProp optimizer \citep{hinton2012neural} with the default setting. For ONN$+\ell_2$ and NTK$+\ell_2$, the tuning parameter $\mu$ is selected by cross-validation. 

\subsection{Simulated Data}
Consider the $d=2$ case where the training data points $\bx_1,\ldots,\bx_n$ are i.i.d. sampled from unif$([-1,1]^2)$. We set $n = 100$ and let noises follow $N(0, \sigma^2)$. Two target functions are considered: $f^*_1(\bx) = 0$ and $f^*_2(\bx) = \bx^\top\bx$. 
The $L_2$ estimation error is approximated using a noiseless test dataset $\{(\bar{\bx}_i, f^*(\bar{\bx}_i))\}_{i=1}^{1000}$ where $\bar{\bx}_i$'s are new samples i.i.d. from unif$([-1,1]^2)$. 
We choose $\sigma=0.1,0.2,...,0.5$ and for each $\sigma$ value, $100$ replications are run to estimate the mean and standard deviation of the $L_2$ estimation error. Results are presented in Figure \ref{fig:lossnoi}.
More details and results can be found in Appendix \ref{sec:numerical}.

\subsection{Real Data}
To showcase our results on the $L_2$ estimation, an ideal dataset is one that can be well-fitted by neural networks so that we can treat it as noiseless and then manually inject random noises. 
Inspired by the numerical studies in \cite{hu2020simple}, we consider the MNIST dataset (digits 5 vs. 8 relabeled as $-1$ and $1$), where the test accuracy can reach over 99\% by shallow fully connected neural networks \citep{lecun1998gradient}.
Even though the dataset is for classification, we can treat the labels as continuous and learn the true function under the proposed regression setting. 
We use $\by^*$ to denote the true labels and manually add noises $\bepsilon$ to the training data, where each element of $\bepsilon$ follows $N(0,\sigma^2)$ independently. The perturbed labels are denoted by $\by = \by^* +\bepsilon$.
By gradually increase $\sigma$, we investigate how ONN and ONN$+\ell_2$ perform under the additive label noises setting.

\begin{remark}\label{rmk:noise}(Additive label noises)
To manually inject noises to classification data, many works consider replacing part of the labels by random labels \citep{zhang2016understanding,arora2019fine}. 
However, such noises are not i.i.d. and cannot be applied to the regression setting. 
Similar additive label noises are also considered in \cite{hu2020simple}. 
\end{remark} 

The training dataset contains $n = 11272$ vectorized images of dimension $d = 784$. The test dataset size is 1866. 
For ONN$+\ell_2$, our training objective function is $\Phi_1$ as in (\ref{eqn:phi1}) and setting $\mu=0$ corresponds to the objective function of training ONN. On test dataset, which is \textit{not contaminated} by noises, we use the sign of the output for classification and calculate the misclassification rate as a measure of estimation performance. To be more specific, a test image $\bar{\bx}$ is classified as label 8 if $\hat{f}(\bar{\bx})\ge 0$, and label 5 if $\hat{f}(\bar{\bx})< 0$, where $\hat f$ is the neural network estimator.
The misclassification rate is the percentage of incorrect classifications on the test images.
We choose $\sigma=0,0.25,...,1.5$ and for each $\sigma$ value, $100$ replications are run to estimate the mean and standard deviation of the test misclassification rate. 
How the training root mean square error (RMSE) and test misclassification rate evolve during training when $\sigma=1$ for ONN and ONN$+\ell_2$ is also investigated.
The results are reported in Figure \ref{fig:mnist}. 
More details and results can be found in Appendix \ref{sec:numerical}.

\begin{figure*}[!htpb]
  \centering
  \subfigure[]{
    \includegraphics[scale = 0.35]{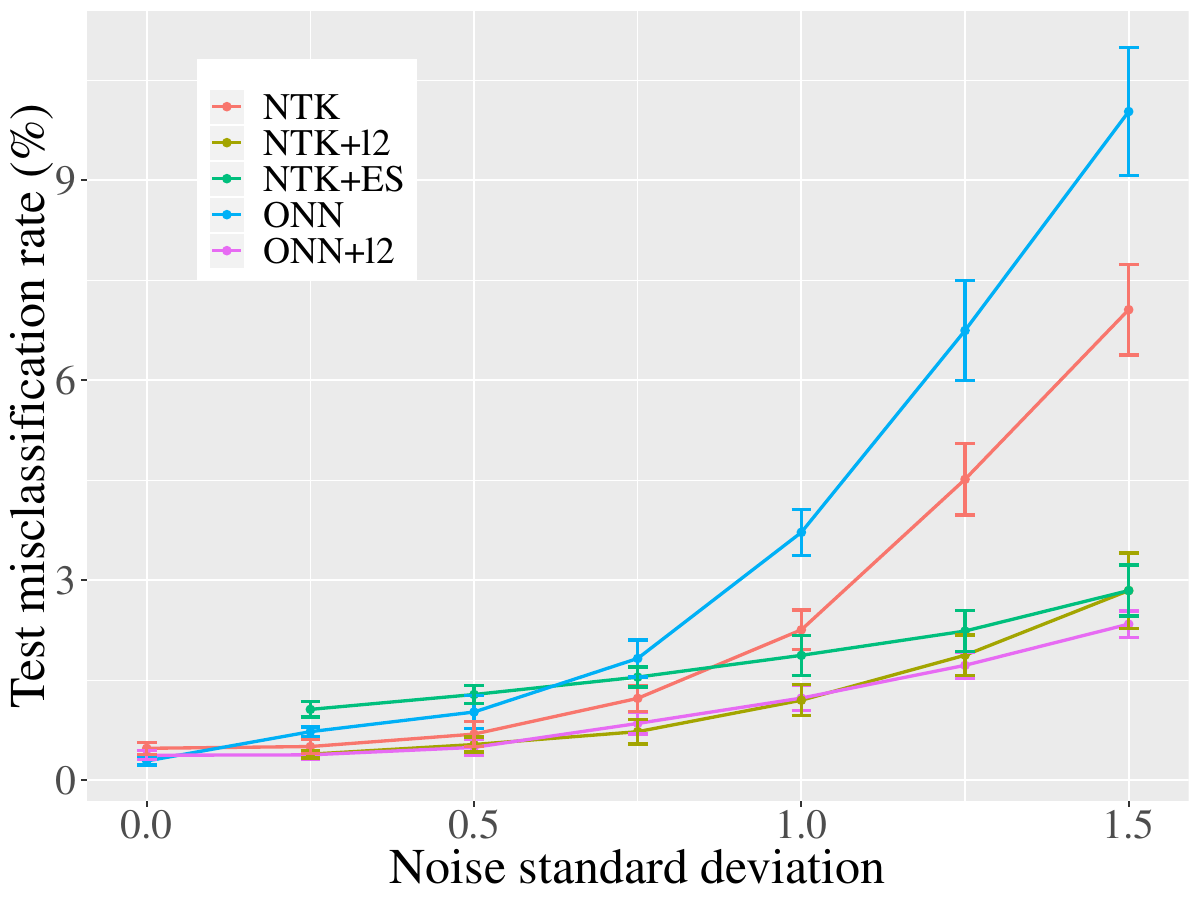}  
    \label{fig:mnist_acc}
    }\hspace{0.4cm}
    \subfigure[]{
    \includegraphics[scale = 0.35]{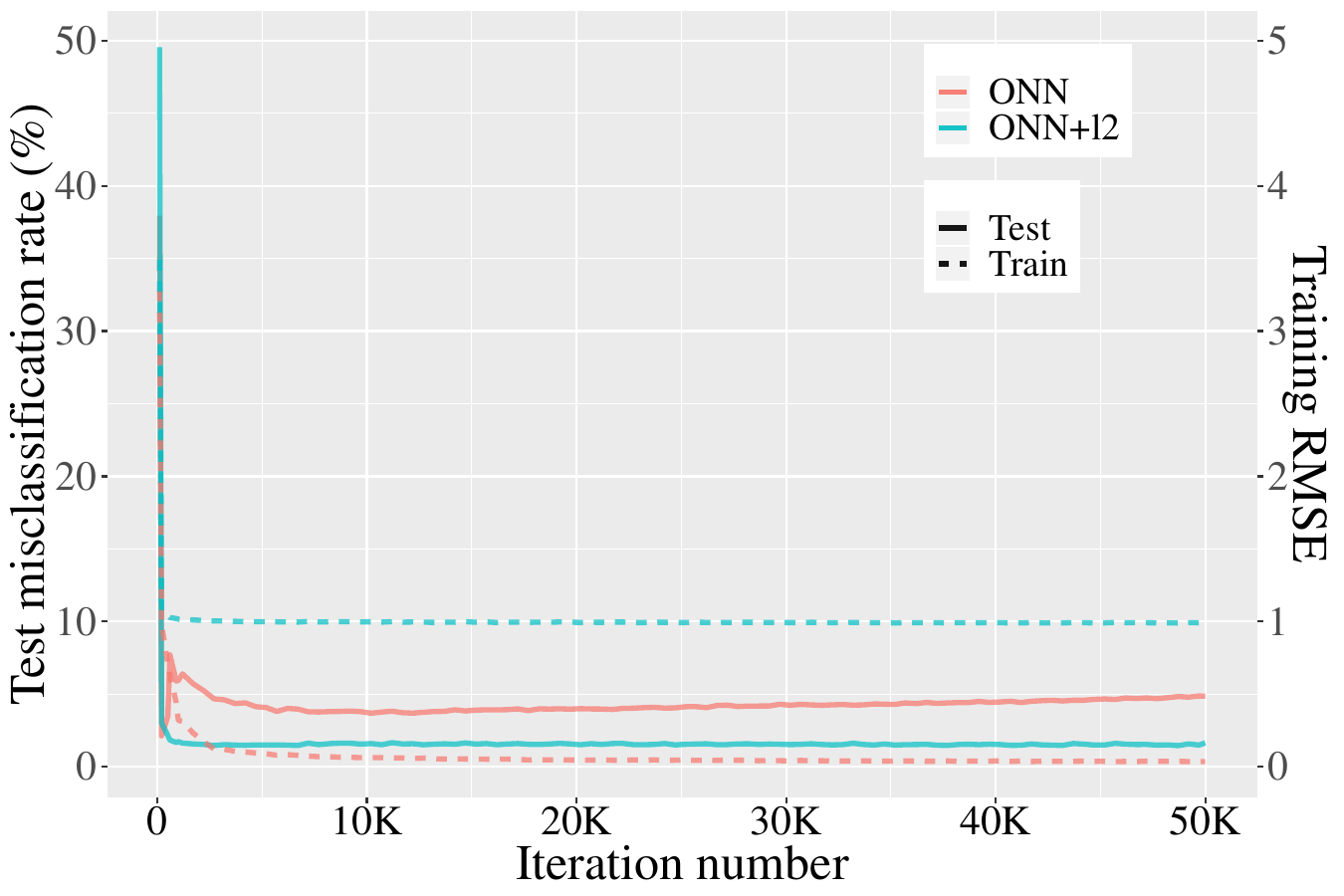} 
    \label{fig:mnist_iter}
  }
  \caption{
  Figure (a) shows the test misclassification rates for all methods vs. $\sigma$ with their standard deviations plotted as vertical bars.
    NTK+ES for $\sigma=0$ is omitted since $k^*$ is not well-defined when $\sigma=0$ and NTK+ES in this case should be the same as NTK, i.e. $k^*=\infty$. 
    As $\sigma$ increases, all misclassification rates increase but NTK$+\ell_2$ and ONN$+\ell_2$ perform significantly better than NTK and ONN with smaller misclassification rate and better stability, i.e., the standard deviation is smaller. 
    The NTK+ES is the green line and it performs the worst when $\sigma\le 0.5$ but better than NTK and ONN when $\sigma\ge 1$.
  Figure (b) shows how the training RMSE and test misclassification rate evolve across iterations for ONN and ONN$+\ell_2$ when $\sigma=1$. 
  For both methods, the training RMSEs decrease fast in the first 1K iterations. 
  However, as the ONN training RMSE flattens after 10K iterations, its test misclassification rate goes up while that for ONN$+\ell_2$ remains flat even after 50K iterations, which supports our conjecture in Remark \ref{rmk:k}.
  Figure (b) also reveals the potential early stopping time for ONN around iteration 10K, which has test misclassification rate comparable to that of ONN$+\ell_2$.}
  \label{fig:mnist}
\end{figure*}

\begin{remark}(NTK+ES)
The performance of NTK+ES is shown in Figure \ref{fig:mnist_acc}. Unlike in the simulated dataset where NTK+ES and NTK$+\ell_2$ perform almost identically, NTK+ES performs noticeably worst for the MNIST dataset, especially when $\sigma$ is small. 
One possible explanation lies in our additive label noise setting. Even though we treat the labels as continuous during training, the reported misclassification rate only depends on the sign of the label. If $\sigma$ is small, the probability of changing signs is small. This may be one of the reasons that NTK, ONN perform relatively well for small $\sigma$'s, since if the signs remain the same, it is not very harmful to overfit the labels. 
Note that NTK+$\ell_2$ and ONN+$\ell_2$ choose small $\mu$'s such that it is not very different from NTK and ONN. 
The stopping rule in NTK+ES, on the other hand, doesn't take the classification setting into consideration and tends to underestimate the stopping time when the additive label noises are small. 
Nonetheless, we don't recommend NTK+ES for handling large datasets. Firstly, the noise level $\sigma$ needs to be estimated, which brings extra instability to the algorithm. Secondly, NTK+ES is very computationally intensive, especially for the eigenvalues of the NTK matrix. 
\end{remark}

\section{CONCLUSION AND DISCUSSION}
From a nonparametric perspective, this paper studies overparametrized neural networks trained with GD and establishes optimal $L_2$ convergence rates for trained neural network estimators under the $\ell_2$ regularization. 
On one hand, our result broadens the NTK literature by incorporating an explicit penalty term in the training objective.
On the other hand, our convergence analysis extends the statistical theory of deep neural networks by bringing algorithmic guarantees into the network estimator and offsetting the extra complexity from overparametrization through delicate GD analysis. 
Our simulation results corroborate the theoretical analysis and imply that the assumptions of our theory may be relaxed. More investigations along this direction would advance our statistical understandings of deep learning.
For example, our work can be further improved by relaxing the sphere assumption on the input data and 
the iteration number $k$ imposed in Theorems \ref{thm:withPenalty} and \ref{thm:withPenaltyl2small}. 
Additionally, although our theoretical analysis depends on the exact formula of the NTK associated with one-hidden layer ReLU neural network, it is possible to extend our theory to multi-layer DNNs as empirically shown in numerical experiments. 
In fact, it has been shown that the RKHS generated by the multi-layer NTK is equivalent to the one-hidden NTK \citep{chen2020deep}. Therefore, one possible approach for generalizing our theory is based on this equivalence.

The nonparametric perspective is potentially helpful in understanding other popular regularization techniques, e.g., batch normalization \citep{ioffe2015batch}, data augmentation \citep{dao2019kernel}, knowledge distillation \citep{hinton2015distilling}, etc. On the other hand, novel and problem-specific regularization approaches may be motivated during the convergence analysis that inspires better performance in practice. 



\bibliography{references}

\appendix
\newpage
\onecolumn
\aistatstitle{
Supplementary Materials}
\section{More notation}
We introduce some additional notation to be used in the Appendix.
Denote $\by^*=(f^*(x_1),\cdots,f^*(x_n))^\top$ as the the vector of underlying function's functional values at sample points.
Let $\mathbb{I}_{r}(\bx) = \mathbb{I}\{\bw_r^\top\bx\geq 0\}$ and
\begin{align}\label{z(x)}
\bz(\bx)=\frac{1}{\sqrt{m}}
    \begin{pmatrix}
    a_1\mathbb{I}_{1}(\bx)\bx\\
    \vdots\\
    a_m\mathbb{I}_{m}(\bx)\bx
    \end{pmatrix}\in\RR^{md\times 1}.
\end{align}
Thus, $\bZ(k) = (\bz(\bx_1),...,\bz(\bx_n))|_{\bW=\bW(k)}$. When the context is clear, we omit the dimension and write $\bI_d$ as $\bI$.

\section{Proof of Lemma \ref{lemeigendecay}}
\label{sec:pflemeigendecay}
We will use the following lemma, which states the Mercer decomposition of $h$ as in (\ref{ntkh}). 
\begin{lemma}[Mercer decomposition of NTK $h$]\label{lemmercerh}
For any $\bs,\bt \in \mathbb{S}^{d-1}$, we have the following decomposition of the NTK,
\begin{align*}
    h(\bs,\bt) = \sum_{k=0}^\infty \mu_k \sum_{j=1}^{N(d,k)}Y_{k,j}(\bs)Y_{k,j}(\bt),
\end{align*}
where $Y_{k,j}$, $j=1,...,N(d,k)$ are spherical harmonic polynomials of degree $k$, and the non-negative eigenvalues $\mu_k$ satisfy $\mu_k \asymp k^{-d}$, and $\mu_k = 0$ if $k=2j+1$ for $k\geq 2$.
\end{lemma}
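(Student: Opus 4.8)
The plan is to reduce the statement to a one–dimensional question about the profile $\kappa(u):=\frac{u(\pi-\arccos u)}{2\pi}$, $u=\bs^\top\bt\in[-1,1]$, and then to read off the vanishing pattern and the decay rate of its Gegenbauer coefficients. Since $h(\bs,\bt)=\kappa(\bs^\top\bt)$ is a zonal (dot-product) kernel on $\mathbb{S}^{d-1}$, the addition theorem for spherical harmonics (Funk--Hecke) immediately gives an expansion of exactly the claimed form, with $\{Y_{k,j}\}_{j=1}^{N(d,k)}$ an orthonormal basis of the degree-$k$ spherical harmonics and with the eigenvalue $\mu_k$ (repeated with multiplicity $N(d,k)$) equal, up to an explicit positive $d$-dependent constant, to the $k$-th Gegenbauer coefficient
\[
\mu_k \;\propto\; \int_{-1}^{1}\kappa(u)\,\overline{C}_k^{(d-2)/2}(u)\,(1-u^2)^{(d-3)/2}\,\mathrm{d}u,
\]
where $\overline{C}_k^{(d-2)/2}$ is the degree-$k$ Gegenbauer polynomial with parameter $(d-2)/2$, suitably normalized. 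That the $\mu_k$ are non-negative (hence that this is a genuine Mercer decomposition) follows from the positive definiteness of $h$ on $\mathbb{S}^{d-1}$ already quoted in the excerpt. It then remains to establish (i) $\mu_k=0$ for odd $k\ge 3$, and (ii) $\mu_k\asymp k^{-d}$.

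Claim (i) is elementary: from $\arccos(-u)=\pi-\arccos u$ one gets $\kappa(-u)=-\frac{u\arccos u}{2\pi}$, hence $\kappa(u)-\kappa(-u)=\frac{u}{2}$, so the odd part of $\kappa$ is the degree-one polynomial $u/4$. Because $\overline{C}_k^{(d-2)/2}$ has parity $(-1)^k$, the coefficient $\mu_k$ with $k$ odd is determined solely by the odd part of $\kappa$; since that part is a polynomial of degree one, $\mu_k=0$ for all odd $k\ge 3$, while $\mu_0$ and $\mu_1$ are obtained by direct computation.

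The substance is claim (ii). Here I would work with the even part $\kappa_e(u)=\frac{u(\pi-2\arccos u)}{4\pi}$, which is real-analytic on $(-1,1)$; its only obstruction to smoothness lies at the endpoints, where, using $\arccos u=\sqrt{2(1-u)}\,\bigl(1+O(1-u)\bigr)$ as $u\to 1^-$ and the mirror expansion as $u\to-1^+$, one finds algebraic singularities with leading terms $\propto(1-u)^{1/2}$ and $\propto(1+u)^{1/2}$ (analytic coefficients, the remaining singular terms being of order $(1\mp u)^{3/2},(1\mp u)^{5/2},\dots$). I would then invoke the classical asymptotics of Gegenbauer/Jacobi coefficients of functions with algebraic endpoint singularities: writing $\kappa_e$ as a finite endpoint expansion plus a smoother remainder, the remainder (leading singularity $(1\mp u)^{3/2}$) yields coefficients of lower order $k^{-d-2}$, while each $(1\mp u)^{1/2}$ term yields coefficients of order exactly $k^{-d}$. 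Concretely this reduces to the large-$k$ behavior of the explicit integral $\int_{-1}^{1}(1-u)^{1/2}\,\overline{C}_k^{(d-2)/2}(u)\,(1-u^2)^{(d-3)/2}\,\mathrm{d}u$, which is expressible through Gamma functions and, after the reflection formula, is seen to have a fixed sign and magnitude $\asymp k^{-d}$ (the prototype $d=3$ being $\int_{-1}^{1}(1-u)^{1/2}P_k(u)\,\mathrm{d}u\asymp -k^{-3}$). Since for even $k$ the two endpoint contributions carry the same sign, they reinforce rather than cancel, so this also delivers the matching lower bound. As an alternative to subtracting the singular expansion, one may use the known $k^{-d}$ decay for the order-zero arc-cosine kernel $\kappa_0(u)=\frac{\pi-\arccos u}{2\pi}$ together with $\kappa=u\kappa_0$ and the three-term recurrence $u\,\overline{C}_k=\alpha_k\overline{C}_{k+1}+\beta_k\overline{C}_{k-1}$, checking that multiplication by $u$ merely shifts the index and introduces no cancellation. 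I expect step (ii) — in particular obtaining the two-sided bound with the correct $d$-dependence in the weight and the Gegenbauer parameter, and ruling out parity cancellations — to be the main obstacle; the rest is routine spherical-harmonic bookkeeping.
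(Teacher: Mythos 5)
Your proposal is correct, and the main route you sketch is genuinely different from the paper's. The paper sets up the same Funk--Hecke/addition-theorem expansion and then leans on a recurrence: since $h(\bs,\bt)=\kappa(u)$ with $\kappa(u)=u\,\kappa_0(u)$ and $\kappa_0(u)=\frac{\pi-\arccos u}{2\pi}$ the order-zero arc-cosine profile, the three-term recurrence $uP_k=\alpha_k P_{k-1}+\beta_k P_{k+1}$ gives $\mu_k$ as a positive combination of $\mu_{0,k-1}$ and $\mu_{0,k+1}$, and the paper then cites Lemma~17 of \cite{bietti2019inductive} for $\mu_{0,j}\asymp j^{-d}$ ($j$ odd) and $\mu_{0,j}=0$ ($j\geq 2$ even). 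You instead argue the vanishing pattern directly from parity (odd part of $\kappa$ is the degree-one polynomial $u/4$, so $\mu_k=0$ for odd $k\geq 3$), and you obtain the decay by a direct endpoint-singularity analysis of the even part $\kappa_e$, identifying the leading $(1\mp u)^{1/2}$ singularities and invoking the classical Jacobi/Gegenbauer coefficient asymptotics; this is the same technique that underlies the cited Lemma~17, so in a sense you reprove that input rather than cite it. What each buys: the paper's route is shorter once Lemma~17 is granted and transfers the no-cancellation issue to the positivity $\mu_{0,j}\ge 0$ (which holds because $\kappa_0$ is itself a PSD dot-product kernel); your route is self-contained and makes the mechanism producing $k^{-d}$ visible, at the cost of having to carry out the Gegenbauer endpoint asymptotics for general $d$ and verify the sign reinforcement between the two endpoints. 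You also note the recurrence route as an alternative, which is precisely what the paper does. The one place I'd push you to be fully explicit is the two-sided lower bound: either confirm the matching sign of the two endpoint contributions for even $k$ by the reflection $u\mapsto -u$ argument you outline, or, as the paper implicitly does, use the positive-definiteness of $\kappa_0$ to conclude $\mu_{0,k\pm 1}\ge 0$ and hence no cancellation in the recurrence.
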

The proof of Lemma \ref{lemmercerh} is similar to the proof of Proposition 5 in \cite{bietti2019inductive}. The difference is that the Proposition 5 in \cite{bietti2019inductive} considers the kernel function 
\begin{align*}
    h_1(\bs,\bt) = 4h(\bs,\bt) + \frac{\sqrt{1-(\bs^\top\bt)^2}}{\pi},
\end{align*}
and we only need to consider the kernel function $h(\bs,\bt)$.
A generalization of Proposition 5 in \cite{bietti2019inductive} can be found in Theorem 3.5 of \cite{cao2019towards}.

Note that in the proof of Lemma \ref{lemmercerh},
\begin{align*}
    N(d,j) = \frac{2j+d-2}{j}\left(\begin{array}{c}
     j+d-3\\
     d-2 
\end{array}\right) = \frac{\Gamma(j+d-2)}{\Gamma(d-1)\Gamma(j)},
\end{align*}
where $\Gamma$ is the Gamma function. By the Stirling approximation, we have $\Gamma(x) \approx \sqrt{2\pi}x^{x-1/2}e^{-x}$. Therefore, we have the number $N(d,j)$ is equivalent to $j^{d-2}$. Thus, by Lemma \ref{lemmercerh}, the $j$-th eigenvalue $\lambda_j$ can be denoted by
\begin{align*}
    \lambda_j = \mu_l, \mbox{ for } \sum_{i=1}^{l-1}N(d,2i) \leq j < \sum_{i=1}^{l}N(d,2i),
\end{align*}
which can be approximated by
$\lambda_j \asymp \mu_l, \mbox{ for } (2l-2)^{d-1} \leq j < (2l)^{d-1}.$
By Lemma \ref{lemmercerh}, we have $\mu_l\asymp l^{-d}$, which implies $\lambda_j \asymp j^{-\frac{d}{d-1}}$.

\section{Proof of Theorem \ref{thmkrr}}
\label{sec:kernel_rate}
Let $\mathcal{G}$ be a metric space equipped with a metric $d_g$. The $\delta$-covering number of the metric space $(\mathcal{G},d_g)$, denoted by $N(\delta,\mathcal{G},d_g)$, is the minimum integer $N$ so that there exist $N$ distinct balls in $(\mathcal{G},d_g)$ with radius $\delta$, and the union of these balls covers $\mathcal{G}$. Let $H(\delta,\mathcal{G},d_g) = \log N(\delta,\mathcal{G},d_g)$ be the entropy of the metric space $(\mathcal{G},d_g)$. We first present an upper bound on the entropy of the metric space $(\mathcal{N}, \|\cdot\|_\infty)$, where the proof can be found in 
Appendix \ref{sec:kernel_decay}.
\begin{lemma}\label{lementbound}
Let $\mathcal{N}$ be the reproducing kernel Hilbert space generated by the NTK $h$ defined in \eqref{ntkh}, equipped with norm $\|\cdot\|_{\mathcal{N}}$. The entropy $H(\delta,\mathcal{N}(1),\|\cdot\|_{\infty})$ can be bounded by
\begin{align}
\label{eqn:entropy}
    H(\delta,\mathcal{N}(1),\|\cdot\|_{\infty}) \leq A_0 \delta^{-\frac{2(d-1)}{d}},
\end{align}
where $\mathcal{N}(1) = \{f:f\in \mathcal{N}, \|f\|_{\mathcal{N}}\leq 1\}$, and $A_0>0$ is a constant not depending on $\delta$. 
\end{lemma}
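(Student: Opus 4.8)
# Proof Proposal for Lemma \ref{lementbound}

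The plan is to bound the metric entropy of the RKHS unit ball $\mathcal{N}(1)$ under the sup-norm by exploiting the polynomial eigenvalue decay $\lambda_j \asymp j^{-d/(d-1)}$ established in Lemma \ref{lemeigendecay}, together with the structure of the eigenfunctions (spherical harmonics, which are uniformly bounded in a controlled way). The standard route is: (i) relate the $\|\cdot\|_\infty$ entropy of the RKHS ball to the decay of the eigenvalues via an approximation argument that truncates the Mercer expansion, (ii) control the tail of the truncated expansion in sup-norm using the eigenvalue decay plus bounds on $\sum_j \|\varphi_j\|_\infty^2$ over a dyadic block, and (iii) count the covering number of the finite-dimensional piece. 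Since each $f \in \mathcal{N}(1)$ can be written $f = \sum_j a_j \sqrt{\lambda_j}\,\varphi_j$ with $\sum_j a_j^2 \le 1$, truncating at level $J$ incurs a tail $\|\sum_{j>J} a_j\sqrt{\lambda_j}\varphi_j\|_\infty \le (\sum_{j>J}\lambda_j \|\varphi_j\|_\infty^2)^{1/2}$ by Cauchy–Schwarz. The degree-$k$ spherical harmonics satisfy $\sum_{j=1}^{N(d,k)} Y_{k,j}(\bx)^2 \asymp N(d,k) \asymp k^{d-2}$ (the addition formula gives a constant-in-$\bx$ sum), so grouping the index $j$ into blocks indexed by degree $k$ and using $\mu_k \asymp k^{-d}$, the tail sum over degrees $> K$ behaves like $\sum_{k>K} k^{-d}\cdot k^{d-2} = \sum_{k>K} k^{-2} \asymp K^{-1}$; translating back to the flat index $j \asymp k^{d-1}$ this is $J^{-1/(d-1)}$.

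Concretely, I would: first fix $\delta$ and choose a truncation degree $K$ (equivalently a flat truncation level $J \asymp K^{d-1}$) so that the sup-norm tail of the Mercer expansion beyond level $J$ is at most $\delta/2$; the tail estimate above shows $K^{-1/2} \lesssim \delta$ suffices, i.e. $K \asymp \delta^{-2}$ and hence $J \asymp \delta^{-2(d-1)}$. Second, on the truncated $J$-dimensional space, the image of $\mathcal{N}(1)$ is (contained in) an ellipsoid with semi-axes $\sqrt{\lambda_j}$; its $\delta/2$-covering number in $\|\cdot\|_\infty$ (equivalently in the Euclidean coefficient norm, up to the factor $\max_{j\le J}\|\varphi_j\|_\infty \lesssim J^{O(1/(d-1))}$, which only affects constants inside logarithms) is at most $\prod_{j\le J}(1 + 2\sqrt{\lambda_j}/(\delta/2)) $. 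Third, take logarithms: $H(\delta) \le \sum_{j\le J}\log(1 + C\sqrt{\lambda_j}/\delta)$. Since $\sqrt{\lambda_j} \asymp j^{-d/(2(d-1))}$, the summand is nontrivial only for $j$ up to roughly $J \asymp \delta^{-2(d-1)/d}$ (beyond which $\sqrt{\lambda_j}/\delta \lesssim 1$ and $\log(1+x)\lesssim x$ makes the contribution a convergent tail), and a direct estimate of $\sum_{j \le \delta^{-2(d-1)/d}} \log(j^{-d/(2(d-1))}/\delta)$ gives the bound $A_0 \delta^{-2(d-1)/d}$, which is exactly \eqref{eqn:entropy}. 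This matches the known general principle that eigenvalue decay $\lambda_j \asymp j^{-2\nu}$ yields $L_2$ (and, with bounded eigenfunctions, $L_\infty$) entropy of order $\delta^{-1/\nu}$; here $2\nu = d/(d-1)$ gives exponent $1/\nu = 2(d-1)/d$.

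The main obstacle is getting the sup-norm (rather than $L_2$) covering number right: the argument needs uniform control on the eigenfunctions, and spherical harmonics of high degree are \emph{not} uniformly bounded independently of degree — one only has $\|Y_{k,j}\|_\infty \lesssim \sqrt{N(d,k)} \asymp k^{(d-2)/2}$ in general, while the block sum $\sum_j Y_{k,j}^2$ is exactly $N(d,k)/\omega_{d-1}$ by the addition theorem. The clean way around this, which I would use, is to never bound individual eigenfunctions in the covering step but instead work block-by-block: the RKHS ball restricted to degree-$k$ harmonics is $\{g: g = \sum_j b_j Y_{k,j},\ \sum_j b_j^2 \le \mu_k\}$, and for such $g$ the addition formula gives $\|g\|_\infty \le \sqrt{\mu_k}\,(\sum_j Y_{k,j}(\bx)^2)^{1/2} \le \sqrt{\mu_k N(d,k)/\omega_{d-1}} \asymp \sqrt{k^{-d}k^{d-2}} = k^{-1}$, and the $\delta$-entropy of this ball is $\lesssim N(d,k)\log(1 + \sqrt{\mu_k}/\delta)$. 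Summing a carefully chosen number of such blocks (those with $k^{-1} \gtrsim \delta$, i.e. $k \lesssim \delta^{-1}$, contributing the bulk, with higher blocks handled by the tail estimate) and bounding $\sum_{k \lesssim \delta^{-1}} k^{d-2}\log(\cdots) \lesssim \delta^{-(d-1)} \cdot \log(1/\delta)$ — then absorbing the logarithm — one has to be slightly careful to land on the sharp exponent $2(d-1)/d$ rather than the cruder $d-1$; this requires doing the summation against the actual decay $\sqrt{\mu_k}/\delta$ rather than just counting blocks, and is the one computation where the bookkeeping must be done honestly. I expect this to be routine once the block decomposition is set up, with the eigenfunction-boundedness subtlety being the only genuinely delicate point.
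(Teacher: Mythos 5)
Your proposal is a genuinely different route from the paper's: the paper identifies $\mathcal{N}$ as a subspace of the spherical Sobolev space $H^{d/2}(\mathbb{S}^{d-1})$ (via \cite{brauchart2013characterization} and Lemma \ref{lemmercerh}) and then quotes a known sharp entropy-number bound for such Sobolev balls from \cite{wang2014entropy}, whereas you attempt a self-contained computation from the eigendecomposition (truncate the Mercer expansion, bound the tail in sup-norm using the addition formula, cover the finite-dimensional head). The difficulty is that this direct route, as outlined, cannot reach the exponent $2(d-1)/d$.

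The specific gap is your treatment of the passage from $\ell_2$-coefficient covering to $L_\infty$ covering. You claim the factor $\max_{j\le J}\|\varphi_j\|_\infty\lesssim J^{O(1/(d-1))}$ ``only affects constants inside logarithms.'' It does not: it changes the exponent. With $\sqrt{\lambda_j}\asymp j^{-d/(2(d-1))}$ and $\|\varphi_j\|_\infty\lesssim j^{(d-2)/(2(d-1))}$ (from $\|Y_{k,\cdot}\|_\infty\lesssim\sqrt{N(d,k)}$ and $j\asymp k^{d-1}$), the naive per-coordinate $L_\infty$ discretization threshold is $\sqrt{\lambda_j}\|\varphi_j\|_\infty\gtrsim\delta$, i.e.\ $j\lesssim\delta^{-(d-1)}$, not $j\lesssim\delta^{-2(d-1)/d}$, giving entropy $\asymp\delta^{-(d-1)}$. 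The block decomposition via the addition formula is the right instinct, but it does not repair this: it gives per-block entropy $\asymp N(d,k)\log_+\bigl(k^{-1}/\delta_k\bigr)$ and a tail sup-norm $\asymp K^{-1/2}$ after truncating at degree $K$, so one must truncate at $K\asymp\delta^{-2}$ and cover $\asymp K^{d-1}\asymp\delta^{-2(d-1)}$ coefficients; no allocation $\{\delta_k\}$ with $\sum_k\delta_k\lesssim\delta$ brings the total entropy below about $\delta^{-2(d-1)}$ up to logarithms. The underlying reason is structural: truncation-and-cover controls the \emph{approximation numbers} of the embedding $\mathcal{N}\hookrightarrow L_\infty$, which for $H^{d/2}$ on a $(d-1)$-manifold scale as $a_n\asymp n^{-1/(2(d-1))}$ and translate to $H(\delta)\asymp\delta^{-2(d-1)}$, whereas the \emph{entropy numbers} are genuinely smaller, $e_n\asymp n^{-d/(2(d-1))}$, and give $H(\delta)\asymp\delta^{-2(d-1)/d}$. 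Carl's inequality goes in the wrong direction ($e_n\lesssim a_n$, not the reverse), so the gap is real. Closing it requires the Birman--Solomyak-type machinery for $L_\infty$ entropy of Sobolev balls, which is precisely what the paper cites rather than rederives. In short, the part you call ``routine bookkeeping'' is the actual content of the lemma, and your proposed bookkeeping lands on the wrong exponent.
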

For the regression problem, consider a general penalized least-square estimator 
\[\hat{f} := \argmin_{f\in\cN}\rbr{\frac{1}{n}\sum_{i=1}^n(y_i-f(\bx_i))^2+\lambda_n^2I^v(f)},\]
where $\lambda_n>0$ is the smoothing parameter and $I:\cN\to [0,\infty)$ is a pseudo-norm measuring the complexity. We use the RKHS norm $\|f\|_{\cN}$ in our case.
Let $\|\cdot\|_n$ denote the empirical norm. The following lemma establishes the rate of convergence for the estimator $\hat{f}$. 
\begin{lemma}[Lemma 10.2 in \cite{sara2000}] 
\label{lemma:10.2}
Assume Gaussian noises and entropy bound $H(\delta,\mathcal{N}(1),\|\cdot\|_n) \leq A \delta^{-\alpha}$ for some constants $A>0$ and $0<\alpha<2$. If $v\ge\frac{2\alpha}{2+\alpha}$, $I(f^*)>0$ and 
\[\lambda_n^{-1}=O_{\PP}\rbr{n^{1/(2+\alpha)}}I^{(2v-2\alpha+v\alpha)/2(2+\alpha)}(f^*).\]
Then we have \[\|\hat{f} - f^*\|_n = O_{\PP}(\lambda_n)I^{v/2}(f^*)\]
and $I(\hat{f})=O_{\PP}(1)I(f^*)$.
\end{lemma}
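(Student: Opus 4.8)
The plan is to recover this statement as a penalized least-squares oracle inequality via the classical empirical-process argument. Since $\hat f$ minimizes the penalized criterion and $y_i = f^*(\bx_i)+\epsilon_i$, comparing its value with that at $f^*$ and expanding the squares gives the \emph{basic inequality}
\[
  \|\hat f - f^*\|_n^2 + \lambda_n^2 I^v(\hat f) \le \frac{2}{n}\sum_{i=1}^n \epsilon_i\bigl(\hat f(\bx_i) - f^*(\bx_i)\bigr) + \lambda_n^2 I^v(f^*).
\]
Writing $\nu_n(g) := n^{-1}\sum_{i=1}^n \epsilon_i\, g(\bx_i)$, the whole task reduces to bounding $\nu_n$ uniformly over $g\in\cN$ in terms of the two ``sizes'' $\|g\|_n$ and $I(g)$, then substituting $g = \hat f - f^*$, using $I(\hat f - f^*)\le I(\hat f)+I(f^*)$, and solving for $\|\hat f - f^*\|_n$ and $I(\hat f)$.

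First I would establish a modulus-of-continuity bound for $\nu_n$. By homogeneity of the pseudo-norm $I$, the hypothesis $H(\delta,\cN(1),\|\cdot\|_n)\le A\delta^{-\alpha}$ gives $H(\delta,\{g: I(g)\le R\},\|\cdot\|_n)\le A R^\alpha \delta^{-\alpha}$. Conditionally on $\bx_1,\dots,\bx_n$ and with $\epsilon_i$ Gaussian, $g\mapsto \sqrt n\,\nu_n(g)$ is a sub-Gaussian process for the metric $\|\cdot\|_n$, so Dudley's entropy integral bounds $\EE\sup\{|\nu_n(g)| : I(g)\le R,\ \|g\|_n\le\sigma\}$ by a constant times $n^{-1/2}\int_0^\sigma \sqrt{A R^\alpha \delta^{-\alpha}}\,d\delta$, and since $\alpha<2$ this integral converges to a constant multiple of $n^{-1/2} R^{\alpha/2}\sigma^{1-\alpha/2}$. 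Gaussian concentration of suprema then upgrades this to a bound holding with probability $1-\beta$ up to an additive $n^{-1/2}\sigma\sqrt{\log(1/\beta)}$ deviation term. A peeling argument over dyadic ranges of $I(g)$ and of $\|g\|_n$ (with a floor at scale $\lambda_n$ to absorb the deviation term) plus a union bound then yields, on an event of probability tending to one,
\[
  |\nu_n(g)| \lesssim n^{-1/2}\bigl(I(g)\vee 1\bigr)^{\alpha/2}\,\|g\|_n^{\,1-\alpha/2}\qquad\text{for all }g\in\cN,
\]
up to lower-order terms negligible in the regime of interest.

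Next I would plug $g = \hat f - f^*$ into the basic inequality and apply Young's inequality with conjugate exponents $4/(2-\alpha)$ and $4/(2+\alpha)$; this absorbs half of the cross term into $\|\hat f - f^*\|_n^2$ and leaves
\[
  \frac12\|\hat f - f^*\|_n^2 + \lambda_n^2 I^v(\hat f) \lesssim n^{-2/(2+\alpha)}\bigl(I(\hat f)+I(f^*)\bigr)^{2\alpha/(2+\alpha)} + \lambda_n^2 I^v(f^*).
\]
The stated prescription $\lambda_n \asymp n^{-1/(2+\alpha)} I(f^*)^{-(2v-2\alpha+v\alpha)/(2(2+\alpha))}$ is exactly the one making $\lambda_n^2 I^v(f^*)\asymp n^{-2/(2+\alpha)} I(f^*)^{2\alpha/(2+\alpha)}$, so if $I(\hat f)\le I(f^*)$ the right-hand side is $\lesssim \lambda_n^2 I^v(f^*)$ and both conclusions are immediate. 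If instead $I(\hat f)>I(f^*)$, the last display forces either $\lambda_n^2 I^v(\hat f)\lesssim \lambda_n^2 I^v(f^*)$, or $\lambda_n^2 I^v(\hat f)\lesssim n^{-2/(2+\alpha)} I(\hat f)^{2\alpha/(2+\alpha)}$, i.e.\ $I(\hat f)^{\,v - 2\alpha/(2+\alpha)}\lesssim \lambda_n^{-2} n^{-2/(2+\alpha)}\asymp I(f^*)^{\,v - 2\alpha/(2+\alpha)}$; since $v\ge 2\alpha/(2+\alpha)$, either alternative gives $I(\hat f) = O_\PP(1)\,I(f^*)$. Feeding this back into the last display then gives $\|\hat f - f^*\|_n^2 \lesssim n^{-2/(2+\alpha)} I(f^*)^{2\alpha/(2+\alpha)}\asymp \lambda_n^2 I^v(f^*)$, which is exactly $\|\hat f - f^*\|_n = O_\PP(\lambda_n) I^{v/2}(f^*)$ together with $I(\hat f) = O_\PP(1) I(f^*)$.

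The step I expect to be the main obstacle is the first one: producing a single modulus-of-continuity estimate valid simultaneously over all $g\in\cN$ while correctly tracking the interplay of the two scales $\|g\|_n$ and $I(g)$, and handling the additive deviation term with the right floor. Granting that bound, the remaining algebra is routine bookkeeping in which the exponent $2\alpha/(2+\alpha)$, the threshold $v\ge 2\alpha/(2+\alpha)$, and the balancing value of $\lambda_n$ between the empirical-process term and the penalty all appear mechanically; this is essentially the argument of Lemma~10.2 in \cite{sara2000}, sketched here for completeness.
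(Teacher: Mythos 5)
The paper supplies no proof of this statement at all---it is imported verbatim as Lemma~10.2 (Theorem~10.2) of \cite{sara2000}---and your sketch reproduces precisely the standard argument behind that citation: the basic inequality from optimality of $\hat{f}$, an entropy-based (chaining plus peeling) modulus-of-continuity bound for the Gaussian empirical process in terms of $\|g\|_n$ and $I(g)$, Young's inequality with exponents $4/(2-\alpha)$ and $4/(2+\alpha)$, and the case analysis on $I(\hat{f})$ versus $I(f^*)$ with $\lambda_n$ chosen to balance $\lambda_n^2 I^v(f^*)$ against $n^{-2/(2+\alpha)}I(f^*)^{2\alpha/(2+\alpha)}$. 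Your outline is correct and matches the cited proof; the only points deserving the care you already flag are the uniform modulus bound itself and the exact borderline $v=2\alpha/(2+\alpha)$, where the exponent in your second alternative vanishes (immaterial for this paper's application, which has $v=2$ and $\alpha=2(d-1)/d<2$).
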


To bound the difference between empirical norm and $L_2$ norm, we utilize the following lemma. For a class of functions ${\cal F}$, define for $z > 0$
\begin{equation*}\label{calJ.definition}
{J}_\infty( z , {\cal F} ) := C_0  \inf_{\delta >0 } \biggl [z  \int_{\delta  / 4}^1 \sqrt {{\cal H}_{ {\infty} } ( uz/2, {\cal F} )} du +
\sqrt {n} \delta z  \biggr ]  . 
\end{equation*} 
\begin{lemma}[Theorem 2.2 in \cite{van2014uniform}]
\label{lemma:geer} 
Let
\[ R:=  \sup_{f \in {\cal F} } \| f   \|_2  , \ K:= \sup_{f \in {\cal F}} \| f \|_{\infty} 
\]
Then, for all $t >0$, with probability at least $1- \exp[-t]$, 
\[\sup_{f \in {\cal F}} \biggl | \| f \|_n^2 - \| f \|_2^2 \biggr | / C_1 \le
\frac{2 R J_{\infty} (K, {\cal F} ) +RK \sqrt t }{  \sqrt n} + \frac{ 4 J_{\infty}^2 (K, {\cal F})  + K^2 t}{n} \]
where $C_1>0$ is some constant not depending on $n$. 

\end{lemma}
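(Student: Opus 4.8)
The plan is to reconstruct the empirical-process argument underlying Theorem~2.2 of \cite{van2014uniform}; here is the roadmap. Write
\[
T \;:=\; \sup_{f\in{\cal F}}\Bigl|\,\|f\|_n^2 - \|f\|_2^2\,\Bigr| \;=\; \sup_{f\in{\cal F}}\Bigl|\frac{1}{n}\sum_{i=1}^n\bigl(f(\bx_i)^2 - \EE f(\bx)^2\bigr)\Bigr|,
\]
and let $\hat R_n := \sup_{f\in{\cal F}}\|f\|_n$ denote the empirical radius. The three terms on the right-hand side of the claimed inequality will arise, respectively, from a chaining bound that produces $\hat R_n\,J_\infty(K,{\cal F})/\sqrt n$, a self-bounding step that trades the random $\hat R_n$ for $R$ at the price of an additive $J_\infty^2(K,{\cal F})/n$, and a variance-sensitive concentration inequality supplying the $RK\sqrt t/\sqrt n$ and $K^2t/n$ corrections.

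\emph{Step 1: symmetrization and chaining of the squared class.} By symmetrization, $\EE T \le 2\,\EE\sup_{f\in{\cal F}}\bigl|\frac{1}{n}\sum_i\varepsilon_i f(\bx_i)^2\bigr|$ with i.i.d.\ Rademacher $\varepsilon_i$; conditionally on the design I bound this Rademacher average by Dudley's entropy integral for the class $\{f^2 : f\in{\cal F}\}$ in the empirical metric $\|\cdot\|_n$. The decisive estimate is
\[
\|f_1^2 - f_2^2\|_n \;=\; \|(f_1-f_2)(f_1+f_2)\|_n \;\le\; \|f_1+f_2\|_n\,\|f_1-f_2\|_\infty \;\le\; 2\hat R_n\,\|f_1-f_2\|_\infty ,
\]
so the $\|\cdot\|_n$-covering numbers of $\{f^2\}$ at radius $\varepsilon$ are dominated by the $\|\cdot\|_\infty$-covering numbers of ${\cal F}$ at radius $\varepsilon/(2\hat R_n)$, while $\|f^2\|_n \le K\hat R_n$ bounds the diameter. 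Substituting $v = \varepsilon/(2\hat R_n)$ in the (diameter-truncated) chaining integral pulls $\hat R_n$ out as a prefactor and leaves the entropy integral running up to scale $K$, so that exactly the quantity $J_\infty(K,{\cal F})$ appears: $\EE T \lesssim \hat R_n\,J_\infty(K,{\cal F})/\sqrt n$. Crucially, working directly with the squared class avoids a Lipschitz-contraction step, which would have forced the prefactor to be $K$ instead of $\hat R_n$.

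\emph{Step 2: self-bounding and concentration.} Since $\|f^2\|_\infty \le K^2$ and $\mathrm{Var}(f(\bx)^2) \le K^2\,\EE f(\bx)^2 \le K^2R^2$ for every $f\in{\cal F}$, Talagrand's concentration inequality (in Bousquet's form) for the supremum of the centered empirical process over $\{f^2:f\in{\cal F}\}$ gives, with probability at least $1-e^{-t}$,
\[
T \;\le\; 2\,\EE T + c_1\,KR\sqrt{t/n} + c_2\,K^2 t/n .
\]
On the same event, $\hat R_n^2 = \sup_f\|f\|_n^2 \le R^2 + T$, hence $\hat R_n \le R + \sqrt T$. Inserting Step~1's bound $\EE T \lesssim \hat R_n\,J_\infty(K,{\cal F})/\sqrt n$ together with this self-bound yields an inequality of the form $T \le a\sqrt T + b$ with $a \asymp J_\infty(K,{\cal F})/\sqrt n$ and $b \asymp RJ_\infty(K,{\cal F})/\sqrt n + KR\sqrt{t/n} + K^2 t/n$; solving the quadratic gives $T \le 2b + a^2$, and absorbing all absolute constants into $C_1$ reproduces the displayed bound, the $4J_\infty^2(K,{\cal F})/n$ term being exactly $a^2$.

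\emph{Main obstacle.} The genuine difficulty is Step~1 coupled with the fixed-point closure in Step~2: one must arrange the chaining so that the \emph{empirical} (hence data-dependent, random) radius $\hat R_n$ — not the crude bound $K$ — emerges as the prefactor, and then make the self-bounding rigorous while $\hat R_n$ is controlled only on a high-probability event. Handling the truncation of the entropy integral at a data-independent small scale (the role of the $\sqrt n\,\delta\,z$ summand in the definition of $J_\infty$) and tracking the chaining and increment constants are the remaining technical points; the concentration input in Step~2 is standard.
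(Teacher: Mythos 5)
This lemma is not proved in the paper at all: it is imported verbatim as Theorem 2.2 of \cite{van2014uniform}, so there is no in-paper argument to compare yours against. Judged on its own, your roadmap follows what is essentially the standard (and presumably the original) route: symmetrize, chain the squared class $\{f^2\}$ against the sup-norm entropy of $\mathcal F$ via the increment bound $\|f_1^2-f_2^2\|_n\le 2\hat R_n\|f_1-f_2\|_\infty$ (correctly noting that a naive contraction step would replace the $L_2$-radius prefactor by $K$, which is exactly what the theorem avoids), apply a Bousquet-type concentration with variance proxy $K^2R^2$, and close the argument with the self-bound $\hat R_n^2\le R^2+T$, whose quadratic resolution produces the $2RJ_\infty/\sqrt n$ and $4J_\infty^2/n$ terms; the $\sqrt n\,\delta z$ summand in $J_\infty$ indeed corresponds to truncating the entropy integral. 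This is the right skeleton and matches the shape of the stated bound, with all constants absorbable into $C_1$.

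The one genuine soft spot is the step you yourself flag: as written, you bound the deterministic quantity $\EE T$ by the random quantity $\hat R_n J_\infty(K,\mathcal F)/\sqrt n$ and then invoke that bound ``on the same event,'' which is not meaningful. The clean fix is to keep the chaining bound conditional on the design, so that it controls $\EE[\,\sup_f|\tfrac1n\sum_i\varepsilon_i f^2(\bx_i)|\mid \bx_1,\dots,\bx_n]\lesssim \hat R_n J_\infty(K,\mathcal F)/\sqrt n$, then take expectations, use $\EE\hat R_n\le(\,R^2+\EE T)^{1/2}$, and solve the resulting fixed-point inequality for $\EE T$ \emph{before} feeding the (now deterministic) bound on $\EE T$ into Talagrand/Bousquet; the self-bound $\hat R_n\le R+\sqrt T$ on the high-probability event is then unnecessary. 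With that reordering, and routine bookkeeping of the chaining constants and of the truncation scale $\delta$, your sketch does yield the stated inequality, so the gap is one of rigor in the fixed-point closure rather than of approach.
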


\begin{proof}[Proof of Theorem \ref{thmkrr}]
Consider our estimator $\hat{f}$ as in (\ref{eqsolukrr}), in which case, $v= 2$ and $I(f)$ is the RKHS norm of $f$. Since $\|f\|_n\le \|f\|_\infty$, Lemma \ref{lementbound} indicates that $ \alpha = 2(d-1)/d < 2$.
By choosing $\lambda_n \asymp n^{-d/(4d-2)}$, which corresponds to $\mu\asymp n^{(d-1)/(2d-1)}$ in (\ref{regpro}), Lemma \ref{lemma:10.2} yields that 
\[\|\hat{f} - f^*\|^2_n=O_{\PP}(n^{-d/(2d-1)})\quad  \mbox{and}\quad  \|\hat{f}\|^2_{\cN}=O_{\PP}(1).\]
Now we use Lemma \ref{lemma:geer} to obtain a bound on $\|\hat{f} - f^*\|_2$.
First consider $\{f-f^*:f\in\cN(1)\}$, where $\cN(1) = \{f\in \cN, \|f\|_{\cN}\le 1\}$. Thus, we have $K,R = O(1)$. By the entropy bound in Lemma \ref{lementbound}, we have $J_\infty(z,\cN(1))\le 2C_0z^{1/d}$.
Therefore, Lemma $\ref{lemma:geer}$ yields 
\[\sup_{f \in {\cN(1)}} \biggl | \| f-f^* \|_n^2 - \| f-f^* \|_2^2 \biggr | =O_{\PP}\rbr{\sqrt{\frac{{1}}{n}}}.\]
Combined with $\|\hat{f}-f^*\|_n^2 = O_{\PP}(n^{-d/(2d-1)})$, we can conclude that for any $t>0$ large enough, $\|\hat{f}-f^*\|_2^2 = O(\sqrt{t/n})$ with probability at least $1-\exp(-t)$. 
Utilizing Lemma \ref{lemma:geer} again with
$R = O(\sqrt{t/n})$ we have for some $C>0$,
\begin{align*}
    \PP\rbr{\sup_{f \in {\cG(R)}} \biggl | \| f-f^* \|_n^2 - \| f-f^* \|_2^2 \biggr | \le  {{\frac{{Ct}}{n}}}}\geq 1-e^{-t},
\end{align*}
where $\cG(R):=\{f\in\cN(1): \|f-f^*\|_2\le R\}$. Notice that $\hat{f}\in\cG(R)$ with probability at least $1-\exp(-t)$. 
Therefore, $\|\hat{f} - f^*\|_2^2=O(n^{-d/(2d-1)} + {t}/n)$ with probability at least $1-2\exp(-t)$.
\end{proof}

\section{Proofs of main theorems in Section \ref{secwithoutp}}
For brevity, let $\hat{f}_k = f_{\bW(k),\ba}$.  For two positive semidefinite matrices $\bA$ and $\bB$, we write $\bA\ge \bB$ to denote that $\bA-\bB$ is positive semidefinite and $\bA>\bB$ to denote that $\bA - \bB$ is positive definite. This partial order of positive semidefinite matrices is also known as Loewner order. 
We focus on the $L_2$ loss of our estimator $\hat{f}_k$ after $k$ GD updates. 
Let $\tilde{f}$ denote the kernel regression solution with kernel $h(\cdot, \cdot)$ that interpolates all $\{(\bx_i,f^*(\bx_i))\}_{i=1}^n$, i.e., 
\begin{align}
\label{eqn:gx}
   {g}(\bx) = h(\bx,\bX)(\bH^\infty)^{-1}\by^*.
\end{align}

We first provide some lemmas used in this section. The proofs of lemmas are presented in Appendix \ref{sec:kernel_decay}. Lemma \ref{lemwithoutp1} states some basic inequalities that are also used in the proof of Theorem \ref{thm:withPenalty}. Lemma \ref{lem:intpRKHS} provides the convergence rate of interpolant using NTK. Lemmas \ref{lem:wtoint} can be found in \cite{arora2019fine}. Lemma \ref{lem:uky} is implied by the proof in \cite{arora2019fine}. Lemma \ref{lem:bounds4} provides some bounds on the related quantities used in the proofs of Theorems \ref{thm:gd} and \ref{thm:withPenaltyl2small}. Lemma \ref{lem:Loewner} provide some properties of Loewner order.
\begin{lemma}\label{lemwithoutp1}
Let $\mu$ be as in Theorem \ref{thmkrr}. Then we have
\begin{align*}
    h(\bs,\bs) - h(\bs,\bX)(\bH^\infty)^{-1}h(\bX,\bs) & \geq 0,\\
    \int_{\bx\in\Omega}h(\bx,\bX)(\bH^\infty + \mu \bI)^{-2}h(\bX,\bx) d\bx = & O_{\mathbb{P}}(n^{-\frac{d}{2d-1}}),\\
    \int_{\bx\in\Omega}h(\bx,\bx)  - h(\bx,\bX)(\bH^\infty)^{-1}h(\bX,\bx) d\bx = & O_{\mathbb{P}}(n^{-\frac{1}{2d-1}}),
\end{align*}
where $h(\bx,\bX) = (h(\bx,\bx_1),...,h(\bx,\bx_n))$ and $h(\bX,\bx) = h(\bx,\bX)^\top$.
\end{lemma}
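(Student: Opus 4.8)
The lemma bundles three inequalities; the plan is to prove the first directly and to reduce the second and third to one and the same spectral estimate. The first inequality is the non-negativity of a Schur complement: the $(n+1)\times(n+1)$ block matrix with blocks $\bH^\infty$, $h(\bX,\bs)$, $h(\bs,\bX)$ and $h(\bs,\bs)$ is the Gram matrix of $h(\cdot,\bx_1),\dots,h(\cdot,\bx_n),h(\cdot,\bs)$ in $\cN$, hence positive semidefinite; since $h$ is strictly positive definite on $\mathbb{S}^{d-1}$ the leading block $\bH^\infty$ is invertible, so its Schur complement $h(\bs,\bs)-h(\bs,\bX)(\bH^\infty)^{-1}h(\bX,\bs)$ is $\ge 0$ (equivalently, it is the squared $\cN$-norm of $h(\cdot,\bs)$ minus its projection onto $\mathrm{span}\{h(\cdot,\bx_i)\}$).

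For the remaining two integrals I would pass to the Mercer features $\psi(\bx)=(\sqrt{\lambda_j}\,\varphi_j(\bx))_j$, so that $h(\bs,\bt)=\langle\psi(\bs),\psi(\bt)\rangle$, $\bH^\infty=\Psi\Psi^\top$ with $\Psi$ the matrix whose rows are the $\psi(\bx_i)^\top$, and $\int_\Omega\psi(\bx)\psi(\bx)^\top\,d\bx=\Lambda:=\mathrm{diag}(\lambda_1,\lambda_2,\dots)$ by orthonormality of $\{\varphi_j\}$. Writing $T=\Psi^\top\Psi=n\widehat{\mathcal C}$ (where $\widehat{\mathcal C}=\frac1n\sum_{i=1}^n\psi(\bx_i)\psi(\bx_i)^\top$ has $\EE\widehat{\mathcal C}=\Lambda$) and $\gamma=\mu/n$, and using $\Psi^\top(\Psi\Psi^\top+\mu\bI)^{-1}=(\Psi^\top\Psi+\mu\bI)^{-1}\Psi^\top$, integrating $\bx$ out turns the two integrals into traces:
\[
\int_\Omega h(\bx,\bX)(\bH^\infty+\mu\bI)^{-2}h(\bX,\bx)\,d\bx=\mathrm{tr}\!\big((T+\mu\bI)^{-2}T\Lambda\big)=\frac1n\,\mathrm{tr}\!\big((\widehat{\mathcal C}+\gamma\bI)^{-2}\widehat{\mathcal C}\Lambda\big),
\]
\[
\int_\Omega\!\big(h(\bx,\bx)-h(\bx,\bX)(\bH^\infty+\mu\bI)^{-1}h(\bX,\bx)\big)d\bx=\mu\,\mathrm{tr}\!\big((T+\mu\bI)^{-1}\Lambda\big)=\frac{\mu}{n}\,\mathrm{tr}\!\big((\widehat{\mathcal C}+\gamma\bI)^{-1}\Lambda\big).
\]
For the third inequality of the lemma I would additionally use $(\bH^\infty)^{-1}\ge(\bH^\infty+\mu\bI)^{-1}$ in Loewner order ($A\mapsto A^{-1}$ is operator monotone decreasing, the same fact behind the first inequality), so the unregularized integrand is dominated by the regularized one and it suffices to bound the second display above; this is the only place the tuning parameter $\mu$ enters the third inequality.

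The crux is a concentration step for the random design. On the event $\mathcal{E}_n:=\{\widehat{\mathcal C}+\gamma\bI\ge\frac12(\Lambda+\gamma\bI)\}$ one has $(\widehat{\mathcal C}+\gamma\bI)^{-1}\le 2(\Lambda+\gamma\bI)^{-1}$ and also $(\widehat{\mathcal C}+\gamma\bI)^{-2}\widehat{\mathcal C}=(\widehat{\mathcal C}+\gamma\bI)^{-1}-\gamma(\widehat{\mathcal C}+\gamma\bI)^{-2}\le 2(\Lambda+\gamma\bI)^{-1}$; conjugating by $\Lambda^{1/2}$ and taking traces, the two traces above are bounded by $\frac2n\,\mathrm{tr}\!\big((\Lambda+\gamma\bI)^{-1}\Lambda\big)$ and $\frac{2\mu}{n}\,\mathrm{tr}\!\big((\Lambda+\gamma\bI)^{-1}\Lambda\big)$, i.e., by $2\sum_j\lambda_j/(n\lambda_j+\mu)$ and $2\mu\sum_j\lambda_j/(n\lambda_j+\mu)$. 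To show $\PP(\mathcal{E}_n)\to1$ I would invoke a self-adjoint operator Bernstein inequality for the i.i.d.\ vectors $(\Lambda+\gamma\bI)^{-1/2}\psi(\bx_i)$: the controlling quantity is the effective dimension $\kappa(\gamma)=\sup_{\bx}\norm{(\Lambda+\gamma\bI)^{-1/2}\psi(\bx)}^2=\frac1{\omega_{d-1}}\sum_k\frac{\mu_k N(d,k)}{\mu_k+\gamma}$, which is \emph{constant in $\bx$} by the addition formula for spherical harmonics, and which, with $\mu_k\asymp k^{-d}$ and $N(d,k)\asymp k^{d-2}$ from Lemma~\ref{lemmercerh}, is of order $(n/\mu)^{(d-1)/d}$; for $\mu\asymp n^{(d-1)/(2d-1)}$ this is $\asymp n^{(d-1)/(2d-1)}\ll n/\log n$, so the Bernstein requirement $n\gtrsim\kappa(\gamma)\log n$ holds for large $n$. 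I expect this step to be the main obstacle: one must pass from the empirical kernel operator to the population NTK spectrum under a random, scattered design, controlling $\psi$ uniformly over the sphere despite individual spherical harmonics being unbounded (the addition formula is what rescues this), while pinning down the right preconditioning scale $\gamma=\mu/n$.

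Finally I would evaluate $\sum_j\lambda_j/(n\lambda_j+\mu)$. With $\lambda_j\asymp j^{-d/(d-1)}$ (Lemma~\ref{lemeigendecay}), splitting at the index $j\asymp(n/\mu)^{(d-1)/d}$ where $n\lambda_j\asymp\mu$ gives $\sum_j\lambda_j/(n\lambda_j+\mu)\asymp n^{-1/d}\mu^{-(d-1)/d}$, which for $\mu\asymp n^{(d-1)/(2d-1)}$ (Theorem~\ref{thmkrr}) equals $n^{-d/(2d-1)}$. Hence, on $\mathcal{E}_n$, $\int_\Omega h(\bx,\bX)(\bH^\infty+\mu\bI)^{-2}h(\bX,\bx)\,d\bx\le Cn^{-d/(2d-1)}$ and $\int_\Omega(h(\bx,\bx)-h(\bx,\bX)(\bH^\infty)^{-1}h(\bX,\bx))\,d\bx\le 2\mu\sum_j\lambda_j/(n\lambda_j+\mu)\le Cn^{-1/(2d-1)}$; since these bounds are deterministic and $\PP(\mathcal{E}_n)\to1$, the stated $O_{\PP}$ estimates follow.
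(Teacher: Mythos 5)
Your first inequality is proved exactly as in the paper (positive definiteness of the bordered Gram matrix, hence a nonnegative Schur complement), but for the two integrals you take a genuinely different and self-contained route. The paper gets both as quick corollaries of Theorem \ref{thmkrr}: the second integral is recognized as the variance term in the bias--variance decomposition of $\mathbb{E}_{\bepsilon}\|\hat f-g^*\|_2^2$ for kernel ridge regression (so it is absorbed into the $O_{\PP}(n^{-d/(2d-1)})$ rate), and the third is handled by applying Theorem \ref{thmkrr} with the target $g^*(\cdot)=h(\bs,\cdot)$ and then an $L_\infty$ interpolation inequality between the $L_2$ and RKHS norms to turn the $L_2$ rate into a pointwise bound on the (regularized) Schur complement. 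You instead pass to Mercer features, rewrite both integrals as traces $\frac1n\,\mathrm{tr}\bigl((\widehat{\mathcal C}+\gamma\bI)^{-2}\widehat{\mathcal C}\Lambda\bigr)$ and $\frac{\mu}{n}\,\mathrm{tr}\bigl((\widehat{\mathcal C}+\gamma\bI)^{-1}\Lambda\bigr)$, compare the empirical covariance to $\Lambda$ via an operator Bernstein bound on the event $\widehat{\mathcal C}+\gamma\bI\ge\frac12(\Lambda+\gamma\bI)$, and finish with the effective-dimension sum $\sum_j\lambda_j/(n\lambda_j+\mu)\asymp n^{-d/(2d-1)}$ from Lemma \ref{lemeigendecay}; the reduction of the $(\bH^\infty)^{-1}$ integrand to the regularized one via Loewner monotonicity is the same trick the paper uses implicitly. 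Your computations check out (including the identity $\Psi^\top(\Psi\Psi^\top+\mu\bI)^{-1}=(\Psi^\top\Psi+\mu\bI)^{-1}\Psi^\top$, the trace comparison after conjugating by $\Lambda^{1/2}$, and the exponent arithmetic $n^{-1/d}\mu^{-(d-1)/d}=n^{-d/(2d-1)}$), and the Bernstein condition $n\gtrsim\kappa(\gamma)\log n$ holds at $\mu\asymp n^{(d-1)/(2d-1)}$; the one step you must import carefully is an intrinsic-dimension (infinite-dimensional) version of matrix Bernstein, machinery the paper never needs because it leans on Theorem \ref{thmkrr}. What your route buys: it does not presuppose Theorem \ref{thmkrr} or the interpolation inequality, the addition formula makes $\|(\Lambda+\gamma\bI)^{-1/2}\psi(\bx)\|^2$ constant in $\bx$ so you even get a uniform pointwise bound of order $\gamma^{1/d}=n^{-1/(2d-1)}$, and your bookkeeping delivers the stated $n^{-1/(2d-1)}$ rate cleanly, whereas the paper's interpolation step as written (raising an $L_2$ norm of order $n^{-d/(2(2d-1))}$ to the power $1/d$) appears to yield only $n^{-1/(2(2d-1))}$; the cost is a longer, more technical concentration argument where the paper's proof is a few lines.
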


\begin{lemma}\label{lem:intpRKHS}
Assume the true function $f^*\in\cN$ with finite RKHS norm, then $g(\bx)$ defined (\ref{eqn:gx}) satisfies
\begin{align*}
\| g - f^*\|_{2} = O_{\PP}\rbr{n^{-1/2}}.
\end{align*}
\end{lemma}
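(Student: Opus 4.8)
The plan is to realize $g$ as the minimum-RKHS-norm interpolant of $f^*$ and then bound the $L_2$ size of the residual $w := g - f^*$ by combining the entropy bound of Lemma~\ref{lementbound} with the empirical-to-population norm comparison of Lemma~\ref{lemma:geer}, in the same spirit as the proof of Theorem~\ref{thmkrr}. First I would record the structural facts about $w$. By the representer theorem, $g = h(\cdot,\bX)(\bH^\infty)^{-1}\by^*$ is the orthogonal projection (in $\cN$) of $f^*$ onto $V := \mathrm{span}\{h(\cdot,\bx_i):i\in[n]\}$, so $\|g\|_{\cN}\le\|f^*\|_{\cN}$ and, more to the point, $\|w\|_{\cN} = \|f^* - \Pi_V f^*\|_{\cN}\le\|f^*\|_{\cN} = O(1)$; hence, after rescaling by the constant $\|f^*\|_{\cN}$, the function $w$ lies in the unit ball $\cN(1)$. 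Moreover $g$ interpolates $f^*$ at $\bx_1,\dots,\bx_n$, so the empirical norm vanishes, $\|w\|_n = 0$. Finally, every $f\in\cN(1)$ has $\|f\|_\infty\le\sup_{\bx\in\mathbb{S}^{d-1}}\sqrt{h(\bx,\bx)} = 1/\sqrt 2$ and $\|f\|_2 = O(1)$, and by Lemma~\ref{lementbound} its $L_\infty$-entropy obeys $H(\delta,\cN(1),\|\cdot\|_\infty)\le A_0\delta^{-2(d-1)/d}$ with exponent $2(d-1)/d < 2$.

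Next I would feed these facts into Lemma~\ref{lemma:geer}. The sub-quadratic entropy exponent makes the entropy integral defining $J_\infty$ converge, so exactly as in the proof of Theorem~\ref{thmkrr} one has $J_\infty(z,\cN(1))\le 2C_0 z^{1/d}$, hence $J_\infty(K,\cN(1)) = O(1)$ for $K = O(1)$. Applying Lemma~\ref{lemma:geer} to the localized class $\cG(R) := \{f\in\cN(1):\|f\|_2\le R\}$ --- for which $K$ and $J_\infty(K,\cdot)$ stay $O(1)$ while $R$ alone decreases --- gives, with probability at least $1 - e^{-t}$,
\[
\sup_{f\in\cG(R)}\bigl|\,\|f\|_n^2 - \|f\|_2^2\,\bigr| \;\le\; C\Bigl(\frac{R\sqrt t}{\sqrt n} + \frac{t}{n}\Bigr).
\]
Since a fixed rescaling of $w$ lies in $\cG(R)$ for every $R\gtrsim\|w\|_2$ and has vanishing empirical norm, taking $R\asymp\|w\|_2$ turns this into the self-bounding inequality $\|w\|_2^2 \lesssim \|w\|_2\sqrt{t/n} + t/n$, which forces $\|w\|_2 = O(\sqrt{t/n})$, i.e.\ $\|g-f^*\|_2 = O_{\PP}(n^{-1/2})$. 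To justify the substitution $R\asymp\|w\|_2$ despite $\|w\|_2$ being random, I would either run a short peeling argument over a dyadic grid of candidate radii, or bootstrap as in Theorem~\ref{thmkrr}: begin with the crude bound $\|w\|_2 = O_{\PP}(1)$, feed it back as $R$ to get $\|w\|_2^2 = O_{\PP}(n^{-1/2})$, and iterate until the $R\sqrt{t/n}$ contribution drops below the $t/n$ floor, at which point the bound stabilizes at $\|w\|_2^2 = O_{\PP}(n^{-1})$.

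The main obstacle is exactly this last point --- driving the rate all the way down to the parametric $n^{-1/2}$ rather than stopping at an intermediate rate. A single pass of Lemma~\ref{lemma:geer} only yields $\|w\|_2^2 = O_{\PP}(n^{-1/2})$, and the crude pointwise ``power-function'' bound $|w(\bx)|\le\|f^*\|_{\cN}\bigl(h(\bx,\bx) - h(\bx,\bX)(\bH^\infty)^{-1}h(\bX,\bx)\bigr)^{1/2}$ integrated against Lemma~\ref{lemwithoutp1} only gives $\|w\|_2^2 = O_{\PP}(n^{-1/(2d-1)})$ --- both too weak. The improvement to $n^{-1}$ comes entirely from exploiting that $\|w\|_n$ is \emph{exactly} zero (noiseless targets) together with the fact that the deviation bound in Lemma~\ref{lemma:geer} is \emph{linear} in the localization radius $R$. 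I would also watch for spurious logarithmic factors creeping in from the peeling/bootstrap; should they appear, an alternative is a classical scattered-data sampling inequality for the RKHS $\cN$ --- whose eigenvalue decay $\lambda_j\asymp j^{-d/(d-1)}$ (Lemma~\ref{lemeigendecay}) identifies it with a Sobolev-type space on $\mathbb{S}^{d-1}$ --- which bounds $\|w\|_2$ through the fill distance of $\{\bx_i\}_{i=1}^n$ and in fact delivers a rate faster than $n^{-1/2}$.
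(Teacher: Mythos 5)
Your approach is the same as the paper's: observe $\|g-f^*\|_n=0$, use Lemma~\ref{lementbound} to control the $L_\infty$-entropy of $\cN(1)$, and then compare empirical to $L_2$ norm via Lemma~\ref{lemma:geer} with an iterated/localized radius, exactly the bootstrap borrowed from the proof of Theorem~\ref{thmkrr}. You are in fact more careful than the paper's terse write-up: the paper's single displayed step (one pass with $R=O(1)$) combined with $\|g-f^*\|_n=0$ literally yields only $\|g-f^*\|_2^2=O_{\PP}(n^{-1/2})$, i.e.\ $\|g-f^*\|_2=O_{\PP}(n^{-1/4})$, and the localization iteration you spell out — exploiting that the deviation bound is linear in $R$ while the empirical norm is exactly zero — is precisely what is needed to push this to the stated $O_{\PP}(n^{-1/2})$.
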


\begin{lemma}[Lemma C.1 in \cite{arora2019fine}]
\label{lem:wtoint}
If $\lambda_0=\lambda_{min}(\bH^{\infty})>0$, $m=\Omega\rbr{\frac{n^6}{\lambda_0^4\tau^2\delta^3}}$ and $\eta=O\rbr{\frac{\lambda_0}{n^2}}$, with probability at least $1-\delta$ over the random initialization, we have
\begin{align*}
    \|\bw_r(k)-\bw_r(0) \|_2 \le R_0,~~\forall ~r\in [m],  \forall \ k\ge 0,
\end{align*}
where $R_0=\frac{4\sqrt{n}\|\by-\bu(0)\|_2}{\sqrt{m}\lambda_0}$.
\end{lemma}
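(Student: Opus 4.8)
The plan is to run a coupled induction on the GD iteration $k$, maintaining two statements at once: (i) the displacement bound $\max_{r\in[m]}\|\bw_r(k)-\bw_r(0)\|_2\le R_0$; and (ii) the geometric decay of the training residual $\|\by-\bu(k)\|_2^2\le(1-\eta\lambda_0/2)^k\|\by-\bu(0)\|_2^2$ (the decay estimate of \cite{du2018gradient,arora2019fine}). The base case $k=0$ is immediate for (i) and trivial for (ii). Before the induction I record the high-probability facts that will be used throughout: with the stated lower bound on $m$, standard Gaussian concentration gives $\|\bH(0)-\bH^\infty\|_2\le\lambda_0/4$, hence $\lambda_{\min}(\bH(0))\ge 3\lambda_0/4$, with probability at least $1-\delta/3$; and since $f^*\in\cN$ is bounded ($h$ being a bounded kernel), the $\epsilon_i$ have finite variance, and $\bu(0)$ concentrates around $\zero$ with $\mathrm{Var}(u_i(0))=\tau^2/2$, one has $\|\by-\bu(0)\|_2=O(\sqrt n)$ with probability at least $1-\delta/3$, so in particular $R_0=O\rbr{n/(\sqrt m\,\lambda_0)}$. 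Condition on these events.

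\emph{Closing (i) using (ii).} Suppose (ii) holds for all $s\le k$. From the GD update $\bw_r(s+1)-\bw_r(s)=-\eta\,\frac{\partial\Phi(\bW(s))}{\partial\bw_r}$, the gradient formula $\frac{\partial\Phi(\bW)}{\partial\bw_r}=\frac{1}{\sqrt m}a_r\sum_{i=1}^n(u_i-y_i)\mathbb{I}_{r,i}\bx_i$, $|a_r|=1$, $\|\bx_i\|_2=1$, and Cauchy--Schwarz, the per-step displacement satisfies $\|\bw_r(s+1)-\bw_r(s)\|_2\le\frac{\eta\sqrt n}{\sqrt m}\|\bu(s)-\by\|_2$. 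Telescoping and invoking (ii),
\[
  \|\bw_r(k+1)-\bw_r(0)\|_2 \;\le\; \frac{\eta\sqrt n}{\sqrt m}\sum_{s=0}^{k}\|\bu(s)-\by\|_2 \;\le\; \frac{\eta\sqrt n}{\sqrt m}\|\by-\bu(0)\|_2\sum_{s\ge 0}\rbr{1-\frac{\eta\lambda_0}{2}}^{s/2}.
\]
Since $1-(1-x/2)^{1/2}\ge x/4$ for $x\in(0,1)$, the geometric sum is at most $4/(\eta\lambda_0)$, so the right-hand side is at most $\frac{4\sqrt n\,\|\by-\bu(0)\|_2}{\sqrt m\,\lambda_0}=R_0$, which is (i) at step $k+1$.

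\emph{Closing (ii) using (i).} Suppose (i) holds for all $s\le k$. The point is that moving every neuron by at most $R_0$ flips only a few ReLU indicators: $\mathbb{I}_{r,i}(s)\neq\mathbb{I}_{r,i}(0)$ forces $|\bw_r(0)^\top\bx_i|\le R_0$, an event of probability $\lesssim R_0/\tau$ because $\bw_r(0)^\top\bx_i\sim N(0,\tau^2)$; a union/Bernstein bound over $r\in[m]$ shows that, with probability at least $1-\delta/3$, at most $O(mnR_0/\tau)$ of the pairs $(r,i)$ ever flip, whence $\|\bH(s)-\bH(0)\|_2\le\|\bH(s)-\bH(0)\|_F= O(nR_0/\tau)$ for all $s\le k$, and by the base-case bound and Weyl's inequality $\lambda_{\min}(\bH(s))\ge\lambda_0/2$. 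Writing $\by-\bu(s+1)=(\bI-\eta\bH(s))(\by-\bu(s))+\eta\,\xi_s$ with $\xi_s\in\RR^n$ collecting the effect of the ReLU patterns that flip at step $s$, one bounds $\|\xi_s\|_2\le O\rbr{{\rm poly}(n)\,R_0/\tau}\|\bu(s)-\by\|_2$ using again the per-step displacement and the flip count; this yields the perturbed contraction $\|\by-\bu(s+1)\|_2\le\big(1-\eta\lambda_0+O(\eta\,{\rm poly}(n)\,R_0/\tau)\big)\|\by-\bu(s)\|_2$, and with $\eta=O(\lambda_0/n^2)$ and $m$ as assumed (so that $R_0$ is suitably small), the parenthesis is $\le 1-\eta\lambda_0/2$, giving (ii) at step $k+1$. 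A union bound over the three conditioning events yields total failure probability at most $\delta$, completing the induction.

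\emph{Main obstacle.} The delicate part is the second step: one must control the number of flipped activations, hence $\|\bH(s)-\bH(0)\|_2$ and the nonlinear remainder $\xi_s$, \emph{using} the displacement bound that is itself only licensed by the residual decay one is trying to propagate — so the coupled induction must be arranged so that neither direction is circular, and the quantitative choices of $m$ (equivalently, the smallness of $R_0$ guaranteed by $m=\Omega(n^6/(\lambda_0^4\tau^2\delta^3))$) and of $\eta$ must be precisely those that keep the perturbed contraction factor strictly below $1-\eta\lambda_0/2$. This is the bookkeeping carried out in \cite{du2018gradient,arora2019fine}, and I would reuse it verbatim.
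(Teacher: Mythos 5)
Your proposal is correct and is essentially the standard coupled-induction argument of Du et al.\ and Arora et al.\ (Lemma C.1), which is exactly what the paper relies on — the paper itself gives no proof and simply cites that reference. Your telescoping step recovering $R_0=\frac{4\sqrt n\,\|\by-\bu(0)\|_2}{\sqrt m\,\lambda_0}$ from the geometric residual decay, and the flipped-activation control closing the contraction, match the cited proof.
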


\begin{lemma}[\cite{arora2019fine}]\label{lem:uky}
Denote $u_i(k) = f_{\bW(k),\ba}(\bx_i)$ to be the network's prediction on the $i$-th input and let $\bu(k) = (u_1(k),...,u_n(k))^\top\in \RR^n$ denote all $n$ predictions on the points $\bx_1,...,\bx_n$ at iteration $k$. We have
\begin{align*}
    \bu(k)-\by=(\bI-\eta\bH^{\infty})^k(\bu(0)-\by)+\be(k)
\end{align*}
where 
\begin{align*}
 \|\be(k)\|_2 = O\rbr{k\rbr{1-\frac{\eta\lambda_0}{4}}^{k-1}\frac{\eta n^{5/2}\|\by-\bu(0)\|_2^2}{\sqrt{m}\lambda_0\tau\delta}}.
 \end{align*}
\end{lemma}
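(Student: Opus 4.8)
The plan is to extract, from the gradient-descent analysis of \cite{arora2019fine} (and \cite{du2018gradient}), a one-step recursion for the training residual $\bu(k)-\by$ and then unroll it, absorbing all higher-order effects into $\be(k)$. First I would compute the prediction increment exactly, $u_i(k+1)-u_i(k)=\frac{1}{\sqrt m}\sum_{r=1}^m a_r\big(\sigma(\bw_r(k+1)^\top\bx_i)-\sigma(\bw_r(k)^\top\bx_i)\big)$, substitute the GD update $\bw_r(k+1)-\bw_r(k)=-\frac{\eta}{\sqrt m}a_r\sum_{j=1}^n(u_j(k)-y_j)\II_{r,j}(k)\bx_j$, and for each fixed $i$ partition $[m]$ into neurons whose sign on $\bx_i$ does not flip between steps $k$ and $k+1$ and those that do. On the stable neurons $\sigma$ acts affinely, so their aggregate contribution is $-\eta[\bH(k)(\bu(k)-\by)]_i$ up to the stable-minus-full correction; the flipped neurons contribute an exact ReLU increment $e_{3,i}(k)$. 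Writing $\bH(k)=\bH^\infty+(\bH(k)-\bH^\infty)$ gives
\[
\bu(k+1)-\by=(\bI-\eta\bH^\infty)(\bu(k)-\by)+\be_1(k)+\be_2(k)+\be_3(k),
\]
with $\be_1(k)=\eta(\bH^\infty-\bH(k))(\bu(k)-\by)$ (Gram-matrix drift), $\be_2(k)=\eta\bH^\perp(k)(\bu(k)-\by)$ (the flipped-neuron sub-block of the linearization), and $\be_3(k)$ the vector of ReLU increments over flipped neurons.

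Next I would bound $\|\be_1(k)+\be_2(k)+\be_3(k)\|_2$ using three ingredients. By Lemma~\ref{lem:wtoint}, every weight stays within $R_0=\frac{4\sqrt n\|\by-\bu(0)\|_2}{\sqrt m\lambda_0}$ of its initialization; since $\bw_r(0)^\top\bx_i\sim N(0,\tau^2)$ has bounded density near $0$, an anti-concentration argument shows that, off an event of probability $\delta$, at most $O(mR_0/(\tau\delta))$ neurons are ever unstable for any $\bx_i$. Concentration of the random-feature Gram matrix keeps $\|\bH(0)-\bH^\infty\|_2$ small, and the flipped-activation count bounds $\|\bH(k)-\bH(0)\|_F$ and $\|\bH^\perp(k)\|_F$, so $\|\bH^\infty-\bH(k)\|_2$ and $\|\bH^\perp(k)\|_2$ are $O(nR_0/(\tau\delta))$ up to lower-order terms. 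Finally $\|\bw_r(k+1)-\bw_r(k)\|_2\le\frac{\eta\sqrt n}{\sqrt m}\|\bu(k)-\by\|_2$ bounds $\|\be_3(k)\|_2$ by this quantity times the flipped fraction. Combining and substituting $R_0$ gives a per-step bound of the form $\frac{\eta\,{\rm poly}(n)}{\sqrt m\lambda_0\tau\delta}\|\bu(k)-\by\|_2$, with the union bounds pinning ${\rm poly}(n)=n^{5/2}$.

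Then I would invoke the a priori linear convergence $\|\bu(k)-\by\|_2\le(1-\eta\lambda_0/4)^k\|\bu(0)-\by\|_2$ from \cite{du2018gradient,arora2019fine} (in practice carried jointly with the above estimates by a single induction on $k$). Unrolling the recursion yields $\be(k)=\sum_{t=0}^{k-1}(\bI-\eta\bH^\infty)^{k-1-t}\big(\be_1(t)+\be_2(t)+\be_3(t)\big)$; since $\|(\bI-\eta\bH^\infty)^{k-1-t}\|_2\le(1-\eta\lambda_0)^{k-1-t}\le(1-\eta\lambda_0/4)^{k-1-t}$ and $\|\bu(t)-\by\|_2\le(1-\eta\lambda_0/4)^t\|\bu(0)-\by\|_2$, each of the $k$ summands is at most $(1-\eta\lambda_0/4)^{k-1}\cdot\frac{\eta n^{5/2}\|\by-\bu(0)\|_2^2}{\sqrt m\lambda_0\tau\delta}$, giving the claimed bound $\|\be(k)\|_2=O\rbr{k(1-\eta\lambda_0/4)^{k-1}\frac{\eta n^{5/2}\|\by-\bu(0)\|_2^2}{\sqrt m\lambda_0\tau\delta}}$.

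The main obstacle is the coupled bookkeeping in the middle step: the flipped-neuron count depends on $R_0$, which depends on $\|\by-\bu(0)\|_2$; the perturbation bounds depend on that count; and the linear-convergence estimate depends on the perturbation bounds. These cannot be established sequentially but must be maintained together by one induction on $k$, with the anti-concentration step forcing the $\delta^{-1}$ factor and each concentration/union bound contributing a power of $n$. Getting the precise exponent $n^{5/2}$ and the exact geometric factor $(1-\eta\lambda_0/4)^{k-1}$ requires care, but uses no idea beyond \cite{arora2019fine}.
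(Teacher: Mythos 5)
Your sketch is correct and follows essentially the same route as the source: the paper itself does not prove Lemma~\ref{lem:uky} but cites \cite{arora2019fine}, and your reconstruction (one-step linearization around $\bH^\infty$ with Gram-drift and flipped-neuron error terms, anti-concentration plus Markov for the $\tau^{-1}\delta^{-1}$ factor, the $R_0$ weight-movement bound, linear convergence of the residual, and unrolling the recursion against $(\bI-\eta\bH^\infty)^{k-1-t}$) is precisely that argument. The only loose end is the bookkeeping that fixes the exact power $n^{5/2}$, which you flag yourself and which only affects the polynomial factor, not the structure of the bound.
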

\begin{lemma}\label{lem:bounds4}
With probability  at least $1-\delta$, we have
\begin{enumerate} 
    \item [(a)] $\|\bZ(k)-\bZ(0)\|_F = O\rbr{\frac{n^{3/4}\|\by-\bu(0)\|_2^{1/2}}{\sqrt{m^{1/2}\lambda_0\tau\delta}}}$;
    \item[(b)] $\|\bH(0)-\bH^{\infty}\|_F=O\rbr{\frac{n\sqrt{\log(n/\delta)}}{\sqrt{m}}}$;
    \item [(c)] $\|\bz_0(\cdot)^\top\bZ(0)-h(\cdot,\bX)\|_2 =  O\rbr{\frac{\sqrt{n}\sqrt{\log(n/\delta)}}{\sqrt{m}}}$;
    \item[(d)] $\|z_0(\cdot)^{\top}{\rm vec}(\bW(0)) \|_2 = O\rbr{\tau \sqrt{\log (1/\delta)}}$.
\end{enumerate}


\end{lemma}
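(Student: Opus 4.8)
The plan is to prove the four bounds (a)--(d) separately and then combine them by a union bound (rescaling the failure probability in each by a constant factor). The starting point in all four cases is the block structure of the objects involved: the $(r,i)$ block of $\bZ(k)$ is $\frac{1}{\sqrt m}a_r\mathbb{I}_{r,i}(k)\bx_i$, so that $\bH(0)_{ij}=(\bx_i^\top\bx_j)\cdot\frac1m\sum_{r=1}^m\mathbb{I}_{r,i}(0)\mathbb{I}_{r,j}(0)$; the $i$-th coordinate of $\bz_0(\bx)^\top\bZ(0)$ (where $\bz_0(\cdot)=\bz(\cdot)|_{\bW=\bW(0)}$) equals $(\bx^\top\bx_i)\cdot\frac1m\sum_{r=1}^m\mathbb{I}_{r}(\bx)\mathbb{I}_{r,i}(0)$ with all indicators evaluated at initialization; and, using $\mathbb{I}\{z\ge0\}z=\sigma(z)$, $\bz_0(\bx)^\top{\rm vec}(\bW(0))=\frac{1}{\sqrt m}\sum_{r=1}^m a_r\sigma(\bw_r(0)^\top\bx)=f_{\bW(0),\ba}(\bx)$.

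For (b) and (c) I would use neuron-wise concentration. For fixed $i,j$, $\frac1m\sum_r\mathbb{I}_{r,i}(0)\mathbb{I}_{r,j}(0)$ is an average of $m$ i.i.d.\ Bernoulli variables with mean $p_{ij}:=\PP_{\bw\sim N(0,\bI)}(\bw^\top\bx_i\ge0,\bw^\top\bx_j\ge0)$; since the scaling $\tau$ of the initialization does not affect the indicators, $\bH^\infty_{ij}=(\bx_i^\top\bx_j)p_{ij}$ by \eqref{ntkh}. Hoeffding's inequality plus a union bound over the $n^2$ pairs gives $|\bH(0)_{ij}-\bH^\infty_{ij}|=|\bx_i^\top\bx_j|\,O(\sqrt{\log(n/\delta)/m})=O(\sqrt{\log(n/\delta)/m})$ uniformly, and summing $n^2$ such squared entries yields (b). Running the identical argument with a union bound over the $n$ indices $i$ at a fixed evaluation point $\bx$ gives $|(\bz_0(\bx)^\top\bZ(0))_i-h(\bx,\bx_i)|=O(\sqrt{\log(n/\delta)/m})$, and collecting the $n$ coordinates yields (c); a version uniform over $\bx\in\mathbb{S}^{d-1}$, if desired, follows by a standard $\varepsilon$-net argument on the sphere.

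For (a) the idea is to count activation-pattern flips. Conditioning on the event of Lemma \ref{lem:wtoint} (probability at least $1-\delta$, and it controls $\|\bw_r(k)-\bw_r(0)\|_2\le R_0$ for all $r$ and all $k$ simultaneously), the $(r,i)$ block of $\bZ(k)-\bZ(0)$ vanishes unless $\mathbb{I}_{r,i}(k)\neq\mathbb{I}_{r,i}(0)$, which forces $|\bw_r(0)^\top\bx_i|\le R_0$; when nonzero, that block has Frobenius norm $1/\sqrt m$ because $|a_r|=1$ and $\|\bx_i\|_2=1$. Hence $\|\bZ(k)-\bZ(0)\|_F^2\le\frac1m\#\{(r,i):|\bw_r(0)^\top\bx_i|\le R_0\}$. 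Since $\bw_r(0)^\top\bx_i\sim N(0,\tau^2)$, $\PP(|\bw_r(0)^\top\bx_i|\le R_0)=O(R_0/\tau)$, so the expected count is $O(mnR_0/\tau)$ and Markov bounds it by $O(mnR_0/(\tau\delta))$ with probability at least $1-\delta$; plugging in $R_0=4\sqrt n\,\|\by-\bu(0)\|_2/(\sqrt m\lambda_0)$ and taking square roots reproduces the stated order ($\sqrt{m^{1/2}\lambda_0\tau\delta}=m^{1/4}\sqrt{\lambda_0\tau\delta}$). The one delicate point is that $R_0$ is a function of the same randomness as the $\bw_r(0)$'s; this is handled by first fixing a deterministic (up to the same failure probability) upper bound on $\|\by-\bu(0)\|_2$ --- e.g.\ $\|\by-\bu(0)\|_2\le\|\by\|_2+O_{\PP}(\tau\sqrt n)$ --- carrying out the flip count against the corresponding fixed threshold, and then re-expressing the bound in terms of $\|\by-\bu(0)\|_2$. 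I expect this measurability bookkeeping, rather than any single inequality, to be the main obstacle.

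For (d), condition on $\bW(0)$: then $f_{\bW(0),\ba}(\bx)=\frac1{\sqrt m}\sum_r a_r\sigma(\bw_r(0)^\top\bx)$ is a sum of independent mean-zero terms bounded by $\sigma(\bw_r(0)^\top\bx)/\sqrt m$ in absolute value, hence sub-Gaussian with variance proxy $v:=\frac1m\sum_r\sigma(\bw_r(0)^\top\bx)^2$, and Hoeffding gives $|f_{\bW(0),\ba}(\bx)|=O(\sqrt{v\log(1/\delta)})$ conditionally with probability at least $1-\delta/2$. It remains to note that the summands $\sigma(\bw_r(0)^\top\bx)^2$ are i.i.d.\ sub-exponential with mean $\tfrac{\tau^2}{2}$, so $v=O(\tau^2)$ with probability $1-e^{-\Omega(m)}\ge1-\delta/2$; combining the two estimates gives (d). A final union bound over the four displayed events, with the failure probabilities rescaled, completes the proof.
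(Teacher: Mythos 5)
Your proposal is correct, and for the parts the paper actually proves it follows essentially the same route: your (c) is verbatim the paper's argument (coordinatewise Hoeffding for the $m$-average $\frac1m\sum_r\bx^\top\bx_i\mathbb{I}_{r}(\bx)\mathbb{I}_{r,i}(0)$ against its expectation $h(\bx,\bx_i)$, union bound over $i\in[n]$), and your (d) reaches the same conclusion by a slight variant --- you condition on $\bW(0)$, apply Hoeffding over the Rademacher signs with variance proxy $v=\frac1m\sum_r\sigma(\bw_r(0)^\top\bx)^2$, and then control $v=O(\tau^2)$ by sub-exponential concentration, whereas the paper applies Hoeffding once to the i.i.d.\ mean-zero sub-Gaussian variables $V_r=a_r\mathbb{I}\{\bw_r(0)^\top\bx\ge0\}\bw_r(0)^\top\bx$ with parameter $\tau$; your two-step version needs the harmless extra condition $m\gtrsim\log(1/\delta)$, which holds under the lemma's overparametrization assumption. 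The genuine difference is in (a) and (b): the paper simply cites \cite{arora2019fine}, while you reconstruct the proofs --- entrywise Hoeffding over the $n^2$ pairs for (b), and for (a) the activation-flip count on the event of Lemma \ref{lem:wtoint}, $\|\bZ(k)-\bZ(0)\|_F^2\le\frac1m\#\{(r,i):|\bw_r(0)^\top\bx_i|\le R_0\}$, bounded via the Gaussian anti-concentration estimate $\PP(|\bw_r(0)^\top\bx_i|\le R_0)=O(R_0/\tau)$ and Markov, which reproduces the stated order exactly. These are precisely the arguments behind the cited results, so nothing is lost; and you correctly flag the only delicate point, namely that $R_0$ depends on the same initialization randomness through $\|\by-\bu(0)\|_2$, which the cited proof handles in the way you sketch (replace $R_0$ by a high-probability deterministic threshold before counting flips). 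The final union bound over the four events, with $\delta$ rescaled by a constant, is also how the paper implicitly combines the statements.
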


\begin{lemma}[Properties of Loewner order] \label{lem:Loewner}  For two positive semi-definite matrices $\bA$ and $\bB$, 
\begin{enumerate}
    \item [(a).] Suppose $\bA$ is non-singular, then $\bA\ge\bB\Longleftrightarrow \lambda_{max}(\bB\bA^{-1})\le 1$ and $\bA>\bB\Longleftrightarrow \lambda_{\max}(\bB\bA^{-1})>1$, where $\lambda_{\max}(\cdot)$ denotes the maximum eigenvalue of the input matrix.
    \item[(b).] Suppose $\bA$, $\bB$ and $\bQ$ are positive definite, $\bA$ and $\bB$ are exchangeable, then $\bA\ge \bB \Longrightarrow \bA\bQ\bA \ge \bB\bQ\bB$.
\end{enumerate}
\end{lemma}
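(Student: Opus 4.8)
The plan is to reduce the Loewner inequality to a scalar eigenvalue bound by a symmetrizing congruence. Since $\bA$ is positive semidefinite and nonsingular it is positive definite and admits $\bA^{1/2}$ and $\bA^{-1/2}$. Congruence by the invertible symmetric matrix $\bA^{-1/2}$ preserves the Loewner order and $\bA^{-1/2}\bA\bA^{-1/2}=\bI$, so $\bA\ge\bB$ is equivalent to $\bI\ge\bA^{-1/2}\bB\bA^{-1/2}$; since the latter matrix is symmetric, this is in turn equivalent to $\lambda_{\max}(\bA^{-1/2}\bB\bA^{-1/2})\le1$. Finally, $\bB\bA^{-1}=\bA^{1/2}\big(\bA^{-1/2}\bB\bA^{-1/2}\big)\bA^{-1/2}$ is similar to $\bA^{-1/2}\bB\bA^{-1/2}$, hence has the same eigenvalues, so $\lambda_{\max}(\bB\bA^{-1})=\lambda_{\max}(\bA^{-1/2}\bB\bA^{-1/2})$; chaining these equivalences yields $\bA\ge\bB\iff\lambda_{\max}(\bB\bA^{-1})\le1$, and running the identical chain with every inequality made strict gives the strict statement. (Only $\bA$ need be invertible here; $\bB$ may be singular.)

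\textbf{Part (b).} For part (b) the plan is to exploit that exchangeability puts $\bA$, $\bB$ — and, as the hypothesis is used, $\bQ$ — in a common orthonormal eigenbasis. I would pick an orthogonal $\bU$ with $\bA=\bU\bD_A\bU^\top$, $\bB=\bU\bD_B\bU^\top$, $\bQ=\bU\bD_Q\bU^\top$, where $\bD_A,\bD_B$ are diagonal with nonnegative entries and $\bD_Q$ is diagonal with positive entries. From $\bA\ge\bB$ one reads off the entrywise inequality $(\bD_A)_{ii}\ge(\bD_B)_{ii}\ge0$, hence $(\bD_A)_{ii}^2\ge(\bD_B)_{ii}^2$. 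Since diagonal matrices commute, $\bA\bQ\bA=\bU\bD_A^2\bD_Q\bU^\top$ and $\bB\bQ\bB=\bU\bD_B^2\bD_Q\bU^\top$, so $\bA\bQ\bA-\bB\bQ\bB=\bU\big(\bD_A^2-\bD_B^2\big)\bD_Q\bU^\top$ with $\big(\bD_A^2-\bD_B^2\big)\bD_Q$ diagonal and entrywise nonnegative; therefore $\bA\bQ\bA\ge\bB\bQ\bB$. A shorter route reusing part (a): $\bA\bQ\bA$ is symmetric positive definite (as $\bx^\top\bA\bQ\bA\bx=(\bA\bx)^\top\bQ(\bA\bx)>0$ for $\bx\ne0$), so by part (a) it suffices to verify $\lambda_{\max}\big((\bB\bQ\bB)(\bA\bQ\bA)^{-1}\big)\le1$; expanding $(\bA\bQ\bA)^{-1}=\bA^{-1}\bQ^{-1}\bA^{-1}$ and, since $\bA,\bB,\bQ$ pairwise commute, rearranging all the factors collapses the product to $(\bB\bA^{-1})^2$, and $\lambda_{\max}\big((\bB\bA^{-1})^2\big)=\lambda_{\max}(\bB\bA^{-1})^2\le1$ by part (a) applied to $\bA\ge\bB$.

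\textbf{Main obstacle.} The only substantive point is part (b): the map $\bX\mapsto\bX\bQ\bX$ (already $\bX\mapsto\bX^2$ when $\bQ=\bI$) is \emph{not} Loewner-monotone on positive semidefinite matrices in general, so the proof must genuinely use exchangeability — concretely, passing to a simultaneous eigenbasis — rather than any naive monotonicity argument; one should also be sure that ``exchangeable'' is read strongly enough that $\bQ$ shares that eigenbasis wherever the lemma is invoked (if only $\bA$ and $\bB$ are assumed to commute, this has to be checked in each application). Part (a) is a standard reformulation (congruence by $\bA^{-1/2}$ together with the similarity $\bB\bA^{-1}\sim\bA^{-1/2}\bB\bA^{-1/2}$) and presents no difficulty; its purpose is simply to package the equivalence in the form the paper's other proofs use.
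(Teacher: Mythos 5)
The paper never actually proves Lemma \ref{lem:Loewner}: it is stated as a standard fact about the Loewner order and used directly in the proof of Theorem \ref{thm:gd}, so there is no in-paper argument to compare yours with; your proof is the standard one and, with the two caveats below, it is correct and complete. For part (a), the congruence by $\bA^{-1/2}$ together with the similarity $\bB\bA^{-1}=\bA^{1/2}\rbr{\bA^{-1/2}\bB\bA^{-1/2}}\bA^{-1/2}$ is exactly right; just note that what your chain delivers in the strict case is $\bA>\bB\Longleftrightarrow\lambda_{\max}(\bB\bA^{-1})<1$, whereas the lemma as printed says ``$>1$''. That is a typo in the statement (take $\bA=2\bI$, $\bB=\bI$), and the ``$<1$'' version is the one actually invoked in the proof of Theorem \ref{thm:gd}, so you are proving the intended claim; it would be cleaner to say so explicitly rather than calling it ``the strict statement''.

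For part (b), the concern you raise is not merely stylistic: with only $\bA\bB=\bB\bA$ the claim is false. For instance, $\bA=\mathrm{diag}(2,1)$, $\bB=\bI_2$ and $\bQ=\begin{pmatrix}1&1/2\\ 1/2&1\end{pmatrix}$ are positive definite, $\bA$ and $\bB$ commute and $\bA\ge\bB$, yet $\bA\bQ\bA-\bB\bQ\bB=\begin{pmatrix}3&1/2\\ 1/2&0\end{pmatrix}$ has negative determinant and is not positive semidefinite. So the hypothesis must indeed be read as all three matrices sharing an orthonormal eigenbasis (equivalently, $\bQ$ commuting with $\bA$ and $\bB$ as well), and under that reading both of your arguments --- the simultaneous-diagonalization computation and the shortcut via part (a) using $(\bB\bQ\bB)(\bA\bQ\bA)^{-1}=(\bB\bA^{-1})^2$ --- are valid. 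Your warning that the stronger reading has to be checked at each invocation is apt: in the proof of Theorem \ref{thm:gd} the lemma is applied with $\bQ=\bS+\bI$, $\bB=(\bS+\bI)^{-1}-n^{-C_0}\bI$ and $\bA=(\bS+\bI)^{-1}-(\bI-\eta\bH^{\infty})^k$, and the commutation of $\bA$ with the other two is not obvious; that gap lies in the paper's statement and use of the lemma, not in your proof.
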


\subsection{Proof of Theorem \ref{thm:gd}}
For notational simplification, we use $\hat f_k = f_{\bW(k),\ba}$. Define 
\begin{align}\label{tildeg}
    \tilde{f}_k(\bx)={\rm vec}(\bW(k))^\top\bz_0(\bx),
\end{align}
where $\bz_0(\bx)=\bz(\bx)|_{\bW=\bW(0)}$.
Then we can write the following decomposition
\begin{align}\label{eq2Xinpf41}
    \hat{f}_k-f^* = (\hat{f}_k- \tilde{f}_k) + (\tilde{f}_k - g) + (g - f^*)= \Delta_1+\Delta_2+\Delta_3,
\end{align}
where $g$ is as in \eqref{eqn:gx}.

Before the proof, we provide a road map of this proof. We first show that $\|\Delta_1\|_{2}$ and $\|\Delta_3\|_{2}$ are small. We then show the term $\|\Delta_2\|_{2}$ can be large if the iteration number is too small or too large. Intuitively, if the iteration number if too small, the resulting estimator $\tilde{f}_k$ is not well-trained. On the other hand, if the iteration number is too large, then the resulting estimator $\tilde{f}_k$ could be over-fitted. In either case, the error term $\|\Delta_2\|_{2}$ is large.

It follows from Lemma \ref{lem:intpRKHS} that
\begin{align}\label{Delta3b}
    \|\Delta_3\|_{2}=O_{\PP}\rbr{\sqrt{\frac{{1}}{n}}}.
\end{align} 

For $\Delta_1$, under the assumptions of Lemma \ref{lem:wtoint}, with high probability, we have $\|\bw_r(k)-\bw_r(0) \|_2 \le R_0$. Thus, for fixed $\bx$, we have
\begin{align*}
  | \bw_r(k)^\top\bx-\bw_r(0)^\top\bx | \le \|\bw_r(k)-\bw_r(0) \|_2 \|\bx\|_2\le R_0.
\end{align*}
Define event
\begin{align*}
    B_{r}(\bx)=\{|\bw_r(0)^\top\bx|\leq R_0\}, \forall r\in [m].
\end{align*}
If $\mathbb{I}\{B_{r}(\bx)\}=0$, then we have $\mathbb{I}_{r,k}(\bx)=\mathbb{I}_{r,0}(\bx)$, where $\mathbb{I}_{r,k}(\bx)=\mathbb{I}\{\bw_r(k)^\top\bx\ge 0\}$. Therefore, for any fixed $\bx$, we have
\begin{align*}
       |\hat{f}_k(\bx)- \tilde{f}_k(\bx)| &=\left|\frac{1}{\sqrt{m}}\sum_{r=1}^m a_r (\mathbb{I}_{r,k}(\bx)-\mathbb{I}_{r,0}(\bx))\bw_r(k)^\top\bx\right|\\
        &= \left|\frac{1}{\sqrt{m}}\sum_{r=1}^m a_r \mathbb{I}\{B_{r}(\bx)\} (\mathbb{I}_{r,k}(\bx)-\mathbb{I}_{r,0}(\bx))\bw_r(k)^\top\bx\right|\\
        &\le \frac{1}{\sqrt{m}}\sum_{r=1}^m \mathbb{I}\{B_{r}(\bx)\}|\bw_r(k)^\top\bx|\\
        &\le \frac{1}{\sqrt{m}} \sum_{r=1}^m \mathbb{I}\{B_{r}(\bx)\}\rbr{|\bw_r(0)^\top\bx|+ |\bw_r(k)^\top\bx-\bw_r(0)^\top\bx |}\\
        &\le  \frac{2R_0}{\sqrt{m}} \sum_{r=1}^m \mathbb{I}\{B_{r}(x)\}
\end{align*}
Recall that $\|\bx\|_2=1$, which implies that $\bw_r(0)^\top\bx$ is distributed as $N (0,\tau^2)$. Therefore, we have 
\begin{align*}
    \mathbb{E}[\mathbb{I}\{B_r(x)\}] = \PP\rbr{|\bw_r(0)^\top\bx|\leq R_0}
    =\int_{-R_0}^{R_0}\frac{1}{\sqrt{2\pi}\tau}\exp\left\{-\frac{u^2}{2\tau^2}\right\}du\le \frac{2R_0}{\sqrt{2\pi}\tau}.
\end{align*}
By Markov's inequality, with probability at least $1-\delta$, we have 
\begin{align*}
     \sum_{r=1}^m \mathbb{I}\{B_{r}(x)\} \le \frac{2mR_0}{\sqrt{2\pi}\tau\delta}.
\end{align*}
Thus, we have
\begin{align}\label{Delta1b}
    \|\Delta_1\|_{2} \le  \frac{2R_0}{\sqrt{m}} \|\sum_{r=1}^m \mathbb{I}\{B_{r}(\cdot)\}\|_{2}\le \frac{4\sqrt{m}R_0^2}{\sqrt{2\pi}\tau\delta}= O\rbr{\frac{n\|\by-\bu(0)\|_2^2}{\sqrt{m}\tau\lambda_0^2\delta}}.
\end{align}
Next, we evaluate $\Delta_2$. Recall that the GD update rule is
\begin{align*}
    \mathrm{vec}(\bW(j+1))=\mathrm{vec}(\bW(j))-\eta \bZ(j)(\bu(j)-\by), j\ge 0.
\end{align*}
Applying Lemma \ref{lem:uky}, we can get 
\begin{align*}
   & {\rm vec}(\bW(k)) - {\rm vec}(\bW(0)) \\
   = & \sum_{j=0}^{k-1} ({\rm vec}(\bW(j+1)) - {\rm vec}(\bW(j)))\\
    = & -\sum_{j=0}^{k-1} \eta \bZ(j)(\bu(j) - \by)\\
    = & \sum_{j=0}^{k-1} \eta\bZ(j)(\bI - \eta \bH^{\infty})^j(\by-\bu(0))-\sum_{j=0}^{k-1} \eta\bZ(j)\be(j)\\
    = & \sum_{j=0}^{k-1} \eta\bZ(0)(\bI - \eta \bH^{\infty})^j(\by-\bu(0))+
    \sum_{j=0}^{k-1} \eta(\bZ(j)-\bZ(0))(\bI - \eta \bH^{\infty})^j(\by-\bu(0))
    -\sum_{j=0}^{k-1} \eta\bZ(j)\be(j)\\
    = & \sum_{j=0}^{k-1} \eta\bZ(0)(\bI - \eta \bH^{\infty})^j(\by-\bu(0))+\zeta(k).
\end{align*}
For the first term of $\zeta(k)$, applying Lemma \ref{lem:bounds4} (a), with probability at least $1-\delta$, we get 
\begin{align*}
    &\|\sum_{j=0}^{k-1} \eta(\bZ(j)-\bZ(0))(\bI - \eta \bH^{\infty})^j(\by-\bu(0))\|_2\\
    \le & \sum_{j=0}^{k-1} O\rbr{\frac{n^{3/4}\|\by-\bu(0)\|_2^{1/2}}{\sqrt{m^{1/2}\lambda_0\tau\delta}}} \eta\|\bI - \eta \bH^{\infty}\|_2^j\|(\by-\bu(0))\|_2\\
    \le &O\rbr{\frac{n^{3/4}\|\by-\bu(0)\|_2^{3/2}}{\sqrt{m^{1/2}\lambda_0\tau\delta}}}\sum_{j=0}^{k-1} \eta(1-\eta\lambda_0)^j\\
    = &  O\rbr{\frac{n^{3/4}\|\by-\bu(0)\|_2^{3/2}}{m^{1/4}\tau^{1/2}\lambda_0^{3/2}\delta^{1/2}}}.
\end{align*}

Denote that $z_i(j)=z(\bx_i)|_{\bW=\bW(j)}$. By (\ref{z(x)}), we have $\|\bz_i(j)\|_2 \le 1$. Thus, 
\begin{align}\label{boundZ(K)}
    \|\bZ(j)\|_F = \rbr{\sum_{i=1}^n\|\bz_i(j)\|_2^2}^{\frac{1}{2}} \le \sqrt{n}~~,\forall ~j\ge 0.
\end{align}
 For the second term of $\zeta(k)$, we have
\begin{align*}
    &\|\sum_{j=0}^{k-1} \eta\bZ(j)\be(j)\|_2 \\
    \le & \sum_{j=0}^{k-1}\eta\|\bZ(j)\|_F\|\be(j)\|_2\\
    \le & \sum_{j=0}^{k-1}\eta \sqrt{n}  O\rbr{j\rbr{1-\frac{\eta\lambda_0}{4}}^{j-1}\frac{\eta n^{5/2}\|\by-\bu(0)\|_2^2}{\sqrt{m}\tau\lambda_0\delta}}\\
    = & O\rbr{\frac{ n^{3}\|\by-\bu(0)\|_2^2}{\sqrt{m}\lambda_0^3\tau\delta}}.
\end{align*}
Therefore,
\begin{align}\label{bound:zeta_K}
    \|\zeta(k)\|_2 =  O\rbr{\frac{n^{3/4}\|\by-\bu(0)\|_2^{3/2}}{m^{1/4}\tau^{1/2}\lambda_0^{3/2}\delta^{1/2}}} + O\rbr{\frac{ n^{3}\|\by-\bu(0)\|_2^2}{\sqrt{m}\lambda_0^3\tau\delta}}. 
\end{align}

Define $\bG_k=\sum_{j=0}^{k-1} \eta(\bI - \eta \bH^{\infty})^j$. Recalling that $\by=\by^*+\bepsilon$, for fixed $\bx$, we have 
\begin{align}\label{eq1Xinpf41}
    \tilde{f}_k(\bx) - g(\bx) =& \bz_0(\bx)^\top{\rm vec}(\bW(k))-h(\bx,\bX)(\bH^\infty)^{-1}\by^* \nonumber\\
     =& \bz_0(\bx)^\top \bigl[ \bZ(0)\bG_k(\by-\bu(0))+\zeta(k) +{\rm vec}(\bW(0))\bigr]\nonumber\\
      =& \bigl[ h(\bx,\bX)(\bG_k-(\bH^\infty)^{-1})\by^*+h(\bx,\bX)\bG_k\bepsilon \bigr]+ \bigl[\bz_0(\bx)^\top\bZ(0)-h(\bx,\bX)\bigr]\bG_k\by\nonumber\\
      & + \bigl[\bz_0(\bx)^\top{\rm vec}(\bW(0)) + \bz_0(\bx)^\top\zeta(k)-\bz_0(\bx)^\top\bZ(0)\bG_k\bu(0)\bigr]\nonumber\\
      =& \Delta_{21}(\bx)+\Delta_{22}(\bx)+\Delta_{23}(\bx).
\end{align}
Using Lemma \ref{lem:bounds4} (c), we can bound $\Delta_{22}$ as
\begin{align}\label{Delta22b}
    \|\Delta_{22}\|_2 \le& \|\bz_0(\bx)^\top\bZ(0)-h(\bx,\bX)\|_2 \|\bG_k\by\|_2\nonumber\\
    \le & O\rbr{\frac{\sqrt{n}\sqrt{\log(n/\delta)}}{\sqrt{m}}} \|(\bH^{\infty})^{-1}\by\|_2\nonumber\\
   = & O\rbr{\frac{\sqrt{n}\sqrt{\log(n/\delta)}\|\by\|_2}{\sqrt{m}\lambda_0}}.
\end{align}
Since the $i$-th coordinate of $\bu(0)$ is 
\begin{align*}
    u_i(0)=\bz_0(\bx_i)^\top{\rm vec}(\bW(0)) = \sum_{r=1}^m a_r \bw(0)^{\top}\bx_i\mathbb{I}\{\bw(0)^{\top}\bx_i\},
\end{align*}
where $a_r\sim{\rm unif}\{1,-1\}$ and $\bw(0)^{\top}\bx_i \sim N(0,\tau^2)$, it is easy to prove that $u_i(0)$ has zero mean and variance $\tau^2$.
This implies $\mathbb{E}[\|\bu(0)\|_2^2]=O(n\tau^2)$. By Markov's inequality, with probability at least $1-\delta$, we have $\|\bu(0)\|_2= O\rbr{\frac{\sqrt{n}\tau}{\delta}}$. Similar to \eqref{boundZ(K)}, we can obtain $\|\bZ(0)\|_F=O(\sqrt{n})$. Thus, 
\begin{align}\label{eqn:zZGu}
    &|\bz_0(\bx)^\top\bZ(0)\bG_k\bu(0)|
    \le \|\bz_0(\bx)\|_2\|\bZ(0)\|_F\|\bG_k\bu(0)\|_2
    \le  \sqrt{n}\|(\bH^{\infty})^{-1}\bu(0)\|_2 = O\rbr{\frac{n\tau}{\lambda_0\delta}}.
\end{align}
Combining Lemma \ref{lem:bounds4} (d), (\ref{bound:zeta_K}) and (\ref{eqn:zZGu}), we obtain
\begin{align}\label{Delta23b}
    \|\Delta_{23}\|_2  \le & \|\bz_0(\cdot)^\top{\rm vec}(\bW(0))\|_2+\|\bz_0(\cdot)\|_2\|\zeta(k)\|_2+\|\bz_0(\cdot)^\top\bZ(0)\bG_k\bu(0)\|_2 \nonumber\\ 
    = &   O\rbr{\tau \sqrt{\log (1/\delta)}}+O\rbr{\frac{n^{3/4}\|\by-\bu(0)\|_2^{3/2}}{m^{1/4}\tau^{1/2}\lambda_0^{3/2}\delta^{1/2}}} + O\rbr{\frac{ n^{3}\|\by-\bu(0)\|_2^2}{\sqrt{m}\lambda_0^3\tau\delta}}+O\rbr{\frac{n\tau}{\lambda_0\delta}}\nonumber\\
    = & O\rbr{\frac{n^{3/4}\|\by-\bu(0)\|_2^{3/2}}{m^{1/4}\tau^{1/2}\lambda_0^{3/2}\delta^{1/2}}} + O\rbr{\frac{ n^{3}\|\by-\bu(0)\|_2^2}{\sqrt{m}\lambda_0^3\tau\delta}}+O\rbr{\frac{n\tau}{\lambda_0\delta}}.
\end{align}



By \eqref{eq2Xinpf41} and \eqref{eq1Xinpf41}, we can rewrite $\hat{f}_k-f^*$ as
\begin{align*}
    \hat{f}_k-f^* &=\Delta_{21}+(\Delta_1+\Delta_3+\Delta_{22}+\Delta_{23}):=\Delta_{21} + \Xi,
\end{align*}
Next we show that the expected value of $\|\Xi\|_2^2$ over noise, $\mathbb{E}_{\bepsilon} \|\Xi\|_2^2$, is small. Note that we have
\begin{align}\label{bnd:y}
    \mathbb{E}_{\bepsilon} \|\by\|_2^2 = \mathbb{E}_{\bepsilon} \|\by^*+\bepsilon\|_2^2\le 2\by^{*\top}\by^*+2\mathbb{E}_{\bepsilon} \bepsilon^{\top} \bepsilon = O(n).
\end{align}
By Markov's inequality, with probability $1-\delta$ over random initialization, we have
\begin{align}\label{bnd:y-u0}
    \mathbb{E}_{\bepsilon} \|\by-\bu(0)\|_2 \le & \rbr{\mathbb{E}_{\bepsilon} \|\by-\bu(0)\|_2^2 }^{\frac{1}{2}}\nonumber\\
    \le & \rbr{\frac{3\mathbb{E}_{\bW(0),\ba}\left[\bu(0)^{\top}\bu(0)+\by^{*\top}\by^*+\mathbb{E}_{\bepsilon}\bepsilon^{\top} \bepsilon\right]}{\delta} }^{\frac{1}{2}}\nonumber\\
    = &  O\rbr{\sqrt{\frac{n(1+\tau^2)}{\delta}}} =  O\rbr{\sqrt{\frac{n}{\delta}}},
\end{align}
where the last equality of \ref{bnd:y-u0} is because $\tau^2\lesssim 1$.
By \eqref{Delta3b}, \eqref{Delta1b}, \eqref{Delta22b}, \eqref{Delta23b}, \eqref{bnd:y} and \eqref{bnd:y-u0}, $\mathbb{E}_{\bepsilon} \|\Xi\|_2^2$ can be upper bounded as
\begin{align*}
   \mathbb{E}_{\bepsilon} \|\Xi\|_2^2 \le & 4\mathbb{E}_{\bepsilon} (\|\Delta_1\|_2^2+\|\Delta_3\|_2^2+\|\Delta_{22}\|_2^2+\|\Delta_{23}\|_2^2)\\
    = &\mathbb{E}_{\bepsilon}\left[ O\rbr{\frac{n^2\|\by-\bu(0)\|_2^4}{m\tau^2\lambda_0^4\delta^2}}+ O\rbr{\frac{1}{n}}+O\rbr{\frac{n\log(n/\delta)\|\by\|_2^2}{m\lambda_0^2}}\right]+4\mathbb{E}_{\bepsilon}\|\Delta_{23}\|_2^2\\
   \le & O\rbr{\frac{n^4}{m\tau^2\lambda_0^4\delta^4}}+ O\rbr{\frac{1}{n}}+O\rbr{\frac{n^2\log(n/\delta)}{m\lambda_0^2\delta}}+O\rbr{\frac{n^2\tau^2}{\lambda_0^2\delta^2}}+\\
   &+ \mathbb{E}_{\bepsilon}\left[O\rbr{\frac{n^{3/2}\|\by-\bu(0)\|_2^{3}}{m^{1/2}\tau\lambda_0^{3}\delta}} + O\rbr{\frac{ n^{6}\|\by-\bu(0)\|_2^4}{m\tau^2\lambda_0^6\delta^2}}\right]\\
   = & O\rbr{\frac{n^4}{m\tau^2\lambda_0^4\delta^4}}+ O\rbr{\frac{1}{n}}+O\rbr{\frac{n^2\log(n/\delta)}{m\lambda_0^2\delta}}+O\rbr{\frac{n^2\tau^2}{\lambda_0^2\delta^2}}\\
   & + O\rbr{\frac{n^{3}}{\sqrt{m}\tau\lambda_0^{3}\delta^{5/2}}}
    + O\rbr{\frac{n^{8}}{m\tau^2\lambda_0^6\delta^4}}\\
    = & O\rbr{\frac{1}{n}}+O\rbr{\frac{n^2\tau^2}{\lambda_0^2\delta^2}} + \frac{{\rm poly}\rbr{n, \frac{1}{\lambda_0}, \frac{1}{\delta}}}{m^{\frac{1}{2}}\tau}.
\end{align*}

In the following, we will evaluate $\Delta_{21}$ and discuss how the iteration number $k$ would affect the $L_2$ estimation error $\|\hat{f}_k-f^*\|_{2}^2$. 
\paragraph{Case 1: The iteration number $k$ cannot be too small}
By taking expectation of $\|\Delta_{21}\|_{2}^2$ over the noise,
we have 
\begin{align*}
    \mathbb{E}_{\bepsilon}\|\Delta_{21}\|_{2}^2 &= \int_{\bx\in \Omega}h(\bx,\bX)\bigl[(\bH^{\infty})^{-1}-\bG_k)\by^*\by^{*\top}((\bH^{\infty})^{-1}-\bG_k)+\bG_k^2\bigr]h(\bX,\bx)d\bx\\
    &=\int_{\bx\in \Omega}h(\bx,\bX)(\bH^{\infty})^{-1}\bM_k(\bH^{\infty})^{-1}h(\bX,\bx)d\bx,
\end{align*}
where 
\begin{align}\label{eq:M_k}
    \bM_k =& (\bI-\eta\bH^{\infty} )^k\bS (\bI-\eta\bH^{\infty})^k+(\bI - (\bI-\eta\bH^{\infty} )^k)^{2}\nonumber\\
    =& [(\bI-\eta\bH^{\infty})^k-(\bS+\bI)^{-1}](\bS+\bI)[(\bI-\eta\bH^{\infty})^k-(\bS+\bI)^{-1}]+\bI-(\bS+\bI)^{-1}
\end{align}
and $\bS=\by^*\by^{*\top}$. If $k\geq C_0\rbr{\frac{\log n}{\eta\lambda_0}}$ for some constant $C_0>1$, we have
    \begin{align*}
        (\bI-\eta\bH^{\infty} )^k \le (1-\eta\lambda_0)^k\bI \le \exp\{-\eta\lambda_0k\}
        \bI \le \exp\{-C_0\log n\}\bI=\frac{1}{n^{C_0}}\bI,
    \end{align*}
    Since $1+\|\by^*\|_2^2 \le C_1 n$ for some constant $C_1$, we have
    \begin{align*}
        \lambda_{\max}\rbr{\frac{1}{n^{C_0}}(\bS+\bI)} = \frac{1+\|\by^*\|_2^2 }{n^{C_0}} \le \frac{C_1}{n^{C_0-1}}< 1.
    \end{align*}
By Lemma \ref{lem:Loewner} (a), we have
\begin{align*}
  (\bI-\eta\bH^{\infty} )^k\le \frac{1}{n^{C_0}}\bI < (\bS+\bI)^{-1}.
\end{align*} 
Therefore, we have
\begin{align*}
    (\bS+\bI)^{-1} - (\bI-\eta\bH^{\infty} )^k \ge (\bS+\bI)^{-1} -\frac{1}{n^{C_0}}\bI,
\end{align*}
where $(\bS+\bI)^{-1} - (\bI-\eta\bH^{\infty} )^k$ and $(\bS+\bI)^{-1} -n^{-C_0}\bI$ are positive definite matrices. It is also obvious that the two matrices are exchangeable. By Lemma \ref{lem:Loewner} (b) and (\ref{eq:M_k}), we have
\begin{align*}
  \bM_k \ge \rbr{1-\frac{1}{n^{C_0}}}^2\bI+\frac{1}{n^{2C_0}}\bS.
\end{align*}
Then we have
    \begin{align*}
         \mathbb{E}_{\bepsilon}\|\Delta_{21}\|_{2}^2  \ge \rbr{1-\frac{1}{n^{C_0}}}^2I_1 + \frac{1}{n^{2C_0}} I_2\ge c_0I_1
    \end{align*}
    where $c_0 \in (0,1)$ is a constant, 
\begin{align*}
     I_1 = \int h(\bx,\bX)(\bH^{\infty})^{-2}h(\bX,\bx)d\bx, \text{~~~and ~~~} I_2 =\int[h(\bx,\bX)(\bH^{\infty})^{-1}\by^*]^2d\bx.
\end{align*} 
By the Cauchy-Schwarz inequality, we have
\begin{align}\label{PredErBigK}
  \mathbb{E}_{\bepsilon} \|\hat{f}_k-f^*\|_{2}^2 = &\mathbb{E}_{\bepsilon}\|\Delta_{21}+\Xi\|_2^2 \nonumber\\
  \ge& \frac{1}{2}\mathbb{E}_{\bepsilon}\|\Delta_{21}\|_{2}^2-\mathbb{E}_{\bepsilon}\|\Xi\|_{2}^2 \nonumber\\
  \ge & \frac{c_0}{2}I_1-O\rbr{\frac{1}{n}}-O\rbr{\frac{n^2\tau^2}{\lambda_0^2\delta^2}} - \frac{{\rm poly}\rbr{n, \frac{1}{\lambda_0}, \frac{1}{\delta}}}{m^{\frac{1}{2}}\tau}.
\end{align}
Let $\tau \leq C_3\frac{\lambda_0\delta}{n}\|(\bH^{\infty})^{-1}h(\bX,\cdot)\|_2$ for some constant $C_3>0$ such that the third term of (\ref{PredErBigK}) is bounded by $\frac{c_0}{4}\|(\bH^{\infty})^{-1}h(\bX,\cdot)\|_2^2$.
Therefore, $\mathbb{E}_{\bepsilon} \|\hat{f}_k-f^*\|_{2}^2$ can be lower bounded as
\begin{align}\label{pf41Klarge}
   \mathbb{E}_{\bepsilon} \|\hat{f}_k-f^*\|_{2}^2 \ge C^*_1\|(\bH^{\infty})^{-1}h(\bX,\cdot)\|_2^2 - O\rbr{\frac{1}{n}},
\end{align}
where $C^*_1>0$ is a constant. Note that $I_1$ is $\mathbb{E}_{\epsilon}\|\hat f_\infty - g^*\|_2^2$, where $g^*\equiv 0$ and 
$\hat f_\infty$ is the interpolated estimator of $g^*$, as in Theorem \ref{thm:kernel_earlystop}. Therefore, by Theorem \ref{thm:kernel_earlystop}, there exists a constant $c_1$ such that $\mathbb{E}_{\epsilon}\|\hat f_\infty - g^*\|_2^2\geq c_1$, which implies $I_1\geq c_1$. Taking $n$ large enough such that the second term in \eqref{pf41Klarge} is smaller than $C_1^*c_1$, we finish the proof of the case that $k$ is large.
\paragraph{Case 2: The iteration number $k$ cannot be too large} We can rewrite $\Delta_{21}$ as
\begin{align*}
    \Delta_{21} =&  h(\bx,\bX)\bG_k(\by^*+\bepsilon)-h(\bx,\bX)(\bH^\infty)^{-1}\by^*\\
     = & \Delta_{21}^*-h(\bx,\bX)(\bH^\infty)^{-1}\by^*.
\end{align*}
Since
\begin{align*}
   \bG_k=\sum_{j=0}^{k-1} \eta(\bI - \eta \bH^{\infty})^j =\sum_{j=0}^{k-1} \eta\sum_{i=1}^n(1 - \eta \lambda_i)^j\bv_i\bv_i^{\top}\le \eta k\bI,
\end{align*}
we have
\begin{align*}
    \mathbb{E}_{\bepsilon}\|\Delta_{21}^*\|_{2}^2 = & \int_{\bx\in \Omega}h(\bx,\bX)\bG_k(\bS+\bI)\bG_kh(\bX,\bx)d\bx\\
    \le & \eta^2 k^2\int_{\bx\in \Omega}h(\bx,\bX)(\bS+\bI)h(\bX,\bx)d\bx\\
     = & \eta^2 k^2\rbr{\int_{\bx\in \Omega}\bigl[h(\bx,\bX)\by^*\bigr]^2d\bx+ \|h(\cdot,\bX)\|_2^2}\\
     = & O\rbr{\eta^2 k^2n^2}.
\end{align*}
Therefore,
\begin{align}\label{PredErSmallK}
  \mathbb{E}_{\bepsilon} \|\hat{f}_k-f^*\|_{2}^2 = & \mathbb{E}_{\bepsilon}\|\Delta_{21}^*+\Xi-h(\cdot,\bX)(\bH^\infty)^{-1}\by^*\|_2^2\nonumber\\
   \ge & \frac{1}{2}\|h(\cdot,\bX)(\bH^\infty)^{-1}\by^*\|_2^2-\mathbb{E}_{\bepsilon}\|\Delta_{21}^*+\Xi\|_{2}^2\nonumber\\
   \ge & \frac{1}{2}\|h(\cdot,\bX)(\bH^\infty)^{-1}\by^*\|_2^2-2\mathbb{E}_{\bepsilon}\|\Delta_{21}^*\|^2_2-2\mathbb{E}_{\bepsilon}\|\Xi\|_{2}^2\nonumber\\
   \ge & \frac{1}{2}\|h(\cdot,\bX)(\bH^\infty)^{-1}\by^*\|_2^2 - O\rbr{\eta^2 k^2n^2} \nonumber\\
   &-O\rbr{\frac{1}{n}}-O\rbr{\frac{n^2\tau^2}{\lambda_0^2\delta^2}} - \frac{{\rm poly}\rbr{n, \frac{1}{\lambda_0}, \frac{1}{\delta}}}{m^{\frac{1}{2}}\tau}.
\end{align}
Let $k \leq C_1\rbr{\frac{1}{\eta n}}$ for some constant $C_1>0$ such that the the second term of (\ref{PredErSmallK}) can be bounded by $\frac{1}{8}\|h(\cdot,\bX)(\bH^\infty)^{-1}\by^*\|_2^2$. Let $\tau \leq C_2\rbr{\frac{\delta\lambda_0}{n}}$ for some constant $C_2>0$ such that the fourth term in (\ref{PredErSmallK}) can be bounded by $\frac{1}{8}\|h(\cdot,\bX)(\bH^\infty)^{-1}\by^*\|_2^2$. Note that we can also choose $m$ such that the fifth term in (\ref{PredErSmallK}) is bounded by $\frac{1}{8}\|h(\cdot,\bX)(\bH^\infty)^{-1}\by^*\|_2^2$. Therefore, we have
\begin{align}\label{pf41Ksmall}
    \mathbb{E}_{\bepsilon} \|\hat{f}_k-f^*\|_{2}^2 \ge &  C^*_2\|h(\cdot,\bX)(\bH^\infty)^{-1}\by^*\|_2^2-O\rbr{\frac{1}{n}}\nonumber\\
    \ge & C_3^*\|f^*\|_2^2 - O\rbr{\frac{1}{n}},
\end{align}
where the last inequality is because of Lemma \ref{lem:intpRKHS}, and $C^*_2>0$ is a constant. By taking $n$ large enough such that the second term in \eqref{pf41Ksmall} is smaller than $C_3^*\|f^*\|_2^2/2$, we finish the proof.

\subsection{Proof of Theorem \ref{thm:kernel_earlystop}}
Let's first introduce the GD update for the kernel ridge regression. By the representer
theorem \citep{kimeldorf1971some}, the kernel estimator can be written as 
\[
\hat{f}(\bx) = \sum_{i=1}^n \omega_i h(\bx, \bx_i) := h(\bx, \bX)\bomega,
\]
where $\bomega=(\omega_1,\ldots, \omega_n)$ is the coefficient vector.
Consider using the squared loss 
\[
\Phi(\bomega) = \frac{1}{2}\sum_{i=1}^n (\hat{f}(\bx_i)-y_i)^2.
\]
Let $\bomega_k$ be the $\bomega$ at the $k$-th GD iteration and choose $\bomega_0 = \mathbf{0}$. Then, the GD update rule for estimating $\bomega$ can be expressed as
\begin{align}
\label{eqn:kernel_gd}
\bomega_{k+1} = \bomega_{k} - \eta \rbr{(\bH^\infty)^2\bomega - \bH^\infty \by} 
\end{align}
In the formulation of the stopping rule, two quantities play an important role: first,
the {running sum} of the step sizes
$\alpha_j := \sum_{i=0}^{j}{\eta_i},$
and secondly, the eigenvalues $\hat{\lambda}_1 \geq \hat{\lambda}_2 \geq
\cdots \geq \hat{\lambda}_n \geq 0$ of the empirical kernel matrix $H^\infty$, which are computable from the data. Recall the definition of the optimal stopping time $k^*$ as in (\ref{eqn:kstar}). The following lemma establishes the $L_2$ estimation results for $\hat{f}_{k^*}$ for kernels with polynomial eigendecay.

\begin{lemma} [Corollary 1 in \cite{raskutti2014early}]
\label{lemma:early}
Suppose that variables $\{\bx_i\}_{i=1}^n$ are sampled i.i.d. and the kernel class $\cN$ satisfies the
polynomial eigenvalue decay
$\lambda_j  \lesssim j^{-2\nu}$ for some $\nu > 1/2$.
Then there is a universal constant $C$ such
that
\begin{align*}
\EE \|\hat{f}_{k^*}- f^*\|_2^2 & \leq C
\bigg(\frac{\sigma^2}{n} \bigg)^{\frac{2 \nu}{2 \nu+1}}.
\end{align*}
Moreover, if $\lambda_j \asymp j^{-2 \nu}$ for all $j = 1, 2, \ldots$, then for all iterations
  $k = 1, 2, \ldots$,
\begin{align*}
\EE \|\hat{f}_{k^*}- f^*\|_2^2 & \geq \frac{ \sigma^2}{4} \min
\big \{ 1, \; \frac{(\alpha_{k})^{\frac{1}{2
      \nu}}}{n} \big \}.
\end{align*}
\end{lemma}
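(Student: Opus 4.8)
The plan is to prove Lemma~\ref{lemma:early} by diagonalizing the gradient-descent recursion \eqref{eqn:kernel_gd} and running a bias--variance argument in the eigenbasis of the empirical kernel matrix. Write $\bH^\infty=\sum_{j=1}^n\hat\lambda_j\bv_j\bv_j^\top$. The recursion \eqref{eqn:kernel_gd} is linear in the fitted values and decouples along the $\bv_j$: the $j$-th coordinate of the fitted vector after step sizes $\eta_0,\dots,\eta_{k-1}$ equals $\gamma_j^{(k)}\,\bv_j^\top\by$ with shrinkage factor $\gamma_j^{(k)}=1-\prod_{i=0}^{k-1}(1-\eta_i\hat\lambda_j)$, which for a constant step size is just $1-(1-\eta\hat\lambda_j)^k$ and in general is sandwiched between such expressions in terms of the running sum $\alpha_k=\sum_{i<k}\eta_i$. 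Substituting $\by=\by^*+\bepsilon$ and passing from fitted values on the design to the RKHS/population level yields the exact decomposition $\EE_{\bepsilon}\|\hat f_k-f^*\|_2^2=\mathcal B^2(\alpha_k)+\mathcal V(\alpha_k)$, where the squared bias $\mathcal B^2(\alpha_k)$ is a weighted sum of $(1-\gamma_j^{(k)})^2$ against the spectral coefficients of $f^*$, and the variance satisfies $\mathcal V(\alpha_k)\asymp\frac{\sigma^2}{n}\sum_{j=1}^n(\gamma_j^{(k)})^2$.

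The second step is to reduce each term to an elementary scalar estimate in $\alpha_k$ and to read the stopping rule \eqref{eqn:kstar} as the balance point. From $te^{-t}\le e^{-1}$ one gets $(1-\eta\hat\lambda_j)^{2k}\hat\lambda_j\le(2e\alpha_k)^{-1}$, hence $\mathcal B^2(\alpha_k)\le\|f^*\|_{\cN}^2/(2e\alpha_k)$; from $\min\{1,t^2\}\le\min\{1,t\}$ one gets $(\gamma_j^{(k)})^2\le\min\{1,\alpha_k\hat\lambda_j\}$, hence $\mathcal V(\alpha_k)$ is controlled, up to constants, by $\sigma^2\alpha_k\,\hat{\cR}_{\bH^\infty}^2(1/\sqrt{\alpha_k})$. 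For $k\le k^*$ the defining inequality $\hat{\cR}_{\bH^\infty}(1/\sqrt{\alpha_k})\le(2e\sigma\alpha_k)^{-1}$ holds, forcing $\mathcal V(\alpha_{k^*})\lesssim\alpha_{k^*}^{-1}$; a ``one step further'' argument (the inequality fails at $k^*+1$, together with monotonicity of $\hat{\cR}_{\bH^\infty}$ and of $\alpha_k$) lower-bounds $\alpha_{k^*}$ enough that $\mathcal B^2(\alpha_{k^*})\lesssim\alpha_{k^*}^{-1}$ as well, so $\EE\|\hat f_{k^*}-f^*\|_2^2\lesssim\alpha_{k^*}^{-1}$. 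The genuinely probabilistic inputs are: (i) concentration of the $\chi^2$-type noise term $\frac{\sigma^2}{n}\sum_j(\gamma_j^{(k)})^2\xi_j^2$ around its mean, uniformly over $k\le k^*$; and (ii) concentration of the data-dependent eigenvalues $\hat\lambda_j$, equivalently of $\hat{\cR}_{\bH^\infty}$, so that $k^*$ itself lands in the right window with high probability. I expect (ii) together with the uniform-over-iterations control in (i) --- i.e.\ making the stochastic, data-dependent $k^*$ interact correctly with a uniform deviation bound --- to be the main obstacle.

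The last step specializes to the polynomial decay. With $\hat\lambda_j/n\asymp j^{-2\nu}$ for $j\le n$, a direct computation gives $\hat{\cR}_{\bH^\infty}(\varepsilon)\asymp n^{-1/2}\varepsilon^{\,1-1/(2\nu)}$, so the crossing equation $\hat{\cR}_{\bH^\infty}(1/\sqrt{\alpha})\asymp(\sigma\alpha)^{-1}$ solves to $\alpha_{k^*}\asymp(n/\sigma^2)^{2\nu/(2\nu+1)}$, whence $\EE\|\hat f_{k^*}-f^*\|_2^2\lesssim\alpha_{k^*}^{-1}\asymp(\sigma^2/n)^{2\nu/(2\nu+1)}$; only the upper bound $\lambda_j\lesssim j^{-2\nu}$ is used here. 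For the matching lower bound under $\lambda_j\asymp j^{-2\nu}$, I would retain only the variance: for a sufficiently small step size, every index $j$ with $\alpha_k\hat\lambda_j/n\gtrsim 1$ has $\gamma_j^{(k)}\ge c>0$, and the number of such indices is $\asymp\min\{n,\alpha_k^{1/(2\nu)}\}$, so $\EE\|\hat f_k-f^*\|_2^2\ge\mathcal V(\alpha_k)\ge\frac{\sigma^2}{4}\min\{1,\alpha_k^{1/(2\nu)}/n\}$, exactly the stated bound. No structural ingredient beyond the scalar estimates and the two concentration facts is needed.
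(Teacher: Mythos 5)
First, a structural point: the paper does not prove this lemma at all --- it is imported verbatim as Corollary 1 of Raskutti, Wainwright and Yu (2014), and the surrounding text merely applies it (through Lemma \ref{lemeigendecay}, with $2\nu=d/(d-1)$ and $\alpha_k=k\eta$) to obtain Theorem \ref{thm:kernel_earlystop}. So the only meaningful comparison is with the proof in that reference, and your sketch does follow its route: diagonalize the GD dynamics in the eigenbasis of the kernel matrix, bound the squared bias by $\|f^*\|_{\cN}^2/(2e\alpha_k)$ and the variance by a multiple of $\sigma^2\alpha_k\,\hat{\cR}^2_{\bH^\infty}(1/\sqrt{\alpha_k})$, balance the two through the definition of $k^*$ in (\ref{eqn:kstar}), solve the critical equality under polynomial eigendecay, and get the lower bound from the variance term alone. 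That is exactly the architecture of the cited argument.

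Second, as a proof rather than a plan, the proposal has concrete gaps. (i) Your ``exact decomposition'' of $\EE_{\bepsilon}\|\hat f_k-f^*\|_2^2$ into bias plus variance is exact only for the empirical norm $\|\cdot\|_n$; the lemma is stated in the population $L_2$ norm, and passing from $\|\cdot\|_n$ to $\|\cdot\|_2$ --- and from the empirical eigenvalues $\hat\lambda_j/n$, in terms of which $k^*$ and all your scalar estimates are expressed, to the population eigenvalues $\lambda_j$, in terms of which the hypothesis $\lambda_j\lesssim j^{-2\nu}$ is stated --- is a substantial separate step (uniform comparison of the two norms over a localized class plus an eigenvalue comparison). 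You flag this as ``the main obstacle'' but do not resolve it, and it is needed for the lower bound too, where $\hat\lambda_j/n$ must be bounded \emph{below} by $c\,j^{-2\nu}$ to count the indices with $\gamma_j^{(k)}\ge c$. (ii) The data-dependent stopping time $k^*$ interacting with the expectation over noise requires a uniform-in-$k$ deviation argument, again acknowledged but not supplied. (iii) A smaller but real mismatch: the paper's recursion (\ref{eqn:kernel_gd}) is gradient descent in the coefficient vector $\bomega$, so the fitted values contract by $I-\eta(\bH^\infty)^2$ and the shrinkage factors are $1-(1-\eta\hat\lambda_j^2)^k$, not $1-(1-\eta\hat\lambda_j)^k$ as you wrote; the constants in (\ref{eqn:kstar}) are calibrated to the normalized first-power dynamics of the reference, so you must either redo the elementary estimates for the squared-eigenvalue dynamics or switch to that parametrization. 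None of these is unfixable --- they are precisely where the cited proof spends its effort --- but as written your proposal is an outline of that proof, not a self-contained argument.
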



By Lemma \ref{lemeigendecay}, apply Lemma \ref{lemma:early} with $2\nu=d/(d-1)$ and the running sum of the step sizes $\alpha_k =k\eta$ gives the convergence rate. 


Moreover, if $k\to\infty$, i.e., interpolation of training data, the lower bound result in Lemma \ref{lemma:early} implies
$\EE \|f_{\hat{T}}- f^*\|_2^2 \gtrsim  \sigma^2$ that doesn't converge to 0. 


\section{Proofs of main theorems in Section \ref{secwithp}}

\subsection{Proof of Theorem \ref{thm:withPenalty}}

Let $\bu_D(l) = (u_{D,1}(l),...,u_{D,n}(l))^\top\in \RR^n$ be the predictions on the points $\bx_1,...,\bx_n$ using the modified GD at the $k$-th iteration. The idea of the proof is to establish a relationship between $\by-\bu_D(l)$ and $\by-\bu_D(l+1)$ for all $l=0,1,...$, so that we can obtain a relationship between $\bu_D(l+1)$ and $\bu_D(0)$. Based on this relationship, we can show that $\bu_D(l+1)$ is close to $\bH^\infty(C\mu I+\bH^\infty)^{-1}\by$, which is $\hat f$.

Consider event 
\begin{align*}
    A_{ir} = \{\exists \bw\in\RR^d: \|\bw -(1-\eta_2\mu)^k\bw_r(0)\|_2 \leq R, \mathbb{I}\{\bx_i^\top\bw_r(0)\geq 0\}\neq \mathbb{I}\{\bx_i^\top\bw\geq 0\} \},
\end{align*}
where $R$ will be determined later. Set $S_i = \{r\in [m]:\mathbb{I}\{A_{ir}\}=0\}$ and $S_i^\perp = [m]\backslash S_i$. Then $A_{ir}$ happens if and only if $|\bw_r(0)^\top \bx_i| < R/(1-\eta_2\mu)^k$. By concentration inequality of Gaussian, we have $\mathbb{P}(A_{ir}) = \mathbb{P}(|\bw_r(0)^\top \bx_i| < R/(1-\eta_2\mu)^k\leq \frac{2R}{\sqrt{2\pi}\tau(1-\eta_2\mu)^k}$. Thus, it follows the union bound inequality that with probability at least $1-\delta$ we have
\begin{align}\label{thmwithpsperp}
    \sum_{i=1}^n |S_i^\perp|\leq \frac{CmnR}{\delta(1-\eta_2\mu)^k},
\end{align}
where $C$ is a positive constant.

We first study the difference between two predictions $\bu_D(l+1)$ and $\bu_D(l)$. For any $i\in [n]$, we have
\begin{align}\label{thmwithpdecuD}
    u_{D,i}(l+1) - (1-\eta_2\mu)u_{D,i}(l) = & \frac{1}{\sqrt{m}}\sum_{r=1}^{m} a_r (\sigma(\bw_{D,r}(l+1)^\top \bx_i) - (1-\eta_2\mu)\sigma(\bw_{D,r}(l)^\top \bx_i))\nonumber\\
    = & \frac{1}{\sqrt{m}}\sum_{r\in S_i^\perp} a_r (\sigma(\bw_{D,r}(l+1)^\top \bx_i) - (1-\eta_2\mu)\sigma(\bw_{D,r}(l)^\top \bx_i))\nonumber\\
    & + 
    \frac{1}{\sqrt{m}}\sum_{r\in S_i} a_r (\sigma(\bw_{D,r}(l+1)^\top \bx_i) - (1-\eta_2\mu)\sigma(\bw_{D,r}(l)^\top \bx_i))\nonumber\\
    = & I_{1,i}(l) + I_{2,i}(l).
\end{align}
The first term $I_{1,i}(l)$ can be bounded by
\begin{align}\label{thmwithpdecuDI1}
    I_{1,i}(l) = & \frac{1}{\sqrt{m}}\sum_{r\in S_i^\perp} a_r (\sigma(\bw_{D,r}(l+1)^\top \bx_i) - (1-\eta_2\mu)\sigma(\bw_{D,r}(l)^\top \bx_i))\nonumber\\
    \leq & \frac{1}{\sqrt{m}}\sum_{r\in S_i^\perp}\left|(\bw_{D,r}(l+1)  -  (1-\eta_2\mu)\bw_{D,r}(l))^\top\bx_i\right|\nonumber\\
    \leq & \frac{1}{\sqrt{m}}\sum_{r\in S_i^\perp}\|\bw_{D,r}(l+1)  -  (1-\eta_2\mu)\bw_{D,r}(l)\|_2\nonumber\\
    = & \frac{1}{\sqrt{m}}\sum_{r\in S_i^\perp}\|\frac{\eta_1}{\sqrt{m}}a_r\sum_{j=1}^n(u_{D,j}(l) - y_j)\mathbb{I}_{r,j}(l)\bx_j\|_2\nonumber\\
    \leq & \frac{\eta_1}{m}\sum_{r\in S_i^\perp}\sum_{j=1}^n |u_{D,j}(l) - y_j|\nonumber\\
    \leq & \frac{\eta_1\sqrt{n}|S_i^\perp|}{m}\|\bu_{D}(l)-\by\|_2.
\end{align}
In \eqref{thmwithpdecuDI1}, the second and the last inequalities are by the Cauchy-Schwarz inequality.
The second term $I_{2,i}(l)$ can be bounded by
\begin{align}\label{thmwithpdecuDI2}
    I_{2,i}(l) = & \frac{1}{\sqrt{m}}\sum_{r\in S_i} a_r (\sigma(\bw_{D,r}(l+1)^\top \bx_i) - (1-\eta_2\mu)\sigma(\bw_{D,r}(l)^\top \bx_i))\nonumber\\
    = &  \frac{1}{\sqrt{m}}\sum_{r\in S_i} a_r\mathbb{I}_{r,i}(l) (\bw_{D,r}(l+1) - (1-\eta_2\mu)\bw_{D,r}(l))^\top \bx_i\nonumber\\
    = & -\frac{1}{\sqrt{m}}\sum_{r\in S_i} a_r\mathbb{I}_{r,i}(l) \left(\frac{\eta_1}{\sqrt{m}}a_r\sum_{j=1}^n(u_{D,j}(l) - y_j)\mathbb{I}_{r,j}(l)\bx_j  \right)^\top \bx_i\nonumber\\
    = & -\frac{\eta_1}{m}\sum_{j=1}^n(u_{D,j}(l) - y_j)\bx_j^\top \bx_i\sum_{r\in S_i}\mathbb{I}_{r,i}(l)\mathbb{I}_{r,j}(l)\nonumber\\
    = & -\eta_1\sum_{j=1}^n(u_{D,j}(l) - y_j)\bH_{ij}(l) + I_{3,i}(l),
\end{align}
where
\begin{align*}
    I_{3,i}(l) = \frac{\eta_1}{m}\sum_{j=1}^n(u_{D,j}(l) - y_j)\bx_j^\top \bx_i\sum_{r\in S_i^\perp}\mathbb{I}_{r,i}(l)\mathbb{I}_{r,j}(l).
\end{align*}
The term $I_{3,i}(l)$ in \eqref{thmwithpdecuDI2} can be bounded by
\begin{align}\label{thmwithpdecuDI3}
    |I_{3,i}(l)|\leq & \left|\frac{\eta_1}{m}\sum_{j=1}^n(u_{D,j}(l) - y_j)\bx_j^\top \bx_i\sum_{r\in S_i^\perp}\mathbb{I}_{r,i}(l)\mathbb{I}_{r,j}(l)\right|\nonumber\\
    \leq & \frac{\eta_1}{m}|S_i^\perp|\sum_{j=1}^n |u_{D,j}(l) - y_j|\nonumber\\
    \leq & \frac{\eta_1\sqrt{n}|S_i^\perp|}{m}\|\bu_{D}(l)-\by\|_2.
\end{align}
Plugging \eqref{thmwithpdecuDI1} and \eqref{thmwithpdecuDI2} into \eqref{thmwithpdecuD}, we have 
\begin{align*}
    u_{D,i}(l+1) - (1-\eta_2\mu)u_{D,i}(l) = -\eta_1\sum_{j=1}^n(u_{D,j}(l) - y_j)\bH_{ij}(l) + I_{1,i}(l) + I_{3,i}(l),
\end{align*}
which leads to
\begin{align}\label{thmwithpdecuDquan}
    \bu_D(l+1) -  (1-\eta_2\mu)\bu_D(l) = -\eta_1\bH(l)(\bu_D(l) - \by) + \bI(l),
\end{align}
where $\bI(l)=(I_{1,1}(l) + I_{3,1}(l),...,I_{1,n}(l) + I_{3,n}(l))^\top$. By the triangle inequality, we have 
\begin{align}\label{thmwithpdecuD2}
    \|\bu_D(l+1) -  (1-\eta_2\mu)\bu_D(l)\|_2 \leq & \|\eta_1\bH(l)(\bu_D(l) - \by)\|_2 + \|\bI(l)\|_2.
\end{align}
By \eqref{thmwithpsperp}, \eqref{thmwithpdecuDI1}, and \eqref{thmwithpdecuDI3}, the term $\|\bI(l)\|_2$ in \eqref{thmwithpdecuD2} can be bounded by
\begin{align}\label{thmwithpdecuD2term2}
    \|\bI(l)\|_2 \leq & \sum_{i=1}^n |I_{3,i}(l)| + |I_{1,i}(l)| \leq \sum_{i=1}^n\frac{2\eta_1\sqrt{n}|S_i^\perp|}{m}\|\bu_D(l) - \by\|_2\nonumber\\
    \leq & \frac{2\eta_1\sqrt{n}}{m}\frac{CmnR}{\delta(1-\eta_2\mu)^k}\|\bu_D(l) - \by\|_2 = \frac{2C\eta_1n^{3/2}R}{\delta(1-\eta_2\mu)^k}\|\bu_D(l) - \by\|_2.
\end{align}
Gershgorin's theorem \citep{varga} implies
\begin{align*}
\lambda_{\max}(H(l))\leq \max_{j}\sum_{i=1}^nH_{ij}(l)\leq n.
\end{align*}
Therefore, the term $\|\eta_1\bH(l)(\bu_D(l) - \by)\|_2$ in \eqref{thmwithpdecuD2} can be bounded by
\begin{align}\label{thmwithpdecuD2term1}
    \|\eta_1\bH(l)(\bu_D(l) - \by)\|_2 \leq \eta_1\lambda_{\max}(H(l))\|\bu_D(l) - \by\|_2\leq \eta_1n \|\bu_D(l) - \by\|_2.
\end{align}
By \eqref{thmwithpdecuD2} and \eqref{thmwithpdecuD2term2}, $\|\by-\bu_D(l+1)\|_2$ can be bounded by
\begin{align}\label{thmwithpymDk}
    \|\by-\bu_D(l+1)\|_2^2 = & \|\by-(1-\eta_2\mu)\bu_D(l)\|_2^2 - 2(\by-(1-\eta_2\mu)\bu_D(l))^\top(\bu_D(l+1) - (1-\eta_2\mu)\bu_D(l))\nonumber\\
    & + \|\bu_D(l+1) - (1-\eta_2\mu)\bu_D(l)\|_2^2\nonumber\\
    = & \|\by-(1-\eta_2\mu)\bu_D(l)\|_2^2 + 2\eta_1(\by-(1-\eta_2\mu)\bu_D(l))^\top\bH(l)(\bu_D(l) - \by)\nonumber\\
    & - 2\eta_1(\by-(1-\eta_2\mu)\bu_D(l))^\top\bI(l) + \|\bu_D(l+1) - (1-\eta_2\mu)\bu_D(l)\|_2^2\nonumber\\
    = & T_1 + T_2 + T_3 + T_4.
\end{align}
The first term $T_1$ can be bounded by
\begin{align}\label{thmwithpymDkT1}
    T_1 = & \|\by-(1-\eta_2\mu)\bu_D(l)\|_2^2 \nonumber\\
    = & \eta_2^2\mu^2\|\by\|_2^2 + (1-\eta_2\mu)^2\|\by-\bu_D(l)\|_2^2 + 2\eta_2\mu(1-\eta_2\mu)\by^\top(\by-\bu_D(l)) \nonumber\\
    \leq &  (\eta_2^2\mu^2 + \eta_2\mu)\|\by\|_2^2 + (1 + \eta_2\mu)(1-\eta_2\mu)^2\|\by-\bu_D(l)\|_2^2.
\end{align}
The second term $T_2$ can be bounded by
\begin{align}\label{thmwithpymDkT2}
    T_2 = & 2\eta_1(\by-(1-\eta_2\mu)\bu_D(l))^\top\bH(l)(\bu_D(l) - \by) \nonumber\\
    = & 2\eta_1(1-\eta_2\mu)(\by-\bu_D(l))^\top\bH(l)(\bu_D(l) - \by) + 2\eta_1\eta_2\mu\by^\top\bH(l)(\bu_D(l) - \by)\nonumber\\
    = & -2\eta_1(1-\eta_2\mu)(\by-\bu_D(l))^\top\bH(l)(\by-\bu_D(l)) + 2\eta_1\eta_2\mu\by^\top\bH(l)(\bu_D(l) - \by)\nonumber\\
    \leq & 4\eta_1\eta_2\mu n\|\by\|_2^2 +4\eta_1\eta_2\mu n\|\bu_D(l) - \by\|_2^2.
\end{align}
Using \eqref{thmwithpdecuD2term2}, the third term $T_3$ can be bounded by
\begin{align}\label{thmwithpymDkT3}
    T_3 = & - 2\eta_1(\by-(1-\eta_2\mu)\bu_D(l))^\top\bI(l) \nonumber\\
    = & - 2\eta_1(1-\eta_2\mu)(\by-\bu_D(l))^\top\bI(l) + 2\eta_1\eta_2\mu \by^\top\bI(l)\nonumber\\
    \leq & 2\eta_1(1-\eta_2\mu)\frac{2C\eta_1n^{3/2}R}{\delta(1-\eta_2\mu)^k}\|\bu_D(l) - \by\|_2 + 4\eta_1\eta_2\mu\|\by\|_2^2 + 4\eta_1\eta_2\mu\|\bI(l)\|_2^2\nonumber\\
    \leq & 2\eta_1(1-\eta_2\mu)\frac{2C\eta_1n^{3/2}R}{\delta(1-\eta_2\mu)^k}\|\bu_D(l) - \by\|_2^2 + 4\eta_1\eta_2\mu\|\by\|_2^2 + 4\eta_1\eta_2\mu\left(\frac{2C\eta_1n^{3/2}R}{\delta(1-\eta_2\mu)^k}\right)^2\|\bu_D(l) - \by\|_2^2.
\end{align}
The fourth term $T_4$ can be bounded by
\begin{align}\label{thmwithpymDkT4}
    T_4 = & \|\bu_D(l+1) - (1-\eta_2\mu)\bu_D(l)\|_2^2\nonumber\\
    \leq & 2\|\eta_1\bH(l)(\bu_D(l) - \by)\|_2^2 + 2\|\bI(l)\|_2^2 \nonumber\\
    \leq & 2\eta_1^2n^2\|\bu_D(l) - \by\|_2^2 + 2\left(\frac{2C\eta_1n^{3/2}R}{\delta(1-\eta_2\mu)^k}\right)^2\|\bu_D(l) - \by\|_2^2.
\end{align}
Plugging \eqref{thmwithpymDkT1} - \eqref{thmwithpymDkT4} into \eqref{thmwithpymDk}, we have
\begin{align}\label{thmwithpymD2}
    & \|\by-\bu_D(l+1)\|_2^2\nonumber\\
    \leq & (\eta_2^2\mu^2 + \eta_2\mu)\|\by\|_2^2 + (1 + \eta_2\mu)(1-\eta_2\mu)^2\|\by-\bu_D(l)\|_2^2 + 4\eta_1\eta_2\mu n\|\by\|_2^2 +4\eta_1\eta_2\mu n\|\bu_D(l) - \by\|_2^2\nonumber\\
    & + 2\eta_1(1-\eta_2\mu)\frac{2C\eta_1n^{3/2}R}{\delta(1-\eta_2\mu)^k}\|\bu_D(l) - \by\|_2^2 + 4\eta_1\eta_2\mu\|\by\|_2^2 + 4\eta_1\eta_2\mu\left(\frac{2C\eta_1n^{3/2}R}{\delta(1-\eta_2\mu)^k}\right)^2\|\bu_D(l) - \by\|_2^2\nonumber\\
    & + 2\eta_1^2n^2\|\bu_D(l) - \by\|_2^2 + 2\left(\frac{2C\eta_1n^{3/2}R}{\delta(1-\eta_2\mu)^k}\right)^2\|\bu_D(l) - \by\|_2^2\nonumber\\
    = & a_1 \|\by\|_2^2 + a_2\|\bu_D(l) - \by\|_2^2,
\end{align}
where
\begin{align*}
    a_1 = & (\eta_2^2\mu^2 + \eta_2\mu)+ 4\eta_1\eta_2\mu n + 4\eta_1\eta_2\mu \leq 2 \eta_2\mu + 8\eta_1\eta_2\mu n,\\
    a_2 = & (1 + \eta_2\mu)(1-\eta_2\mu)^2 +4\eta_1\eta_2\mu n+ 2\eta_1(1-\eta_2\mu)\frac{2C\eta_1n^{3/2}R}{\delta(1-\eta_2\mu)^k} \nonumber\\
    & + 4\eta_1\eta_2\mu\left(\frac{2C\eta_1n^{3/2}R}{\delta(1-\eta_2\mu)^k}\right)^2 +  2\eta_1^2n^2  + 2\left(\frac{2C\eta_1n^{3/2}R}{\delta(1-\eta_2\mu)^k}\right)^2\\
    \leq & 1 - \rbr{\eta_2\mu - 4\eta_1\eta_2\mu n - 2 \eta_1\frac{2C\eta_1n^{3/2}R}{\delta(1-\eta_2\mu)^k} - 2\eta_1^2n^2}\\
    = & 1 - \nu_0.
\end{align*}
By the conditions imposed on $\eta_1, \eta_2, \mu, m$, the dominating terms in $a_1$ and $\nu_0$ are both $\eta_2\mu$. Thus
$a_1 = o(1/n)$, $\nu_0 = o(1/n)$ and $a_1/\nu_0 = O(1)$.
Using \eqref{thmwithpymD2} iteratively, we have  
\begin{align}\label{thmwithpymD3}
    \|\by-\bu_D(l+1)\|_2^2\leq & a_1 \|\by\|_2^2 + a_2\|\bu_D(l) - \by\|_2^2\nonumber\\
    \leq & ... \leq \sum_{i=0}^l(1-\nu_0)^i(a_1\|\by\|_2^2) + (1-\nu_0)^{l+1}\|\by-\bu_D(0)\|_2^2\\
    \leq & \frac{a_1\|\by\|_2^2}{\nu_0} + (1-\nu_0)^{l+1}\|\by-\bu_D(0)\|_2^2.
\end{align}



By the modified GD rule, we have
\begin{align*}
    \bw_{D,r}(l+1) - (1-\eta_2\mu)\bw_{D,r}(l) = & -\frac{\eta_1}{\sqrt{m}}a_r\sum_{j=1}^n(u_{D,j}(l) - y_j)\mathbb{I}_{r,j}(l)\bx_j,
\end{align*}
which implies
\begin{align}\label{thmwithpgdw}
    \|\bw_{D,r}(l+1) - (1-\eta_2\mu)\bw_{D,r}(l)\|_2 \leq & \frac{\eta_1\sqrt{n}}{\sqrt{m}}\|\bu_D(l) - \by\|_2 \leq \frac{C\eta_1n}{\sqrt{m}}
\end{align}
for some constant $C$. Using \eqref{thmwithpgdw} iteratively yields
\begin{align}\label{ineqxiaowjin}
    &\|\bw_{D,r}(l+1) - (1-\eta_2\mu)^{l+1}\bw_{D,r}(0)\|_2\nonumber \\
    \leq & \|\bw_{D,r}(l+1) - (1-\eta_2\mu)\bw_{D,r}(l)\|_2 + \|(1-\eta_2\mu)\bw_{D,r}(0)- (1-\eta_2\mu)^{l+1}\bw_{D,r}(l)\|_2\nonumber\\
    \leq & \frac{C\eta_1n}{\sqrt{m}} + (1-\eta_2\mu)\|\bw_{D,r}(l)- (1-\eta_2\mu)^{l}\bw_{D,r}(0)\|_2\nonumber\\
    \leq & ...\leq \sum_{i=0}^l(1-\eta_2\mu)^i \frac{C\eta_1n}{\sqrt{m}}\leq \frac{C\eta_1n}{\eta_2\mu\sqrt{m}}.
\end{align}
By similar approach as in the proof of Lemma C.2 of \cite{du2018gradient}, we can show that with probability at least $1-\delta$ with respect to random initialization,
\begin{align*}
    \|\bZ(l) -\bZ(0)\|_F^2 \leq \frac{2nR}{\sqrt{2\pi}\tau\delta(1-\eta_2\mu)^k} +\frac{n}{m} = O\left(\frac{\eta_1n^2}{(1-\eta_2\mu)^k\eta_2\mu\sqrt{m}\delta^{3/2}\tau}\right), \forall l\in [k],
\end{align*}
and
\begin{align*}
     \|\bH(l) -\bH(0)\|_F\leq \frac{4n^2R}{\sqrt{2\pi}\tau} + \frac{2n^2\delta}{m} = O\left(\frac{\eta_1n^3}{(1-\eta_2\mu)^k\eta_2\mu\sqrt{m}\delta^{3/2}\tau}\right),\forall l\in[k].
\end{align*}
By Lemma C.3 of \cite{du2018gradient}, we have with probability at least $1-\delta$ with respect to random initialization,
\begin{align}\label{thmwithpH0Hinfty}
    \|\bH(0) - \bH^{\infty}\|_F = O\left(\frac{n\sqrt{\log(n/\delta)}}{\sqrt{m}}\right).
\end{align}
By \eqref{thmwithpdecuDquan}, we have
\begin{align*}
    \bu_D(l+1) -  (1-\eta_2\mu)\bu_D(l) = & -\eta_1\bH(l)(\bu_D(l) - \by) + \bI(l)\nonumber\\
    = & -\eta_1\bH^\infty(\bu_D(l) - \by) +\bI(l) - \eta_1(\bH(l)-\bH^\infty)(\bu_D(l) - \by),
\end{align*}
which yields
\begin{align}\label{thmwithpudK1}
    \bu_D(l+1) - B = \left((1-\eta_2\mu)I - \eta_1\bH^\infty\right)(\bu_D(l) - B) + \bI(l) - \eta_1(\bH(l)-\bH^\infty)(\bu_D(l) - \by),
\end{align}
where
\begin{align}\label{thm51B}
    B = (\eta_2\mu I+\eta_1\bH^\infty)^{-1}\eta_1\bH^\infty\by = \eta_1\bH^\infty(\eta_2\mu I+\eta_1\bH^\infty)^{-1}\by.
\end{align}
Iteratively using \eqref{thmwithpudK1}, we have 
\begin{align}\label{thmwithputotal}
    \bu_D(l+1) - B = &\left((1-\eta_2\mu)I - \eta_1\bH^\infty\right)^{l+1}(\bu_D(0) - B)\nonumber\\
    & + \sum_{i=0}^l \left((1-\eta_2\mu)I - \eta_1\bH^\infty\right)^{i}(\bI(l-i) - \eta_1(\bH(l-i)-\bH^\infty)(\bu_D(l-i) - \by))\nonumber\\
    = & \left((1-\eta_2\mu)I - \eta_1\bH^\infty\right)^{l+1}(\bu_D(0) - B) + e_l,
\end{align}
where
\begin{align}\label{thmwithpudek}
    e_l = \sum_{i=0}^l \left((1-\eta_2\mu)I - \eta_1\bH^\infty\right)^{i}(\bI(l-i) - \eta_1(\bH(l-i)-\bH^\infty)(\bu_D(l-i) - \by)).
\end{align}
The term $e_l$ can be bounded by
\begin{align}\label{thmwithpudekb}
    \|e_l\|_2 = & \|\sum_{i=0}^l \left((1-\eta_2\mu)I - \eta_1\bH^\infty\right)^{i}(\bI(l-i) - \eta_1(\bH(l-i)-\bH^\infty)(\bu_D(l-i) - \by))\|_2\nonumber\\
    \leq & \sum_{i=0}^l\|(1-\eta_2\mu)I - \eta_1\bH^\infty\|_2^i(\|\bI(l-i)\|_2 + \eta_1\|\bH(l-i)-\bH^\infty\|_2\|\bu_D(l-i) - \by\|_2)\nonumber\\
    \leq & \sum_{i=0}^l (1-\eta_2\mu)^iO\bigg(\frac{2C\eta_1^2n^{5/2}}{\eta_2\mu\sqrt{m}\delta^{3/2}(1-\eta_2\mu)^k}+ \frac{\eta_1^2n^{7/2}}{(1-\eta_2\mu)^k\eta_2\mu\sqrt{m}\delta^2\tau} \bigg)\nonumber\\
    = & O\bigg(\frac{\eta_1^2n^{7/2}}{\eta_2^2\mu^2\sqrt{m}\delta^{2}(1-\eta_2\mu)^k\tau}\bigg).
\end{align}
By \eqref{thmwithputotal} and taking $l= k-1$, with probability at least $1-\delta$ with respect to the random initialization, the difference $\bu_D(k) - B$ can be bounded by
\begin{align*}
    \|\bu_D(k) - B\|_2\leq & \|\left((1-\eta_2\mu)I - \eta_1\bH^\infty\right)^{k}(\bu_D(0) - B)\|_2 + \|e_k\|_2\nonumber\\
    = & O\left(\sqrt{n}(1-\eta_2\mu - \eta_1 \lambda_0)^k + \frac{n^{7/2}}{\mu^2\sqrt{m}\delta^{2}(1-\eta_2\mu)^k\tau}\right)\nonumber\\
    = & O\left(\sqrt{n}(1-\eta_2\mu)^k + \frac{n^{7/2}}{\mu^2\sqrt{m}\delta^{2}(1-\eta_2\mu)^k\tau}\right).
\end{align*}
This implies that 
\begin{align*}
     \|\bu_D(k) - B\|_2 = O_{\PP}\left(\sqrt{n}(1-\eta_2\mu)^k + \frac{n^{7/2}}{\mu^2\sqrt{m}(1-\eta_2\mu)^k\tau}\right).
\end{align*}
By choosing $m={\rm poly}(n,1/\tau,1/\lambda_0)$ such that  $\frac{n^{7/2}}{\mu^2\sqrt{m}(1-\eta_2\mu)^k\tau}\leq \sqrt{n}(1-\eta_2\mu)^k$, we finish the proof of \eqref{thmwithpstate1}.

Now consider ${\rm vec}(\bW_D(l+1))$. Direct calculation shows that
\begin{align}\label{thmwithWD}
    {\rm vec}(\bW_D(l+1)) = & (1-\eta_2\mu){\rm vec}(\bW_D(l)) - \eta_1 \bZ(l)(\bu_D(l)-\by)\nonumber\\
    = & (1-\eta_2\mu){\rm vec}(\bW_D(l))  - \eta_1 \bZ(0)(\bu_D(l)-\by) - \eta_1 (\bZ(l)-\bZ(0))(\bu_D(l)-\by)\nonumber\\
    = & (1-\eta_2\mu)^{l+1}{\rm vec}(\bW_D(0)) - \eta_1\bZ(0)\sum_{i=0}^l (1-\eta_2\mu)^{i}(\bu_D(l-i)-\by) \nonumber\\
    & - \sum_{i=0}^l (1-\eta_2\mu)^{i}\eta_1 (\bZ(l)-\bZ(0))(\bu_D(l)-\by).
\end{align}
Plugging
\begin{align*}
    \bu_D(l+1) = \left((1-\eta_2\mu)I - \eta_1\bH^\infty\right)^{l+1}(\bu_D(0) - B) + e_l + B
\end{align*}
into \eqref{thmwithWD}, we have
\begin{align}\label{thmwithWD1}
    & {\rm vec}(\bW_D(l+1)) - (1-\eta_2\mu)^{l+1}{\rm vec}(\bW_D(0))\nonumber\\
    = &  - \eta_1\bZ(0)\sum_{i=0}^l (1-\eta_2\mu)^{i}\left((1-\eta_2\mu)I - \eta_1\bH^\infty\right)^{l-i}(\bu_D(0) - B)\nonumber\\
    & - \eta_1\bZ(0)\sum_{i=0}^l  (1-\eta_2\mu)^{i}(e_{l-i-1} + B-\by) - \sum_{i=0}^l (1-\eta_2\mu)^{i}\eta_1 (\bZ(l)-\bZ(0))(\bu_D(l)-\by)\nonumber\\
    = & \eta_1\bZ(0)\sum_{i=0}^l (1-\eta_2\mu)^{i}\left((1-\eta_2\mu)I - \eta_1\bH^\infty\right)^{l-i} \eta_1\bH^\infty(\eta_2\mu I+\eta_1\bH^\infty)^{-1}\by\nonumber\\
    & - \eta_1\bZ(0)\sum_{i=0}^l (1-\eta_2\mu)^{i}\left((1-\eta_2\mu)I - \eta_1\bH^\infty\right)^{l-i}\bu_D(0)\nonumber\\
    & - \eta_1\bZ(0)\sum_{i=0\nonumber}^l  (1-\eta_2\mu)^{i}e_{l-i-1} - \eta_1\bZ(0)\sum_{i=0}^l  (1-\eta_2\mu)^{i}(B-\by)\\
    & - \sum_{i=0}^l (1-\eta_2\mu)^{i}\eta_1 (\bZ(l)-\bZ(0))(\bu_D(l)-\by)\nonumber\\
    = & E_1 - E_2 + E_3 - T_5 - E_4.
\end{align}
Let
\begin{align}\label{thmwithTk}
    \bT_l = & \sum_{i=0}^l (1-\eta_2\mu)^{i}\left((1-\eta_2\mu)I - \eta_1\bH^\infty\right)^{l-i}\nonumber\\
    = & (1-\eta_2\mu)^{l}\sum_{i=0}^l \left(I - \frac{\eta_1}{(1-\eta_2\mu)}\bH^\infty\right)^{i}
\end{align}
and
\begin{align}\label{thmwitha1}
    \ba_1 = & \eta_1\bH^\infty(\eta_2\mu I+\eta_1\bH^\infty)^{-1}\by.
\end{align}
The first term $E_1$ can be bounded by
\begin{align}\label{thmwithWDE1}
    \|E_1\|_2^2  = & \|\eta_1\bZ(0)\bT_l \ba_1\|_2^2\nonumber\\
    = & \eta_1^2\ba_1^\top \bT_l \bZ(0)^\top \bZ(0)\bT_l \ba_1\nonumber\\
    = & \eta_1^2\ba_1^\top \bT_l \bH^{\infty}\bT_l \ba_1 + \eta_1^2\ba_1^\top \bT_l (\bH(0)-\bH^{\infty})\bT_l \ba_1\nonumber\\
    = & \eta_1^2\ba_1^\top \bT_l \bH^{\infty}\bT_l \ba_1 + \eta_1^2O\left(\frac{n\sqrt{\log(n/\delta)}}{\sqrt{m}}\right)\ba_1^\top \bT_l^2\ba_1.
\end{align}
By \eqref{thmwithTk}, we have
\begin{align*}
     \bT_l = & (1-\eta_2\mu)^{l}\sum_{j=1}^n \frac{1 - (1 - \frac{\eta_1}{(1-\eta_2\mu)}\lambda_j)^{l+1}}{\frac{\eta_1}{(1-\eta_2\mu)}\lambda_j}\bv_j\bv_j^\top \preceq \frac{(1-\eta_2\mu)^{l}}{\eta_1\lambda_0} \bI,
\end{align*}
and
\begin{align*}
    \bT_l\bH^{\infty}\bT_l = & (1-\eta_2\mu)^{2l}\sum_{j=1}^n \left(\frac{1 - (1 - \frac{\eta_1}{(1-\eta_2\mu)}\lambda_j)^{2l+2}}{\frac{\eta_1}{(1-\eta_2\mu)}\lambda_j}\right)^2\lambda_j\bv_j\bv_j^\top \preceq \frac{(1-\eta_2\mu)^{l+1}}{\eta_1^2}(\bH^{\infty})^{-1}.
\end{align*}
Therefore,
\begin{align*}
    \eta_1^2\ba_1^\top \bT_l \bH^{\infty}\bT_l \ba_1 \leq & (1-\eta_2\mu)^{2l+2}\ba_1^\top(\bH^{\infty})^{-1}\ba_1,\\
    \eta_1^2O\left(\frac{n\sqrt{\log(n/\delta)}}{\sqrt{m}}\right)\ba_1^\top \bT_l^2\ba_1 \leq & O\left(\frac{n^2(1-\eta_2\mu)^{2l}\sqrt{\log(n/\delta)}}{\sqrt{m}\lambda_0^2}\right).
\end{align*}
Together with \eqref{thmwithWDE1}, we have
\begin{align}\label{thm51E1}
    \|E_1\|_2^2  = (1-\eta_2\mu)^{2l+2}\ba_1^\top(\bH^{\infty})^{-1}\ba_1 + O\left(\frac{n^2(1-\eta_2\mu)^{2l}\sqrt{\log(n/\delta)}}{\sqrt{m}\lambda_0^2}\right).
\end{align}
By similar approach, the second term $E_2$ can be bounded by
\begin{align}\label{thmwithWDE2}
    \|E_2\|_2^2 = & \|\eta_1\bZ(0)\sum_{i=0}^l (1-\eta_2\mu)^{i}\left((1-\eta_2\mu)I - \eta_1\bH^\infty\right)^{l-i}\bu_D(0)\|_2^2\nonumber\\
    = & \eta_1^2\bu_D(0)^\top \bT_1(l) \bZ(0)^\top \bZ(0)\bT_1(l) \bu_D(0)\nonumber\\
    = & \eta_1^2\bu_D(0)^\top \bT_1(l) \bH^{\infty}\bT_1(l) \bu_D(0) + \eta_1^2\bu_D(0)^\top \bT_1(l) (\bH(0)-\bH^{\infty})\bT_1(l) \bu_D(0)\nonumber\\
    = & (1-\eta_2\mu)^{2l+2}\bu_D(0)^\top(\bH^{\infty})^{-1}\bu_D(0) + O\left(\frac{n^2(1-\eta_2\mu)^{2l}\sqrt{\log(n/\delta)}}{\sqrt{m}\lambda_0^2}\right).
\end{align}
By \eqref{thmwithpudekb}, the third term $E_3$ can be bounded by
\begin{align}\label{thmwithWDE3}
    \|E_3\|_2^2 = &\|\eta_1\bZ(0)\sum_{i=0\nonumber}^l  (1-\eta_2\mu)^{i}e_{l-i-1}\|_2^2\nonumber\\
    = & \eta_1^2\left(\sum_{i=0}^l  (1-\eta_2\mu)^{i}e_{l-i-1}\right)^\top\bH(0)\left(\sum_{i=0}^l  (1-\eta_2\mu)^{i}e_{l-i-1}\right)\nonumber\\
    = & O\bigg(\frac{\eta_1^6n^8}{\eta_2^6\mu^6m\delta^{4}(1-\eta_2\mu)^{2k}\tau^2}\bigg).
\end{align}
The fourth term $E_4$ can be bounded by 
\begin{align}\label{thmwithWDE4}
    \|E_4\|_2^2 = & \|\sum_{i=0}^l (1-\eta_2\mu)^{i}\eta_1 (\bZ(l)-\bZ(0))(\bu_D(l)-\by)\|_2^2\nonumber\\
    = & O\left(\frac{\eta_1^3n^3}{(1-\eta_2\mu)^{k}\eta_2^3\mu^3\sqrt{m}\delta^{3/2}\tau}\right).
\end{align}
Note that
\begin{align*}
    B-\by = & \eta_1\bH^\infty(\eta_2\mu I+\eta_1\bH^\infty)^{-1}\by - \by\\
    = & (\eta_1\bH^\infty - \eta_2\mu I -\eta_1\bH^\infty)(\eta_2\mu I+\eta_1\bH^\infty)^{-1}\by\\
    = & -\eta_2\mu (\eta_2\mu I+\eta_1\bH^\infty)^{-1}\by.
\end{align*}
Therefore, the remaining term $T_5$ can be bounded by
\begin{align*}
    \|T_5\|_2^2= & \|\eta_1\bZ(0)\sum_{i=0}^l  (1-\eta_2\mu)^{i}(B-\by)\|_2^2 \nonumber\\
    \leq & \eta_1^2 \by^\top(\eta_2\mu I+\eta_1\bH^\infty)^{-1}\bH^\infty(\eta_2\mu I+\eta_1\bH^\infty)^{-1}\by\nonumber\\
    \leq  &\by^\top(\eta_2\mu/\eta_1 I+\bH^\infty)^{-1}\bH^\infty(\eta_2\mu/\eta_1 I +\bH^\infty)^{-1}\by.
\end{align*}
By the assumption that $\eta_2 \asymp \eta_1$, the term $T_5$ can be further bounded by
\begin{align}\label{thmwithWDT5}
    \|T_5\|_2^2 \leq & \by^\top(C\mu I+\bH^\infty)^{-1}\bH^\infty(C\mu I + \bH^\infty)^{-1}\by.
\end{align}
The right-hand side of \eqref{thmwithWDT5} is $\|\hat f\|_{\mathcal{N}}^2$, where $\hat f$ is defined in \eqref{eqsolukrr}. The term $\|\hat f\|_{\mathcal{N}}^2$ can be bounded by some constant as in Theorem \ref{thmkrr}. This also implies
\begin{align}\label{thm51E1term2}
    \ba_1^\top(\bH^{\infty})^{-1}\ba_1 = \eta_1^2\by^\top(\eta_2\mu I+\eta_1\bH^\infty)^{-1}\bH^\infty(\eta_2\mu I+\eta_1\bH^\infty)^{-1}\by = O(1).
\end{align}
Note also that 
\begin{align}\label{thm51E2term2}
    \bu_D(0)^\top(\bH^{\infty})^{-1}\bu_D(0) = O\rbr{\frac{n\tau^2}{\lambda_0}}.
\end{align}
By the assumptions of Theorem \ref{thm:withPenalty}, plugging \eqref{thmwithWDE1}-\eqref{thm51E2term2} into \eqref{thmwithWD1}, and taking the iteration number at $k$, we can conclude that
\begin{align}\label{thmwithpWsum}
    & \|{\rm vec}(\bW_D(k)) - (1-\eta_2\mu)^{k}{\rm vec}(\bW_D(0))\|_2^2\nonumber\\
    = & O((1-\eta_2\mu)^{2k}) + O\left(\frac{n^2(1-\eta_2\mu)^{2k-2}\sqrt{\log(n/\delta)}}{\sqrt{m}\lambda_0^2}\right)\nonumber\\
    & + O\left(\frac{n\tau^2}{\lambda_0}(1-\eta_2\mu)^{2k}\right) + O\left(\frac{n^2(1-\eta_2\mu)^{2k-2}\sqrt{\log(n/\delta)}}{\sqrt{m}\lambda_0^2}\right)\nonumber\\
    & + O\bigg(\frac{n^8}{\mu^6m\delta^{4}(1-\eta_2\mu)^{2k}\tau^2}\bigg) + O\left(\frac{n^3}{(1-\eta_2\mu)^{k}\mu^3\sqrt{m}\delta^{3/2}\tau}\right) +O(1)\nonumber\\
    = & O(1),
\end{align}
where the last equality is because we can select some polynomials such that all the terms in \eqref{thmwithpWsum} except the $O(1)$ term converge to zero, and $\exp(-2\eta_2\mu k)\leq (1-\eta_2\mu)^{k}\leq \exp(-\eta_2\mu k)$ for sufficiently large $n$. This finishes the proof of \eqref{thmwithpstate2} in Theorem \ref{thm:withPenalty}.

\subsection{Proof of Theorem \ref{thm:withPenaltyl2small}}
For notational simplification, we use $\hat f_k = f_{\bW(k),\ba}$. Similar to the proof of Theorem \ref{thm:gd}, we define 
\begin{align}\label{tildef}
    \tilde{f}_k(\bx)={\rm vec}(\bW_D(k))^\top\bz_0(\bx),
\end{align}
where $\bz_0(\bx)=\bz(\bx)|_{\bW_D=\bW_D(0)}$.
Then we can write the following decomposition
\begin{align}\label{eq2Xinpf52}
    \hat{f}_k(\bx)-f^*(\bx) = & (\hat{f}_k(\bx)- \tilde{f}_k(\bx)) + (\tilde{f}_k(\bx) - \hat f(\bx)) + (\hat f(\bx) - f^*(\bx))\nonumber\\
    = & \Delta_1(\bx)+\Delta_2(\bx)+\Delta_3(\bx),
\end{align}
where $\hat f$ is as in \eqref{eqsolukrr}. In the rest of the proof, we show $\Delta_1(\bx)$, $\Delta_2(\bx)$, and $\Delta_3(\bx)$ are all small.

It follows from Theorem \ref{thmkrr} that
\begin{align}\label{Delta3b52}
    \|\Delta_3\|_{2}^2=O_{\PP}\rbr{n^{-\frac{d}{2d-1}}}.
\end{align} 
Next, we consider $\Delta_1$. From \eqref{ineqxiaowjin}, it can be seen that
\begin{align}\label{ineqxiaowjin52}
    &\|\bw_{D,r}(k) - (1-\eta_2\mu)^{k}\bw_{D,r}(0)\|_2\leq \frac{C\eta_1n}{\eta_2\mu\sqrt{m}}.
\end{align}
Define event
\begin{align*}
    B_{D,r}(\bx)=\{|(1-\eta_2\mu)^{k}\bw_{D,r}(0)^\top\bx|\leq R_1\}, \forall r\in [m],
\end{align*}
where $R_1 = \frac{C\eta_1n}{\eta_2\mu\sqrt{m}}$. If $\mathbb{I}\{B_{D,r}(\bx)\}=0$, then we have $\mathbb{I}_{r,k}(\bx)=\mathbb{I}_{r,0}(\bx)$, where $\mathbb{I}_{r,k}(\bx)=\mathbb{I}\{\bw_{D,r}(k)^\top\bx\ge 0\}$. 
Therefore, for any fixed $\bx$,
\begin{align*}
     |\Delta_1(\bx)|&=   |\hat{f}_k(\bx)- \tilde{f}_k(\bx)|\\ &=\left|\frac{1}{\sqrt{m}}\sum_{r=1}^m a_r (\mathbb{I}_{r,k}(\bx)-\mathbb{I}_{r,0}(\bx))\bw_{D,r}(k)^\top\bx\right|\\
        &= \left|\frac{1}{\sqrt{m}}\sum_{r=1}^m a_r \mathbb{I}\{B_{D,r}(\bx)\} (\mathbb{I}_{r,k}(\bx)-\mathbb{I}_{r,0}(\bx))\bw_{D,r}(k)^\top\bx\right|\\
        &\le \frac{1}{\sqrt{m}}\sum_{r=1}^m \mathbb{I}\{B_{D,r}(\bx)\}|\bw_{D,r}(k)^\top\bx|\\
        &\le \frac{1}{\sqrt{m}} \sum_{r=1}^m \mathbb{I}\{B_{D,r}(\bx)\}\rbr{|(1-\eta_2\mu)^{k}\bw_{D,r}(0)^\top\bx|+ |\bw_{D,r}(k)^\top\bx-(1-\eta_2\mu)^{k}\bw_r(0)^\top\bx |}\\
        &\le  \frac{2R_1}{\sqrt{m}} \sum_{r=1}^m \mathbb{I}\{B_{D,r}(x)\}.
\end{align*}
Note that $\|\bx\|_2=1$, which implies that $\bw_{D,r}(0)^\top\bx$ is distributed as $N (0,\tau^2)$. Therefore, we have 
\begin{align*}
    \mathbb{E}[\mathbb{I}\{B_{D,r}(x)\}] &= \PP\rbr{|(1-\eta_2\mu)^{k}\bw_{D,r}(0)^\top\bx|\leq R_1}\\
    &=\int_{-R_1/(1-\eta_2\mu)^{k}}^{R_1/(1-\eta_2\mu)^{k}}\frac{1}{\sqrt{2\pi}\tau}\exp\left\{-\frac{u^2}{2\tau^2}\right\}du\le \frac{2R_1}{\sqrt{2\pi}(1-\eta_2\mu)^{k}\tau}.
\end{align*}
By Markov's inequality, with probability at least $1-\delta$, we have 
\begin{align*}
     \sum_{r=1}^m \mathbb{I}\{B_{D,r}(x)\} \le \frac{2mR_1}{\sqrt{2\pi}(1-\eta_2\mu)^{k}\tau\delta}.
\end{align*}
Thus, we have with probability at least $1-\delta$,
\begin{align*}
    \|\Delta_1\|_{2} \le  \frac{2R_1}{\sqrt{m}} \|\sum_{r=1}^m \mathbb{I}\{B_{D,r}(\cdot)\}\|_{2}\le \frac{4\sqrt{m}R_1^2}{\sqrt{2\pi}(1-\eta_2\mu)^{k}\tau\delta}= O\rbr{\frac{n^2}{\sqrt{m}\lambda_0^2\delta^2(1-\eta_2\mu)^{k}\tau}},
\end{align*}
which implies
\begin{align}\label{Delta1b52}
    \|\Delta_1\|_{2} = O_{\PP}\rbr{\frac{n^2}{\sqrt{m}\lambda_0^2(1-\eta_2\mu)^{k}\tau}}.
\end{align}
Now we bound $\Delta_2$. Note that
Define $\bG_k=\sum_{j=0}^{k-1} \eta(\bI - \eta \bH^{\infty})^j$. Recalling that $\by=\by^*+\bepsilon$, for fixed $\bx$, we have 
\begin{align}\label{Delta2b52}
   \Delta_2(\bx) = & \tilde{f}_k(\bx) - \hat f(\bx)\nonumber\\
   = & \bz_0(\bx)^\top{\rm vec}(\bW_D(k))-h(\bx,\bX)(\bH^\infty + \eta_2\mu/\eta_1 I)^{-1}\by \nonumber\\
   = & \bz_0(\bx)^\top E_1 - \bz_0(\bx)^\top E_2 + \bz_0(\bx)^\top E_3 - \bz_0(\bx)^\top T_5 - \bz_0(\bx)^\top E_4\nonumber\\
   & + (1-\eta_2\mu)^k\bz_0(\bx)^\top {\rm vec}(\bW_D(0)) - h(\bx,\bX)(\bH^\infty + \eta_2\mu/\eta_1 I)^{-1}\by,
\end{align}
where $E_1$, $E_2$, $E_3$, $T_5$, $E_4$ are as in \eqref{thmwithWD1}. Noting that $\|z_0(\bx)\|_2 = O_{\PP}(1)$, we have that
\begin{align}
    |\bz_0(\bx)^\top E_1|^2\leq \|\bz_0(\bx)\|_2^2 \|E_1\|_2^2 = & O_{\PP}((1-\eta_2\mu)^{2k}) + O_{\PP}\left(\frac{n^2(1-\eta_2\mu)^{2k-2}\sqrt{\log(n)}}{\sqrt{m}\lambda_0^2}\right),\label{yiduixiao52E1}\\
    |\bz_0(\bx)^\top E_2|^2\leq \|\bz_0(\bx)\|_2^2 \|E_2\|_2^2 = & O_{\PP}\left(\frac{n\tau^2}{\lambda_0}(1-\eta_2\mu)^{2k}\right) + O_{\PP}\left(\frac{n^2(1-\eta_2\mu)^{2k-2}\sqrt{\log(n)}}{\sqrt{m}\lambda_0^2}\right),\label{yiduixiao52E2}\\
    |\bz_0(\bx)^\top E_3|^2\leq \|\bz_0(\bx)\|_2^2 \|E_3\|_2^2 = & O_{\PP}\bigg(\frac{\eta_1^6n^8}{\eta_2^6\mu^6m(1-\eta_2\mu)^{2k}\tau^2}\bigg),\label{yiduixiao52E3}\\
     |\bz_0(\bx)^\top E_4|^2\leq \|\bz_0(\bx)\|_2^2 \|E_4\|_2^2 = & O_{\PP}\left(\frac{n^3}{(1-\eta_2\mu)^{k}\mu^3\sqrt{m}\delta^{3/2}\tau}\right),\label{yiduixiao52E4}
\end{align}
where \eqref{yiduixiao52E1} is because of \eqref{thm51E1} and \eqref{thm51E1term2}, \eqref{yiduixiao52E2}  is because of \eqref{thmwithWDE2} and \eqref{thm51E2term2},  \eqref{yiduixiao52E3}  is because of \eqref{thmwithWDE3}, and  \eqref{yiduixiao52E4}  is because of \eqref{thmwithWDE4}. By Lemma \ref{lem:bounds4} (d), the term  $(1-\eta_2\mu)^k\bz_0(\bx)^\top {\rm vec}(\bW_D(0))$ in \eqref{Delta2b52} can be bounded by
\begin{align}\label{yiduixiao52z0w0}
    \|(1-\eta_2\mu)^k\bz_0(\cdot)^\top {\rm vec}(\bW_D(0))\|_2 = O_{\PP}((1-\eta_2\mu)^k\tau).
\end{align}
Define 
\begin{align*}
    B = \eta_1\bH^\infty(\eta_2\mu I+\eta_1\bH^\infty)^{-1}\by.
\end{align*}
Note that
\begin{align*}
    B-\by = & \eta_1\bH^\infty(\eta_2\mu I+\eta_1\bH^\infty)^{-1}\by - \by\\
    = & (\eta_1\bH^\infty - \eta_2\mu I -\eta_1\bH^\infty)(\eta_2\mu I+\eta_1\bH^\infty)^{-1}\by\\
    = & -\eta_2\mu (\eta_2\mu I+\eta_1\bH^\infty)^{-1}\by.
\end{align*}
Therefore, the remaining term in \eqref{Delta2b52} $-\bz_0(\bx)^\top T_5 - h(\bx,\bX)(\bH^\infty + \eta_2\mu/\eta_1 I)^{-1}\by$ can be bounded by
\begin{align}\label{eq2XXinpf52}
& -\bz_0(\bx)^\top T_5 - h(\bx,\bX)(\bH^\infty + \eta_2\mu/\eta_1 I)^{-1}\by \nonumber\\
  =  & -\bz_0(\bx)^\top\bZ(0)\sum_{i=0}^{k-1}  \eta_1(1-\eta_2\mu)^{i}(B-\by) - h(\bx,\bX)(\bH^\infty + \eta_2\mu/\eta_1 I)^{-1}\by\nonumber\\
  = & -\bz_0(\bx)^\top\bZ(0) \eta_1\frac{1-(1-\eta_2\mu)^{k}}{\eta_2\mu}(B-\by) - h(\bx,\bX)(\bH^\infty + \eta_2\mu/\eta_1 I)^{-1}\by\nonumber\\
    = & \bz_0(\bx)^\top\bZ(0) \eta_1(1-(1-\eta_2\mu)^{k})(\eta_2\mu I+\eta_1\bH^\infty)^{-1}\by - h(\bx,\bX)(\bH^\infty + \eta_2\mu/\eta_1 I)^{-1}\by\nonumber\\
    = & (\bz_0(\bx)^\top\bZ(0) - h(\bx,\bX))(\bH^\infty + \eta_2\mu/\eta_1 I)^{-1}\by - \eta_1(1-\eta_2\mu)^{k}\bz_0(\bx)^\top\bZ(0)(\eta_2\mu I+\eta_1\bH^\infty)^{-1}\by.
\end{align}
The first term in \eqref{eq2XXinpf52} can be bounded by
\begin{align}\label{eq2XXinpf52term1}
    & \|(\bz_0(\cdot)^\top\bZ(0) - h(\cdot,\bX))(\bH^\infty + \eta_2\mu/\eta_1 I)^{-1}\by\|_2 \nonumber\\
    \leq & \|(\bz_0(\cdot)^\top\bZ(0) - h(\cdot,\bX))\|_2\|(\bH^\infty + \eta_2\mu/\eta_1 I)^{-1}\by\|_2\nonumber\\
   =   &O_{\PP}\rbr{\frac{n\sqrt{\log(n)}\eta_1}{\sqrt{m}\eta_2\mu}},
\end{align}
where we utilize
\begin{align*}
    \|(\bH^\infty + \eta_2\mu/\eta_1 I)^{-1}\by\|_2^2 = \by^\top(\bH^\infty + \eta_2\mu/\eta_1 I)^{-2}\by\leq \frac{\eta_1^2}{\eta_2^2\mu^2}\|\by\|_2^2 = O_{\PP}\bigg(\frac{\eta_1^2}{\eta_2^2\mu^2} n\bigg),
\end{align*}
and Lemma \ref{lem:bounds4} (c).

The second term in \eqref{eq2XXinpf52} can be bounded by
\begin{align}\label{eq2XXinpf52term2}
    & \|(1-\eta_2\mu)^{k}\bz_0(\cdot)^\top\bZ(0)(\bH^\infty + \eta_2\mu/\eta_1 I)^{-1}\by\|_2\nonumber\\
    \leq & (1-\eta_2\mu)^{k}\|(\bz_0(\cdot)^\top\bZ(0) - h(\cdot,\bX))(\bH^\infty + \eta_2\mu/\eta_1 I)^{-1}\by\|_2 \nonumber\\
    &+(1-\eta_2\mu)^{k}\|h(\cdot,\bX)(\bH^\infty + \eta_2\mu/\eta_1 I)^{-1}\by\|_2\nonumber\\
    \leq &O_{\PP}\rbr{\frac{n\sqrt{\log(n)}\eta_1}{\sqrt{m}\eta_2\mu}} + (1-\eta_2\mu)^{k}\|h(\cdot,\bX)(\bH^\infty + \eta_2\mu/\eta_1 I)^{-1}\by\|_{\mathcal{N}} \nonumber\\
    = & O_{\PP}((1-\eta_2\mu)^{k}),
\end{align}
where the second inequality is because of \eqref{eq2XXinpf52term1} and the last equality is because of Theorem \ref{thmkrr} and the assumption $\eta_1\asymp \eta_2$. Plugging \eqref{yiduixiao52E1}-\eqref{eq2XXinpf52term2} to \eqref{Delta2b52}, we can conclude that
\begin{align}\label{Delta2b52X}
    \|\Delta_2\|_2 = o_{\PP}(n^{-\frac{d}{2d-1}}),
\end{align}
by choosing $k$ and $m$ as in Theorem \ref{thm:withPenaltyl2small}. Combining \eqref{Delta1b52}, \eqref{Delta2b52X}, and \eqref{Delta3b52} finishes the proof.


\section{Proof of lemmas in the Appendix}
\label{sec:kernel_decay}
\subsection{Proof of Lemma \ref{lemmercerh}}
The proof of Lemma \ref{lemmercerh} mainly from Appendix C of \cite{bietti2019inductive} and Appendix D of \cite{bach2017breaking}, with some modification. 

We first review some background of spherical harmonic analysis \citep{atkinson2012spherical,costas2014spherical}. Let $Y_{k,j}$ be the spherical harmonics of degree $k$ on $\mathcal{S}^{d-1}$, where $N(p.k) = \frac{2k+d-2}{k}\left(\begin{array}{c}
     k+d-3\\
     d-2 
\end{array}\right)$. Then $Y_{k,j}$ is an orthonormal basis of $L_2(\mathcal{S}^{p-1},d\xi)$, where $d\xi$ is the uniform measure on the sphere. Then we have
\begin{align}\label{eqrelYkjandPk}
    \sum_{j=1}^{N(d,k)}Y_{k,j}(\bs)Y_{k,j}(\bt) = N(d,k)P_k(\bs^\top\bt),
\end{align}
where $P_k$ is the $k$-th Legendre polynomial in dimension $d$, given by
\begin{align}\label{eqLeg}
    P_k(t) = &  (-1/2)^k \frac{\Gamma(\frac{d-1}{2})}{\Gamma(k + \frac{d-1}{2})}(1-t^2)^{(3-d)/2}\left(\frac{d}{dt}\right)^k(1-t^2)^{k+(d-3)/2}.
\end{align}
The polynomials $P_k$ are orthogonal in $L_2([-1,1])d\nu$, where the measure $d\nu = (1-t^2)^{(d-3)/2}dt$ with Lebesgue measure $dt$, and 
\begin{align}\label{eqintPk}
    \int_{[-1,1]}P_k^2(t)(1-t^2)^{(d-3)/2}dt = \frac{w_{d-1}}{w_{d-2}}\frac{1}{N(d,k)},
\end{align}
where $w_{d-1} = \frac{2\pi^{d/2}}{\Gamma(d/2)}$. Furthermore, it can be shown that \citep{atkinson2012spherical}
\begin{align}\label{eqrecPk}
    tP_k(t) = \frac{k}{2k+d-2}P_{k-1}(t) + \frac{k+d-2}{2k+d-2}P_{k+1}(t),
\end{align}
for $k\geq 1$, and for $j=0$ we have $tP_0(t) = P_1(t)$. This implies that for large $k$ enough, we have
\begin{align*}
    \mu_k = \frac{k}{2k+d-2}\mu_{0,k-1} + \frac{k+d-2}{2k+d-2}\mu_{0,k+1},
\end{align*}
where $\mu_{0,k-1}$ and $\mu_{0,k+1}$ are as in Lemma 17 of \cite{bietti2019inductive}. By Lemma 17 of \cite{bietti2019inductive}, we have $\mu_{0,k} \asymp k^{-d}$ for large $k$, if $k = 1$ mod 2. This finish the proof of Lemma \ref{lemmercerh}.
\subsection{Proof of Lemma \ref{lementbound}}

By Theorem 1 of \cite{brauchart2013characterization} and Lemma \ref{lemmercerh}, we can see that the function space $\mathcal{N}$ is a subspace of the Sobolev space $H^s(\mathcal{S}^{d-1})$. Therefore, the entropy of $\mathcal{N}(1)$ can be bounded if the entropy of $H^{d/2}(\mathcal{S}^{d-1})(1)$ can be bounded. By Theorem 1.2 of \cite{wang2014entropy}, we have that the $k$-th entropy number $e_k(T)$ can be bounded by $k^{-d/(2(d-1))}$. This implies that 
\begin{align*}
    H(\delta,\mathcal{N}(1),\|\cdot\|_{L_\infty}) \leq A \delta^{-\frac{2(d-1)}{d}}.
\end{align*}

\subsection{Proof of Lemma \ref{lemwithoutp1}}
The first inequality follows the fact that $h$ is positive definite, which implies the inverse of
\begin{align*}
    \left(\begin{array}{cc}
     h(\bs,\bs)    & h(\bX,\bs) \\
       h(\bs,\bX)  & \bh^{\infty}
    \end{array}\right)
\end{align*}
is positive definite. By block matrix inverse, we have the first inequality in Lemma \ref{lemwithoutp1} holds. 

The second inequality and third inequality are direct results of Theorem \ref{thmkrr} implies
\begin{align*}
    &\mathbb{E}_{\epsilon,\bX}(\|\hat{g}_n - g^*\|_{2}^2)\\ = &\int_{\mathbb{S}^{d-1}}(g^*(\bx) - h(\bx,\bX)(\bH^\infty + \mu \bI)^{-1}\by^*)^2 + h(\bx,\bX)(\bH^\infty + \mu \bI)^{-2}h(\bX,\bx) d\bx = O_{\mathbb{P}}(n^{-\frac{d}{2d-1}})
\end{align*}
for any function $g^*$ with $\|g^*\|_{\mathcal{N}}\leq 1$. Then we have
\begin{align*}
    \int_{\mathbb{S}^{d-1}}h(\bx,\bX)(\bH^\infty + \mu \bI)^{-2}h(\bX,\bx)d\bx= O_{\mathbb{P}}(n^{-\frac{d}{2d-1}}),
\end{align*}
which finishes the proof of the second equality. Let $g^*(\bx) = h(\bs,\bx)$, then we have
\begin{align*}
    \int_{\mathbb{S}^{d-1}}(h(\bs,\bx) - h(\bx,\bX)(\bH^\infty + \mu \bI)^{-1}h(\bX,\bs))^2d\bx = O_{\mathbb{P}}(n^{-\frac{d}{2d-1}}).
\end{align*}
By the interpolation inequality, we have
\begin{align*}
    & h(\bs,\bs) - h(\bs,\bX)(\bH^\infty + \mu \bI)^{-1}h(\bX,\bs))\\
    \leq & \|h(\bs,\cdot) - h(\cdot,\bX)(\bH^\infty + \mu \bI)^{-1}h(\bX,\bs))\|_{\infty}\\
    \leq & C\|h(\bs,\cdot) - h(\cdot,\bX)(\bH^\infty + \mu \bI)^{-1}h(\bX,\bs))\|_{2}^{1-\frac{d-1}{d}}\|h(\bs,\cdot) - h(\cdot,\bX)(\bH^\infty + \mu \bI)^{-1}h(\bX,\bs)\|_{\mathcal{N}}^{\frac{d-1}{d}}\\
    = &  O_{\mathbb{P}}(n^{-\frac{1}{2d-1}})(h(\bs,\bs) +h(\bs,\bX)(\bH^\infty + \mu \bI)^{-1}\bH^\infty(\bH^\infty + \mu \bI)^{-1}h(\bX,\bs))^{\frac{d-1}{d}}\\
    \leq & O_{\mathbb{P}}(n^{-\frac{1}{2d-1}})(h(\bs,\bs) +h(\bs,\bX)(\bH^\infty)^{-1}h(\bX,\bs))^{\frac{d-1}{d}} = O_{\mathbb{P}}(n^{-\frac{1}{2d-1}}),
\end{align*}
where the last inequality follows the first inequality of Lemma \ref{lemwithoutp1}.

\subsection{Proof of Lemma \ref{lem:intpRKHS}}
Given that $g$ and $f^*$ have the same value at all $\bx_i$'s, the empirical norm $\|g-f^*\|_n=0$.
Notice that both $g$ and $f^*$ are in the RKHS generated by the NTK $h$, denoted by $\cN$. Utilizing Lemma \ref{lementbound} and \ref{lemma:geer} similarly as in the proof of Theorem \ref{thmkrr}, we have $R, K = O(1)$ and $J_\infty(z,\cN)\lesssim z^{1/d}$, which leads to 
\[\sup_{h \in {\cG(R)}} \biggl | \| h \|_n^2 - \| h \|_2^2 \biggr | =O_{\PP}\rbr{\sqrt{\frac{{1}}{n}}},\]
where $\cG(R):=\{g\in\cN(1): \|g-g^*\|_2\le R\}$.
Therefore, we can conclude that $\|g-f^*\|_2=O_{\PP}(n^{-1/2})$.

\subsection{Proof of Lemma \ref{lem:bounds4}}
The proof of (a) and (b) can be found in \cite{arora2019fine}.

For (c), the $i$-th coordinates of $\bz_0(\bx)^\top\bZ(0)$ and $h(\bx,\bX)$ are
\begin{align*}
    \frac{1}{m}\sum_{r=1}^m \bx^\top\bx_i\mathbb{I}\{\bw^\top_r(0)\bx\ge 0\}\mathbb{I}\{\bw^\top_r(0)\bx_i\ge 0\}, \text{~~and~~} \mathbb{E}_{\bw\sim N(0,\bI)}[\bx^\top\bx_i\mathbb{I}\{\bw^\top\bx\ge 0\}\mathbb{I}\{\bw^\top\bx_i\ge 0\}],
\end{align*}
respectively. $\forall i \in  [n]$, $(\bz_0(\bx)^\top\bZ(0))_i$ is the average of $m$ i.i.d. random variables, which have expectation $h_i(\bx,\bX)$ and bounded in $[0,1]$. For any fixed $\bx$, by Hoeffding's inequality, with probability at least $1-\delta^*$,
\begin{align*}
    |(\bz_0(\bx)^\top\bZ(0))_i- h_i(\bx,\bX)| \le \sqrt{\frac{\log(2/\delta^*)}{2m}}
\end{align*}
holds. By defining $\delta=n\delta^*$ and applying a union bound over all $i\in [n]$, with probability at least $1-\delta$, we have
\begin{align*}
    \|\bz_0(\bx)^\top\bZ(0)-h(\bx,\bX)\|_2^2 = O\rbr{n\frac{\log(2n/\delta)}{2m}}
\end{align*}

For (d), since
\begin{align*}
    \bz_0(\bx)^{\top}{\rm vec}(\bW(0)) = \frac{1}{\sqrt{m}}\sum_{r=1}^m a_r \mathbb{I}\{\bw_r(0)^{\top}\bx\ge 0\}\bw_r(0)^{\top}\bx
\end{align*}
Define random variables $V_r$, $r\in [m]$ as
\begin{align*}
    V_r = a_r \mathbb{I}\{\bw_r(0)^{\top}\bx\ge 0\}\bw_r(0)^{\top}\bx
\end{align*}
Since 
\begin{align*}
    \bw_r(0)^{\top}\bx~\sim ~ N(0, \tau^2) \text{~~~~and~~~~} a_r~\sim~ {\rm unif}\{1,-1\}.
\end{align*}
It's easy to prove that $V_r$, $r\in [m]$ are i.i.d. with mean $0$ and sub-Gaussian parameter $\tau$. By Hoeffding's inequality, at fixed $\
bx$, with probability at least $1-\delta$, we have 
\begin{align*}
    \bigg| \frac{1}{\sqrt{m}}\sum_{r=1}^m V_r \bigg|  \le \sqrt{2}\tau \sqrt{\log (2/\delta)}.
\end{align*}
Thus $\|\bz_0(\cdot)^{\top}{\rm vec}(\bW(0))\|_2 = O\rbr{\tau \sqrt{\log (1/\delta)}}$.

\section{More details and results for numerical experiments}
\label{sec:numerical}
\paragraph{Neural network setup} The neural network used in all experiments is a 2-layer ReLU neural network with $m = 500$ nodes in each hidden layer. All the weighs are initialized with the Glorot uniform initializer, also called as Xavier uniform initializer \citep{glorot2010understanding}, which is the default choice in the TensorFlow Keras Sequential module. All the weights are trained by RMSProp \citep{hinton2012neural} optimizer with the default setting, e.g. learning rate of $0.001$, etc. 
All ONN experiments are conducted using TensorFlow 2 with Python API.

\subsection{Simulated Data}
The learning rate for NTK+ES is $\eta=0.01$ and the GD update rule is as specified in (\ref{eqn:kernel_gd}). In the $\ell_2$-regularized methods, the tuning parameter $\mu$ for each task is chosen by cross validation. 
The validation dataset is of size 100 that is also noiseless and follows the same generating mechanism as the test dataset. 
For NTK$+\ell_2$, we use a grid search of interval $[0,1]$ with $\mu = 0.01, 0.02, \ldots, 1$ and for ONN$+\ell_2$, the $\mu$ candidates are $0.1,0.2,\ldots, 10$. In both cases, we observe that the optimal $\mu$ increases with the noise level $\sigma$. 
For $f_2^*$, we plot the chosen $\mu$ and $k^*$ for NTK$+\ell_2$ and NTK+ES respectively vs. $\sigma$. For each $\sigma$ value, the reported value is the average of 100 replications. The results are shown in 
Figure \ref{fig:cv}.

\begin{figure}[!htbp]
    \centering
    \includegraphics[scale = 0.5]{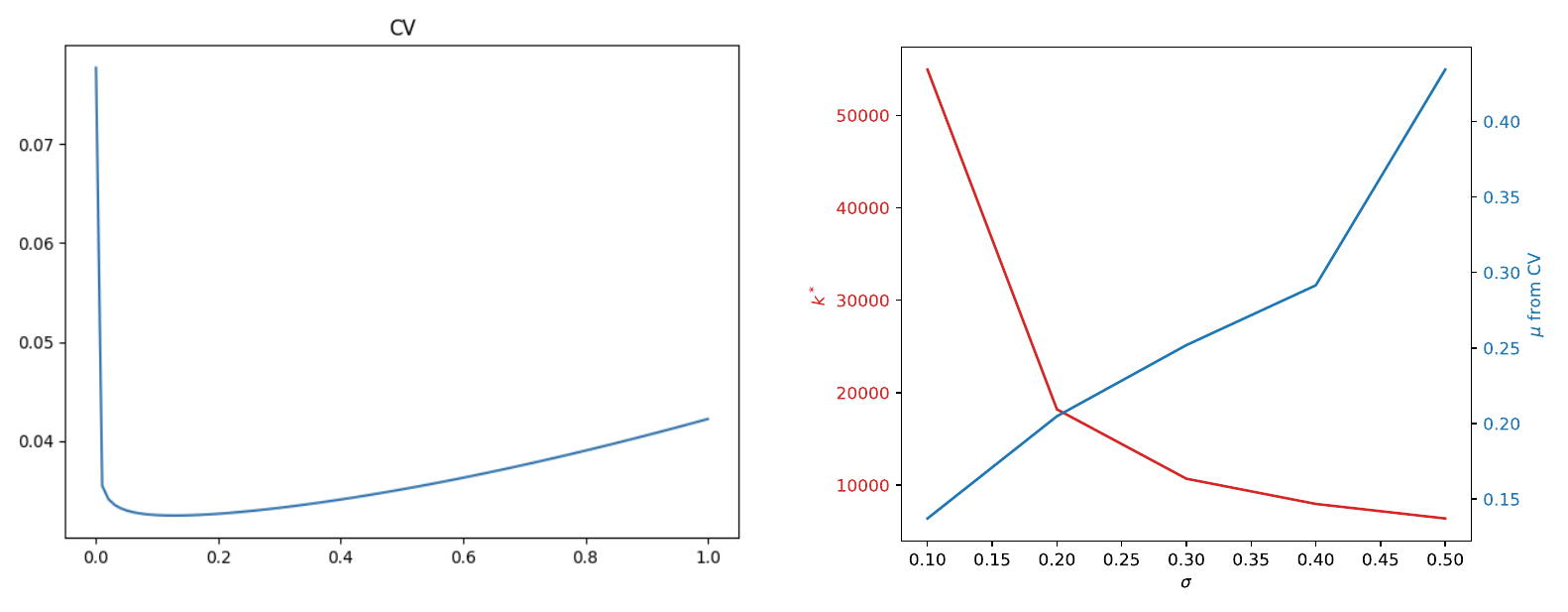}
    \caption{Left: Cross-validation of $\mu$ in NTK$+\ell_2$ for fitting $f^*_2$ when $\sigma=0.1$. The horizontal axis is values of $\mu$ (100 points from 0.01 to 1) and the vertical axis is the validation mean squared error. The cross-validated $\mu$ in this case is 0.13. 
    Right: Optimal stopping time $k^*$ in NTK+ES and cross-validated $\mu$ in NTK$+\ell_2$ for fitting $f_2^*$ are shown vs. $\sigma$. The optimal GD stopping time decrease with noise level while the best $\mu$ increases with $\sigma$. }
    \label{fig:cv}
\end{figure}

Figure \ref{fig:lossnoi} clearly demonstrates that ONN and NTK do not recover the true function well. As is explained in the paper, without regularization, overfitting the training data is harmful for the $L_2$ estimation. To illustrate this point, we show the trained estimators of $f_2^*$ for all the methods in Figure \ref{fig:visual0} when $\sigma=0.1$.

\begin{figure}[!htpb]
    \centering
    \includegraphics[width=1\textwidth]{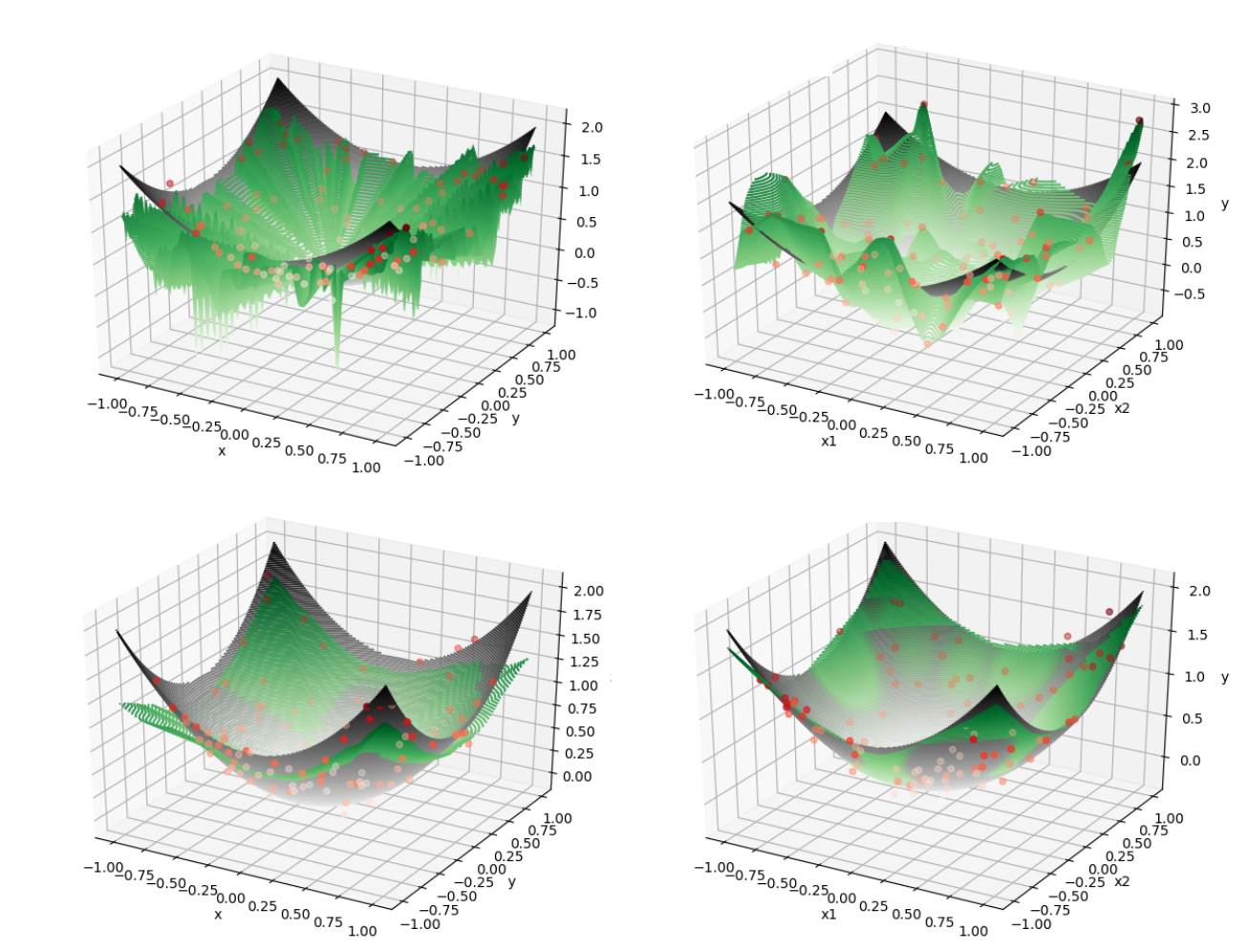}
    \caption{Visualizations for the trained estimators of NTK (top left), NTK$+\ell_2$ (bottom left), ONN (top right) and ONN$+\ell_2$ (bottom right). Training data are plotted as red dots. The green surface is the estimator and the grey surface is the true function $f^*_2$. Both surfaces are approximated by grid points $(i/100, j/100)$ for $i,j$ from $-100$ to 100. As can be seen in the top row, without regularization, the estimators overfit training data. The fitted estimators are very rough and don't recover the true function well. 
    }
    \label{fig:visual0}
\end{figure}
\newpage
\subsection{MNIST}
For images 5 and 8, the training and test split are the default.\footnote{http://yann.lecun.com/exdb/mnist/} We change label 5 and 8 to $-1$ and 1 respectively. No further pre-processing is done to the dataset. 
For NTK+ES, the learning rate is $\eta=0.0001$ and the GD update rule is as specified in (\ref{eqn:kernel_gd}). To account for the high data dimension, we divide the NTK matrix $\bH^\infty$ by $d$. 
For the ONN+$\ell_2$ and NTK$+\ell_2$,
we choose $\mu$ by cross-validation and the candidates are $\mu= 1, 2, 5, 10, 20, 50, 100, 200, 500, 1000, 2000, 5000$ for ONN+$\ell_2$ and $\mu= 1, 2, 3,\ldots, 100$ for NTK+$\ell_2$.
The training/validation split is 80\%/20\% for cross-validation so the actual training data size is 9107 for all methods (ONN, NTK and NTK+ES do not use the validation dataset). 
The cross-validated $\mu$ for ONN$+\ell_2$ and optimal stopping time $k^*$ for NTK+ES are shown in Figure \ref{fig:mnist_mu}, together with the cross-validation results specifically for $\sigma=1$.

\begin{figure}[!htbp]
    \centering
    \includegraphics[width=1\textwidth]{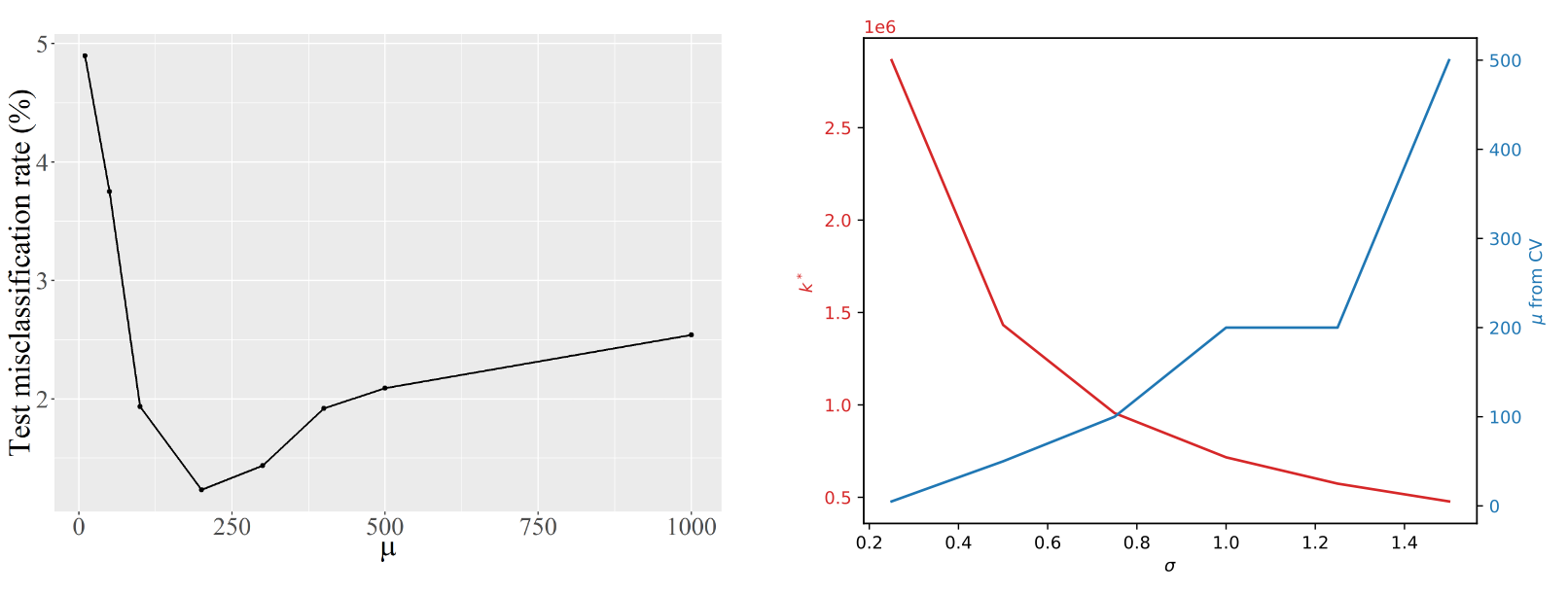}
    \caption{Left: Cross-validation result for $\mu$ in ONN$+\ell_2$ when $\sigma=1$ (with extra $\mu$ candidates of 300 and 400). In the range of $\mu=5$ to $\mu=1000$, we can clearly see a V-shape and the best $\mu$ in this case is 200. 
    Right: Optimal stopping time $k^*$ in NTK+ES and cross-validated $\mu$ in ONN$+\ell_2$ for MNIST dataset are shown vs. $\sigma$. The optimal stopping time decreases with noise level while the best $\mu$ increases with $\sigma$. }
    \label{fig:mnist_mu}
\end{figure}

\end{document}